\documentclass[10pt,twocolumn,letterpaper]{article}

\usepackage[pagenumbers]{cvpr} % arXiv version: page numbers on

\usepackage{microtype}

\renewcommand{\paragraph}[1]{\vspace{.4em}\noindent\textbf{#1}}

\usepackage{graphicx}
\usepackage{amsmath,amssymb}
\usepackage{booktabs}
\usepackage{multirow} % grouped row labels in tab:killtest
\usepackage{pifont}   % \ding{55} cross mark
\usepackage{amsthm}
\usepackage{mathtools}
\usepackage{capt-of}  % \captionof for the non-float page-1 teaser
\graphicspath{{figures/}}

\newtheorem{proposition}{Proposition}
\newtheorem{theorem}{Theorem}
\newtheorem{lemma}{Lemma}

\newcommand{\R}{\mathbb{R}}
\newcommand{\GN}{\operatorname{GN}}
\newcommand{\GELU}{\operatorname{GELU}}
\newcommand{\PWMLP}{\operatorname{PWMLP}}
\newcommand{\Lip}{\operatorname{Lip}}
\newcommand{\Cost}{\operatorname{Cost}}
\providecommand{\loc}{\operatorname{loc}}

\definecolor{cvprblue}{rgb}{0.21,0.49,0.74}
\usepackage[pagebackref,breaklinks,colorlinks,allcolors=cvprblue]{hyperref}

\def\paperID{*****} % *** Enter the Paper ID here
\def\confName{CVPR}
\def\confYear{2027}

\title{LIMODENet: Attention-Free Compact Encoders for\\
Information-Preserving Onboard Satellite Image Restoration}

\author{Thanh-Dung Le\\
Texas A\&M University - Corpus Christi, TX, USA\\
{\tt\small thanh-dung.le@tamucc.edu}
\and
Vu Nguyen Ha, Ti Ti Nguyen, Symeon Chatzinotas \\
University of Luxembourg, Kirchberg, Luxembourg\\
{\tt\small \{vu-nguyen.ha, titi.nguyen, symeon.chatzinotas\}@uni.lu}
}

\makeatletter
\g@addto@macro\@maketitle{%
  \vspace{-16pt}%
  \begingroup\centering
    \includegraphics[width=\textwidth]{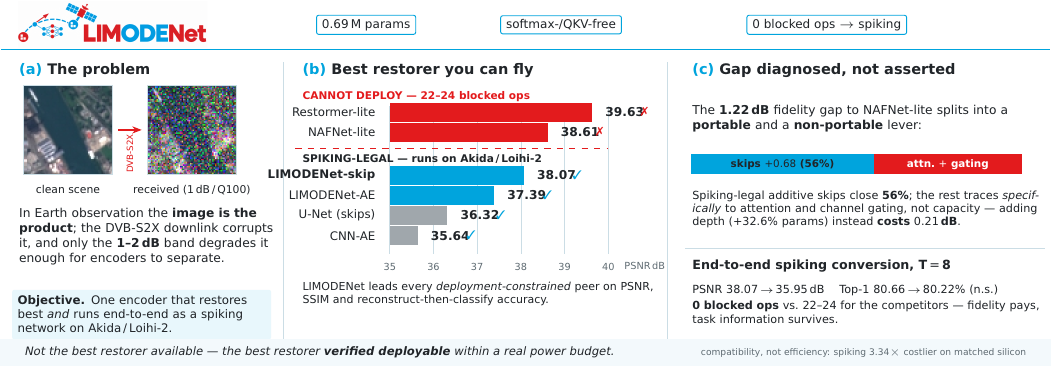}%
    \captionof{figure}{\textbf{LIMODENet: the best restorer that can actually
    be deployed onboard.} Under the deployment constraint of no softmax and no attention, it restores DVB-S2X-degraded EuroSAT better than every iso-parameter peer sharing that constraint \textbf{(b)}; the two modern restorers that beat it on fidelity rely on $22$--$24$ operations no spiking accelerator can run. We \emph{diagnose} that gap \textbf{(c)}: spiking-legal additive skips close $56\%$, attention and channel gating the rest. Iso-parameter, encoder-controlled, three seeds; protocol, the neutral judge and the reverse energy (spiking $3.34\times$ costlier on matched silicon) are in Sec.~\ref{sec:exp}.}%
    \label{fig:teaser}%
  \par\endgroup
  \vspace{12pt}%
}
\makeatother

\begin{document}
\maketitle
\begin{abstract} \looseness=-1 Onboard satellites must restore a channel-degraded image on a few watts, using
neuromorphic accelerators (e.g., BrainChip Akida, Intel Loihi-2) that support
no softmax or attention. We ask which encoder restores best under that
constraint and introduce \textbf{LIMODENet} (LinearMix-ODENet), a $0.69$M
softmax-/QKV-free backbone whose residual stages read as ODE discretizations
and which is empirically information-preserving (probe accuracy rises $79.9\%\!\rightarrow\!98.4\%$ from stem to head). At iso-parameters it restores
$1$\,dB DVB-S2X-degraded EuroSAT better than a CNN autoencoder ($+1.75$\,dB
PSNR) and a skip-connection U-Net ($+1.07$\,dB), three seeds, non-overlapping.
Unconstrained modern restorers (NAFNet, Restormer) win on fidelity; we
decompose that gap: spiking-legal additive skips recover about half, and the
rest traces to attention and channel gating. LIMODENet then converts
end-to-end to
a spiking network with zero blocked operations, versus $22$--$24$ for the
competitors: not the best restorer available, but the best verified deployable
within a real power budget. Code and weights will be
released.\footnote{\url{https://github.com/ltdung/limodenet}}
\end{abstract}

\section{Introduction}
\label{sec:intro}

\looseness=-1 Earth-observation satellites acquire imagery far faster than any downlink can return it, which makes \emph{onboard} inference (to triage, compress, or restore data before transmission) attractive \cite{fontanesi2025artificial}. The compute envelope is severe: a few watts of power, radiation-tolerant or neuromorphic accelerators (BrainChip Akida, Intel Loihi-2) implementing neither softmax nor query--key--value attention, and a downlink routinely operating at low signal-to-noise ratio \cite{eappen2025neuromorphic}.

\looseness=-1 The standard response to a noisy channel is to \emph{harden the classifier}:
train it on degraded imagery so it still predicts the right label. This
addresses only half the problem. A hardened classifier emits a label and discards the image, yet in Earth observation the image is often the product, and it is tied to one operating point a spacecraft cannot retrain per channel state. Both limitations point to the same missing capability, \emph{image restoration}, from which any downstream task
follows. Restoration under this deployment constraint, though, is not a
problem the restoration literature addresses: the strongest skip- and
attention-based restorers rely on exactly the operations a spiking accelerator
cannot run, so applying them onboard is not an option.

\noindent\textbf{Our objective} is a restoration backbone that
(i)~recovers channel-degraded satellite imagery better than iso-parameter
alternatives and (ii)~is realizable end to end as a spiking network on the
target hardware. We pursue it by asking two questions jointly: which
architectural properties make an encoder good at \emph{keeping} information,
and which of those properties survive conversion to spiking hardware.

To this end we propose \textbf{LIMODENet} (LinearMix-ODENet), a compact
convolutional backbone organized around a single principle: a deep network is
a numerical integrator of an ordinary differential equation
(ODE)~\cite{chen2018neural,haber2017stable,lu2018beyond}, and a numerical
analyst's choices of which integrator, at which resolution and with what step
size are the architectural choices that govern accuracy, cost, and
information flow. Two such choices define the design (Fig.~\ref{fig:teaser}): a single explicit-midpoint (RK-2) step at the coarsest resolution, where a Pareto analysis shows its
accuracy-per-FLOP gain is largest, with cheap Euler steps elsewhere; and a
\emph{FocalBlock} that mixes globally without attention by adding a pooled
global branch to a local one, a discretized mean-field (McKean--Vlasov)
coupling that reaches a full receptive field at $O(N)$ rather than $O(N^2)$
cost. Holding parameters, decoder, and protocol fixed and varying \emph{only
the encoder}, LIMODENet restores $1$\,dB DVB-S2X-degraded EuroSAT better than
a CNN autoencoder ($+1.75$\,dB PSNR) and a skip-connection U-Net
($+1.07$\,dB), three seeds, non-overlapping. Against two modern restorers
matched to its parameter budget, NAFNet-lite and Restormer-lite, it loses on
fidelity; we diagnose rather than assert that loss, showing that spiking-legal
additive skips recover about half of it while the remainder traces
specifically to attention and channel gating, not to capacity. Converted end
to end to a spiking network, LIMODENet's restoration encoder--decoder has zero
blocked operations against $22$--$24$ for those competitors; compatibility,
not lower energy, is the payoff. We do not claim classification superiority:
on saturated remote-sensing classification a plain CNN matches its accuracy at
lower energy, and a classifier \emph{trained on the degraded imagery} beats
reconstruct-then-classify at every operating point, though the restoration pathway returns the image itself and degrades roughly eightfold less across channel states (Fig.~\ref{fig:teaser}c summarizes the trade).

Our contributions are summarized as follows:
\begin{itemize}
  \item \textbf{Restoration-first compact encoder:} an encoder-controlled, iso-parameter demonstration that an ODE-designed backbone restores channel-degraded imagery better than a CNN autoencoder and a skip-connection U-Net (whose skips would themselves have to be transmitted), with the ranking reproduced by two neutral classifiers, a DINOv2 embedding judge, a judge-free sweep separating on $24$ of $24$ comparisons, and a certified worst-case bound.
  \item \textbf{An ODE reading that predicts the design:} the Pareto placement of the RK-2 step (Thm.~\ref{supp:t1}), a mean-field reading of global mixing (Prop.~\ref{supp:p6}), and an information-preservation analysis (Prop.~\ref{supp:p5}) confirmed by probing rather than by the Lipschitz bound the trained weights miss.
  \item \textbf{A diagnosed gap to unconstrained restorers:} the fidelity
  deficit to NAFNet-lite/Restormer-lite decomposed into a portable lever (skip
  connections) and a non-portable one (attention and channel gating), turning
  an unexplained loss into a measured one.
  \item \textbf{Verified neuromorphic deployment:} the first end-to-end spiking conversion of a restoration encoder--decoder, with zero blocked operations versus $22$--$24$ for the competitors, measured accuracy and energy cost, and a negative result that bounds the application claim.
\end{itemize}

\section{Related Work}
\label{sec:related}

\paragraph{The neural-ODE view of residual networks is well established.}
Interpreting residual networks as discretized ODEs~\cite{chen2018neural,
haber2017stable, lu2018beyond} motivates borrowing tools from numerical analysis
to design and stabilize deep models. We use this lens \emph{prescriptively},
deriving the per-stage integrator and step size from a truncation-error/cost
trade-off rather than fitting them empirically. Learnable residual scales
(ReZero, LayerScale~\cite{bachlechner2021rezero,touvron2021going}) coincide
with our $\alpha$ but are motivated by optimization; we tie $\alpha$ to
injectivity margins instead.

\paragraph{Pooling and gating already mix globally at low cost.}
Self-attention~\cite{dosovitskiy2020image} gives global context at $O(N^2)$
cost; pooling- or gating-based alternatives such as GCNet, PoolFormer and FocalNet~\cite{cao2019gcnet,yu2022metaformer} approximate it more cheaply.
Our FocalBlock is closely related but carries a mean-field ODE interpretation and quantified receptive-field/cost consequences. We test whether that branch is load-bearing inside our own architecture and whether it transfers to a plain encoder (Sec.~\ref{sec:ablations}); a head-to-head against GCNet/PoolFormer under one protocol remains open.

\paragraph{Invertible networks motivate our sub-unit residual scale.}
i-ResNets~\cite{behrmann2019invertible} establish that Lipschitz-constrained
residual blocks are invertible; we use this as \emph{design motivation} for
our sub-unit $\alpha$. Because the trained weights exceed the strict
Lipschitz bound, we do not claim exact invertibility of the whole network;
instead we verify information preservation \emph{directly} by layer-wise
probing (Sec.~\ref{sec:exp}).

\paragraph{Few efficient backbones are constrained to neuromorphic operations.}
\looseness=-1 MobileNetV3, EfficientNet, ConvNeXt and compact transformers target the efficiency frontier, yet few are restricted to the softmax-/QKV-free operations neuromorphic hardware requires. Closest to our setting, Kucik and Meoni~\cite{kucik2021investigating} convert a VGG-16 to a spiking network on EuroSAT with the KerasSpiking \texttt{ModelEnergy} model\footnote{\raggedright\url{https://www.nengo.ai/keras-spiking/examples/model-energy.html}} we adopt; we reach higher accuracy at $20\times$ fewer parameters (Sec.~\ref{sec:exp}).

\paragraph{Lightweight backbones for RS scene classification are a crowded but
saturated field.}
\looseness=-1 Recent compact classifiers push EuroSAT top-1 into the high nineties: focal-modulation
networks~\cite{yang2022focalnet}, ConvNeXt derivatives with reduced width and
multi-scale fusion~\cite{stconvnext2025}, spline-activation classification
heads~\cite{kcn2024}, wavelet token-mixers~\cite{jeevan2025wavemix}, and pure
convolutional mixers~\cite{scenemixer2025}. We do not position LIMODENet as a competitor on this axis: our own width-ladder sweep (Sec.~\ref{sec:classification}) shows that once accuracy exceeds ${\sim}95\%$ a controlled $4\times$ parameter increase moves EuroSAT top-1 by under $1$\,pp, so the benchmark no longer separates architectures and reported gaps are dominated by input resolution, pre-training corpus and augmentation rather than backbone design. Only SceneMixer~\cite{scenemixer2025} is protocol-matched (native $64\times64$, same dataset, same DW$+$PW family); the rest evaluate at $224$--$256$\,px with heavier backbones. \Cref{supp:sota-classification} gives the side-by-side with its caveats, as a positioning aid rather than evidence: the paper's claims rest on the restoration comparison of Sec.~\ref{sec:recon-compare}.

\paragraph{Semantic communication transmits features rather than pixels.}
Deep joint source--channel coding (DeepJSCC) sends task-relevant features over the channel instead of pixels. Our pipeline is complementary: one information-preserving backbone both classifies and, via a decoder head, recovers a classifiable image from the channel output.

\section{Method}
\label{sec:method}

\begin{figure*}[t]
  \centering %,trim=1 90 5 8,clip
  \includegraphics[width=\textwidth]{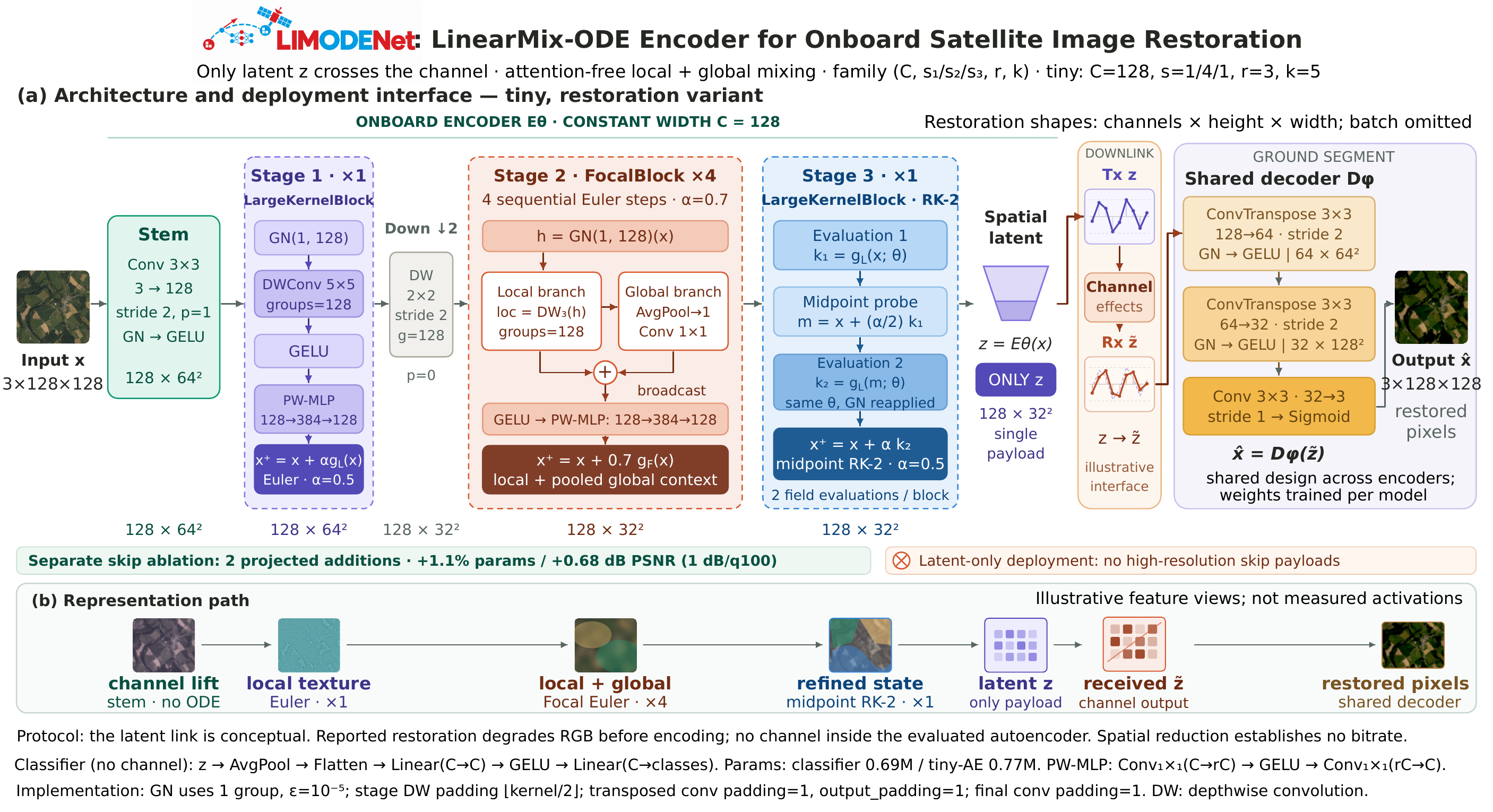}
  \caption{\textbf{LIMODENet integrates three ODE stages at constant width, and
  sends only the latent onward.} \emph{(a)} The \emph{tiny} restoration variant:
  a strided stem lifts $3{\times}128{\times}128$ to $128$ channels, then one
  Euler LargeKernelBlock at $64^2$ ($\alpha{=}0.5$), a depthwise stride-$2$
  downsample, and four Euler FocalBlocks ($\alpha{=}0.7$) plus one RK-2
  LargeKernelBlock ($\alpha{=}0.5$) at $32^2$; the $64^2$-input classifier halves
  each figure. Every block integrates \eqref{eq:field} through
  $x\!\leftarrow\!x+\alpha f$: the FocalBlock adds a pooled global branch to the
  local one, which is how the network mixes globally without attention, and the
  RK-2 stage evaluates the same field twice, at $x$ and at the midpoint.
  \emph{(b)} What each stage is for, illustrated rather than measured.
  \textbf{The downlink box is the setting the design targets, not the experiment
  we ran:} every reported result degrades \emph{pixels} before the encoder and
  the autoencoder never sees a channel, so $z\to\tilde{z}$ is an interface
  sketch (\cref{supp:onboard}).}
  \label{fig:arch}
\end{figure*}

\subsection{Backbone as an ODE integrator}
LIMODENet keeps a constant width $C{=}128$ and processes an image through a
strided stem and three residual stages (Fig.~\ref{fig:arch}). Writing the block
field as
\begin{equation}
  f(x)=\mathrm{PWMLP}\big(\mathrm{GELU}(\mathrm{DWConv}(\mathrm{GN}(x)))\big),
  \label{eq:field}
\end{equation}
a stage that updates $x_{\ell+1}=x_\ell+\alpha\,f(x_\ell)$ is a forward-Euler
step of size $\alpha$ on the autonomous ODE $\dot{x}=f(x)$, with $g:=f\circ
\mathrm{GN}$ the effective composite field.

\smallskip\noindent\emph{Why read the backbone this way.} The ODE view is a
design instrument, not a post-hoc analogy: it turns the three decisions that
otherwise have no principled answer (which integrator, at which stage, with
what step size) into questions numerical analysis already answers. A smaller
step $\alpha$ shrinks the $O(\alpha^2)$ Euler local truncation error
quadratically, which is why every stage uses $\alpha\!\le\!0.7$ rather than the
ResNet default $\alpha{=}1$; a second-order step buys one further order in
$\alpha$ but doubles the per-step cost, which is why exactly one stage takes it
and only the cheapest one (Thm.~\ref{supp:t1}); and a right-hand side that adds a pooled
global term to the local one reaches a full receptive field in a single step
without attention (Prop.~\ref{supp:p6}). None of these choices is assumed correct: each predicts a measurable consequence that Sec.~\ref{sec:ablations} tests directly, and we report where the prediction is only partly borne out. \Cref{fig:euler-motivation} sketches the step-size argument on a model problem.

\smallskip
Group normalization (one group)
keeps the system autonomous; batch normalization would make the dynamics
input-dependent and is therefore excluded. Stage~3 instead takes the two-stage explicit-midpoint (RK-2) update \eqref{eq:heun} on $g$,
\begin{equation}
  \begin{aligned}
    k_1&=f(\mathrm{GN}(x)), &\quad m&=x+\tfrac{\alpha}{2}k_1,\\
    k_2&=f(\mathrm{GN}(m)), &\quad x&\leftarrow x+\alpha\,k_2 ,
  \end{aligned}
  \label{eq:heun}
\end{equation}
the explicit midpoint rule, a two-stage Runge--Kutta (RK-2) method on the composite field $g$ (its trapezoidal sibling, Heun's rule, has the same order and cost): the local truncation error is $O(\alpha^3)$ against $O(\alpha^2)$ for the Euler stages (Prop.~\ref{supp:p1}), one order tighter for a single extra evaluation of $f$.

\subsection{FocalBlock: global mixing as a mean field}
Stage~2 replaces the plain block with a \emph{FocalBlock}. A local branch
$\mathrm{loc}=\mathrm{DWConv}_{3}(h)$ captures neighborhood structure; a global
branch $\mathrm{glob}=\mathrm{Conv}_{1\times1}(\mathrm{AvgPool}(\mathrm{loc}))$
broadcasts a pooled summary; the two are added before the pointwise MLP,
$y=\mathrm{PWMLP}(\mathrm{GELU}(\mathrm{loc}+\mathrm{glob}))$, and the residual
update $x\!\leftarrow\! x+\alpha y$ closes the block. The global branch operates
on $\mathrm{loc}$ (not the raw pre-activation), a semantically richer summary,
and is \emph{additive} rather than multiplicative to preserve gradient flow.
Fig.~\ref{fig:focal-field} in the supplement contrasts the local, global, and
combined branch fields, and Fig.~\ref{fig:focal-branches} measures the same
split on a trained model.

\subsection{Model family}
\label{sec:family}
We instantiate five scales, moving one hyper-parameter group at a time
(Table~\ref{tab:family}). Nano/tiny/small/base share an identical shape and
differ only in width $C\in\{96,128,192,256\}$, isolating the capacity of the
vector field $f$ from how long it is integrated (varied separately in
Sec.~\ref{sec:ablations}); off-ladder \emph{big} instead changes four
hyper-parameters at once, so comparisons against it measure a larger
\emph{model}, not a controlled capacity axis (reported separately,
Table~\ref{tab:rs-datasets}). Every restoration, spiking and probing result here uses \emph{tiny}: on the three benchmarks already above $95\%$, an $8.4\times$ larger model gains at most $1.9$\,pp, so the smaller scale forfeits almost nothing on saturated data while fitting the onboard budget of Sec.~\ref{sec:limitations}.

\begin{table}[t]
  \centering
  \small
  \setlength{\tabcolsep}{3pt}
  \begin{tabular}{lccccrr}
    \toprule
    Scale & $C$ & $s_1/s_2/s_3$ & $r$ & $k$ & Params & MACs \\
    \midrule
    \multicolumn{7}{l}{\emph{Width ladder: shape held fixed, only $C$ varies}}\\
    nano           & 96  & 1/4/1 & 3.0 & 5 & 0.39M & 149M \\
    \textbf{tiny}  & 128 & 1/4/1 & 3.0 & 5 & \textbf{0.69M} & \textbf{262M} \\
    small          & 192 & 1/4/1 & 3.0 & 5 & 1.55M & 581M \\
    base           & 256 & 1/4/1 & 3.0 & 5 & 2.73M & 1027M \\
    \midrule
    \multicolumn{7}{l}{\emph{Off-ladder: four hyper-parameters change at once}}\\
    big            & 256 & 2/6/2 & 4.0 & 7 & 5.80M & 2466M \\
    \bottomrule
  \end{tabular}
  \caption{\looseness=-1 \textbf{The LIMODENet family scales width alone, except for \emph{big}.} $C$ is the constant channel width, $s_i$ the number of
  blocks in stage $i$, $r$ the pointwise-MLP expansion ratio, $k$ the
  depthwise kernel. Parameters/mult-adds for a $10$-class head at $64{\times}64$; residual scales $\alpha{=}(0.5,0.7,0.5)$ and the Euler/Euler/midpoint assignment are identical at every scale.}
  \label{tab:family}
\end{table}

\subsection{Theoretical properties}
We summarize the results that make the design principled; full statements and proofs are in \cref{supp:proofs}. \textbf{Theorem~\ref{supp:t1}} places the RK-2
step at the coarsest stage: under a uniform-$\lVert J_f\,f\rVert$ assumption,
the RK-2-over-Euler accuracy gain is stage-independent while its cost scales
as $H^2C$, so gain-per-FLOP is maximized where $H$ is smallest, and Stage~3 is $\approx4\times$ cheaper than Stage~1 for the same truncation benefit (Stage~2 shares its resolution; the supplement records why the step goes in Stage~3).
\textbf{Lemma~\ref{supp:l1}} establishes the flow is well posed: each operator in
\eqref{eq:field} is locally Lipschitz, so Picard--Lindel\"of guarantees a
unique solution, making weight decay ($\lambda{=}5{\times}10^{-2}$, which
controls $\lVert W\rVert_{\mathrm{op}}$) a well-posedness requirement rather
than a tuning knob.

\paragraph{Proposition~\ref{supp:p6} delivers a global receptive field cheaply.}
A single FocalBlock has a nonzero input--output Jacobian everywhere (through the pooled mean $\mu$), i.e.\ a global receptive field in one layer, at $O(N C k^2 + N C + C^2)$ mixing cost against $O(N^2 C + N C^2)$ for multi-head self-attention: about $96\times$ fewer ops at Stage-2 resolution. The coupling captures only the first moment, a limitation for dense prediction.

\paragraph{Proposition~\ref{supp:p5} ties injectivity to information preservation.}
If $\alpha L_g<1$ the Euler map is injective~\cite{behrmann2019invertible};
the midpoint map is injective when $\alpha L_g\big(1+\tfrac{\alpha}{2}L_g\big)<1$.
An injective layer $T$ satisfies the data-processing inequality in both
directions, so $I(S;T(X))=I(S;X)$: label information is conserved exactly. A
dimension audit shows the stem \emph{expands} $12{,}288\!\to\!131{,}072$
coordinates and the only compressive operations are the deliberate downsample
and head, making LIMODENet a \emph{late-compression} architecture. Injectivity
alone does not guarantee \emph{usable} information, so we pair it with
layer-wise probing (Sec.~\ref{sec:exp}).

\section{Experiments}
\label{sec:exp}

\subsection{Setup}
\looseness=-1 We evaluate on five remote-sensing classification benchmarks (EuroSAT (RGB,
$10$ classes \cite{helber2019eurosat}), NWPU-RESISC45 \cite{cheng2017remote}, UCMerced \cite{yang2010bag}, PatternNet \cite{zhou2018patternnet} and RSICB128 \cite{li2020rsi}) and a DVB-S2X emulated satellite channel \cite{nguyen2025semantic} on EuroSAT. Unless noted, LIMODENet is the tiny ($694{,}538$-parameter, $261.5$M mult-add) configuration of Table~\ref{tab:family}, trained with AdamW ($\lambda{=}5{\times}10^{-2}$) on a \emph{single 8\,GB consumer GPU}. The reconstruction variant uses MSE (sum) loss and a transposed-convolution decoder ($128\!\to\!64\!\to\!32\!\to\!3$, Sigmoid) at $3{\times}128{\times}128$ I/O, the channel degrading each image at a given $E_s/N_0$ and JPEG quality. Classification numbers are $3$-seed mean$\pm$std ($64{\times}64$, $30$ epochs) at the epoch selected on a held-out $10\%$ validation split, never on test. Legacy single-seed numbers are marked $\dagger$.

\subsection{The core result: encoder-controlled restoration under the deployment constraint}
\label{sec:recon-compare}

\looseness=-1 The paper's central experiment varies only the quantity under study, among alternatives sharing LIMODENet's deployment constraint (no softmax, no channel gating). At iso-parameters, with an identical transposed-convolution decoder and protocol, we train autoencoders differing \emph{only in the encoder}, LIMODENet's ODE/focal backbone against a plain convolutional encoder (bottleneck peer) and a skip-connection U-Net (non-bottleneck reference), and restore $1$\,dB/$100$q degraded EuroSAT (Table~\ref{tab:modern}, top block). Over three seeds, trained to
convergence, LIMODENet-AE leads on both fidelity metrics:
$37.39{\pm}0.14$\,dB PSNR and $0.9672{\pm}0.0010$ SSIM, beating the CNN-AE by
$+1.75$\,dB\,/\,$+0.014$ and the U-Net by $+1.07$\,dB\,/\,$+0.008$,
non-overlapping in every case. It also leads on reconstruct-then-classify
accuracy, decisively over the CNN-AE ($+6.6$\,pp under a neutral judge) but
only within error bars over the U-Net (Sec.~\ref{sec:judgefree}). A \emph{single-path bottleneck} model beating a skip-connection U-Net matters for semantic communication, where only the latent crosses the channel: the U-Net would additionally have to transmit its high-resolution skip tensors, so its payload is strictly the larger of the two however either is eventually coded. We claim no bit-rate win for the latent itself, which is $2.67\times$ the size of the input until a quantiser and entropy coder are added (\cref{supp:onboard}).

\paragraph{The classifier is replaceable; the reconstructor is not; and a
dedicated denoiser does not close the gap.} \looseness=-1 A classifier-swap experiment (\cref{supp:neuromorphic}) shows the downstream head is interchangeable, a CNN or SEW-ResNet even reading LIMODENet's reconstructions slightly better, so the value sits in the reconstruction pathway that the efficient baselines structurally lack. A DnCNN~\cite{zhang2017beyond} anchor at the same budget reaches $36.01\pm0.49$\,dB, indistinguishable from either autoencoder baseline and $1.38$\,dB behind LIMODENet, at $12.14$\,GMac against LIMODENet-AE's $1.21$: ten times the compute, no fidelity gain.

\paragraph{Two modern restorers win on fidelity, and lose on measured
latency/throughput.} We add \textbf{NAFNet-lite}~\cite{chen2022simple}
(LayerNorm2d, SimpleGate and channel attention, but no softmax) and
\textbf{Restormer-lite}~\cite{zamir2022restormer}, built on softmax-based
attention (MDTA) plus a gated feed-forward, the mechanism LIMODENet is
designed without. Both are skip architectures at the same iso-parameter
budget, identical protocol. \textbf{Both beat LIMODENet on all three metrics,
non-overlapping, three seeds}: NAFNet-lite $38.61\pm0.24$\,dB PSNR ($+1.22$),
$82.45\pm0.21\%$ accuracy ($+3.95$\,pp); Restormer-lite $39.63\pm0.12$\,dB
($+2.24$), $83.51\pm0.13\%$ ($+5.01$\,pp). Neither contradicts the paper's argument: both are skip architectures, and \S\ref{sec:intro} predicts skip
designs win a pure fidelity contest, at the cost of the skips having to cross
the channel, which this onboard setting does not allow.

\looseness=-1 The comparison does not end at fidelity. NAFNet-lite has $45\%$ fewer MACs ($0.665$ vs.\ $1.21$\,GMac) yet is \emph{slower} ($518$ vs.\ $898$ img/s), because its \texttt{LayerNorm2d} and \texttt{PixelShuffle} add wall-clock cost no MAC
count captures. Restormer-lite's MAC count is higher ($2.31$\,GMac), but its
throughput gap is larger still ($203$ img/s, $4.4\times$), since attention at full resolution compounds under batching. It was also the only model needing gradient-norm clipping, after a first-seed FP16 divergence.

\paragraph{The gap decomposes: skip connections (spiking-legal) close about
half of it; depth alone does not.} Adding one axis at a time,
\textbf{LIMODENet-skip} adds two additive-only encoder$\to$decoder skips
($1{\times}1$ projection + elementwise add, no gating or attention), fully
spiking-legal, at $+1.1\%$ params: \textbf{$38.07\pm0.26$\,dB PSNR ($+0.68$),
$80.66\pm0.54\%$ accuracy ($+2.16$\,pp)}, non-overlapping, recovering ${\sim}56\%$ of the gap to NAFNet-lite and $30\%$ to Restormer-lite.
\textbf{LIMODENet-depth4} instead adds a 4th stage at constant width, no skip
($+32.6\%$ params): fidelity overlaps the baseline while accuracy
\emph{regresses} ($-1.54$\,pp), and combining both matches skip alone, which aligns with recent findings \cite{le2026gluse}. The remaining gap is therefore attributable to operations LIMODENet is constrained not to use, not to an unexplained deficit.

\begin{table*}[t]
    \centering
  \footnotesize
  \setlength{\tabcolsep}{2.5pt}
  \begin{tabular}{lccccc}
    \toprule
    Reconstructor & Params & PSNR (dB) & SSIM & Top-1 (\%)$^{\ast}$ & bs1\,ms / img/s$^{\dagger}$ \\
    \midrule
    U-Net (skips) & 0.75M & 36.32$\pm$0.33
      & 0.9591$\pm$0.0027 & 77.17$\pm$0.89 & --- \\
    CNN-AE        & 0.75M & 35.64$\pm$0.18
      & 0.9531$\pm$0.0017 & 71.89$\pm$0.77 & --- \\
    \midrule
    LIMODENet-AE & 0.77M & 37.39$\pm$0.14
      & 0.9672$\pm$0.0010 & 78.50$\pm$0.83 & 2.54 / 898 \\
    \textbf{LIMODENet-skip} & 0.78M & \textbf{38.07$\pm$0.26}
      & 0.9716$\pm$0.0014 & \textbf{80.66$\pm$0.54} & 2.74 / 830 \\
    LIMODENet-depth4 & 1.02M & 37.18$\pm$0.21
      & 0.9663$\pm$0.0013 & 76.96$\pm$0.60 & 3.04 / 882 \\
    LIMODENet-skip-depth4 & 1.03M & 37.93$\pm$0.05
      & 0.9706$\pm$0.0003 & 80.34$\pm$0.57 & 3.13 / 813 \\
    NAFNet-lite~\cite{chen2022simple} & 0.75M & 38.61$\pm$0.24
      & 0.9742$\pm$0.0014 & 82.45$\pm$0.21 & 5.01 / 518 \\
    Restormer-lite~\cite{zamir2022restormer} & 0.77M & \textbf{39.63$\pm$0.12}
      & \textbf{0.9796$\pm$0.0005} & \textbf{83.51$\pm$0.13} & 4.40 / 203 \\
    \bottomrule
  \end{tabular}
  \caption{\textbf{Encoder-controlled restoration, iso-parameter, top to
  bottom: baselines, LIMODENet and its kill-test variants, and two modern
  restorers.} $1$\,dB/$100$q, $3$ seeds, $60$ epochs, val-selected. LIMODENet
  leads the top block (U-Net/CNN-AE, its deployment-constrained peers) but
  trails the bottom block (NAFNet-lite/Restormer-lite), which additionally
  carry spiking-illegal gating/attention; every LIMODENet variant nonetheless
  beats both modern restorers on batched throughput ($813$--$898$ vs.\
  $518$/$203$\,img/s). $^{\ast}$Reconstruct-then-classify accuracy under a
  fixed \emph{Spiking-CNN} judge, the \emph{least} favourable of two neutral
  judges tried. $^{\dagger}$Batch $32$, FP32 + cuDNN autotune, one RTX~4070
  Laptop GPU; full ablation and judge-control detail in \cref{supp:ablation-full,supp:semantic}.}
  \label{tab:modern}
\end{table*}

\paragraph{LIMODENet-skip converts to spiking end-to-end with zero blocked
operations; neither modern restorer has a 1:1 spiking-legal substitute.}
\looseness=-1 Applying the identical conversion rule used for LIMODENet's classifier port
(every GELU becomes a leaky-integrate-and-fire neuron, every
convolution/normalization/skip stays graded synaptic) to all three
reconstructors, \textbf{LIMODENet-skip converts with zero blocked
operations}. NAFNet-lite and Restormer-lite do not: a per-operation audit
finds $24$ and $22$ blocked operations respectively (\texttt{LayerNorm2d},
\texttt{SimpleGate}, softmax-based MDTA, gated feed-forward). Trained to
convergence (three seeds, $T{=}8$, same protocol and judge,
Table~\ref{tab:spiking-recon}), fidelity degrades by a real non-overlapping margin ($-2.12$\,dB PSNR) but downstream accuracy does not ($80.22\pm0.20\%$ vs.\ $80.66\pm0.54\%$, overlapping): conversion costs pixel-level fidelity while task-relevant information mostly survives.

\begin{table*}[t]
  \centering
  \footnotesize
  \setlength{\tabcolsep}{2.5pt}
  \begin{tabular}{lcccc}
    \toprule
    Reconstructor & PSNR (dB) & SSIM & Top-1 (\%) & bs1\,ms / img/s \\
    \midrule
    LIMODENet-skip (ANN) & \textbf{38.07$\pm$0.26} & \textbf{0.9716$\pm$0.0014}
      & 80.66$\pm$0.54 & \textbf{2.74} / \textbf{830} \\
    LIMODENet-skip (spiking, $T{=}8$) & 35.95$\pm$0.05 & 0.9519$\pm$0.0004
      & \textbf{80.22$\pm$0.20}$^{\ast}$ & 53.44 / 33$^{\dagger}$ \\
    \bottomrule
  \end{tabular}
  \caption{\looseness=-1 \textbf{Full spiking conversion costs measurable fidelity but not measurable downstream accuracy.} Three seeds, $60$ epochs, $T{=}8$, same protocol and judge as Table~\ref{tab:modern}; PSNR/SSIM are non-overlapping between rows, Top-1 is not. $^{\ast}$Indistinguishable from the ANN row. $^{\dagger}$GPU-simulated latency, not a neuromorphic-hardware measurement: every timestep is a full dense forward pass here. See the energy analysis below.}
  \label{tab:spiking-recon}
\end{table*}

\paragraph{Energy: spiking is not cheaper on identical silicon; the payoff is
access to hardware the fidelity-winning competitors cannot reach at all.}
We measure, rather than assume, two energy quantities (\cref{supp:neuromorphic}). On a fixed $45$\,nm process (Horowitz proxy~\cite{horowitz2014computing}) at the model's \emph{measured} firing rate, spiking LIMODENet-skip costs \textbf{$26.12$\,mJ against the ANN's $7.82$\,mJ, $3.34\times$ more rather than less}: $88.6\%$ of its synaptic operations stay dense at every one of the $T{=}8$ timesteps, and the $11.4\%$ spike-driven share cannot offset that multiplication. A cross-hardware KerasSpiking projection instead shows $78.5$\,mJ on Loihi against $530.9$\,mJ for the ANN on a GPU, an apparent win that overstates the saving for operations which never sparsify. We take the conservative reading: \textbf{neuromorphic compatibility is a deployment argument, not a demonstrated efficiency win, but one neither fidelity-winning competitor can make at all.}

\paragraph{The fidelity advantage holds across the channel and quality axes,
and lives entirely below the link's decoding threshold.}
\looseness=-1 Repeating the full three-seed comparison at three JPEG qualities and at a
second sub-threshold $E_s/N_0$ preserves the fidelity ordering in all sixteen
sub-threshold PSNR and SSIM comparisons, non-overlapping, the margin shrinking
monotonically as the channel destroys more detail. This emulated DVB-S2X link
has a sharp coded-link decoding threshold between $2$ and $3$\,dB (received
PSNR jumps from ${\sim}18$ to ${\sim}50$\,dB across one decibel); above it all
three architectures tie, exactly what the bottleneck-versus-skip reading
predicts. The encoder therefore matters precisely where a conventional receiver has already failed, which for an onboard system is the operationally relevant half of the curve; \cref{supp:sweep,supp:abovecliff} give the full tables.

\subsection{The core result is judge-independent and not a training-budget artifact}
\label{sec:judgefree}

\paragraph{Four independent instruments give the same ranking.}
Reconstruct-then-classify accuracy needs a downstream head, and ours shares an architecture family with one encoder under test. We remove that confound three ways. \textbf{(i)~Neutral judges.} Re-scoring the identical reconstructions with a Spiking-CNN and a SEW-ResNet~\cite{fang2021deep}, both unrelated to all three reconstructors, keeps the ordering: the LIMODENet\,$>$\,CNN-AE margin stays
large ($+4.2$ to $+6.6$\,pp), while the LIMODENet\,$>$\,U-Net margin narrows to $+1.3$--$1.8$\,pp, so we scope the U-Net claim to fidelity, not label
accuracy. \textbf{(ii)~A judge-free comparison.} Scoring paired cosine similarity, latent distortion and linear CKA on a frozen encoder's pre-head feature, LIMODENet-AE leads on all three, non-overlapping, in all $24$ comparisons across four sub-threshold conditions.
\textbf{(iii)~A self-supervised judge.} Repeating that comparison with
DINOv2~\cite{oquab2023dinov2} ViT-S/14, which is never trained on a classification label, reproduces the exact ordering (Table~\ref{tab:judgefree}). A
closed-form worst-case certificate (the fraction of images whose predicted
class \emph{provably} cannot have changed, $\lVert\Delta z\rVert$ below the
linear-head margin) agrees: raw degradation is uncertifiable for every image,
and restoration yields a first non-zero rate ordered
LIMODENet\,$>$\,U-Net\,$>$\,CNN-AE. Across two neutral judges, three label-free
geometry metrics, an SSL encoder, and a certified bound, the ranking never
moves.

\begin{table}[t]
  \centering\footnotesize\setlength{\tabcolsep}{5pt}
  \begin{tabular}{lccc}
    \toprule
    Reconstructor & PSS\,$\uparrow$ & $D_{\mathrm{sem}}$\,$\downarrow$ & CKA\,$\uparrow$ \\
    \midrule
    \textbf{LIMODENet-AE} & \textbf{0.796$\pm$0.010}
      & \textbf{0.611$\pm$0.016} & \textbf{0.799$\pm$0.011} \\
    U-Net (skips) & 0.746$\pm$0.010 & 0.685$\pm$0.015 & 0.757$\pm$0.013 \\
    CNN-AE        & 0.694$\pm$0.014 & 0.756$\pm$0.018 & 0.681$\pm$0.011 \\
    \bottomrule
  \end{tabular}
  \caption{\textbf{A self-supervised, label-free judge reproduces the exact
  restoration ranking.} DINOv2 ViT-S/14 embeddings, $1$\,dB/$100$q, three
  seeds; all three metrics separate non-overlapping. Full neutral-judge,
  judge-free and certified-bound tables in \cref{supp:semantic}.}
  \label{tab:judgefree}
\end{table}

\paragraph{Training to convergence widens the gap.}
A short schedule invites the objection that the baselines were cut off before
catching up. Re-running all three at a genuine convergence point ($60$ epochs; every validation curve flat to $0.003$\,dB/epoch, \cref{supp:convergence}) \emph{widens} every margin:
$+1.44\!\to\!+1.75$\,dB against the CNN-AE and $+0.64\!\to\!+1.07$\,dB against
the U-Net. The $25$-epoch budget undertrained all three models by
${\sim}1.5$\,dB each, not the baselines alone.

\subsection{Classification across remote-sensing benchmarks}
\label{sec:classification}
With ImageNet~\cite{deng2009imagenet} pre-training LIMODENet reaches $98.45\%$ top-1 on EuroSAT using
${\sim}0.69$M parameters and no softmax/QKV operations. Table~\ref{tab:rs-datasets}
reports $3$-seed from-scratch results across all five RS benchmarks and the
full width ladder, plus the off-ladder \emph{big} and legacy IN-finetuned
columns. \textbf{Capacity returns track headroom, not dataset size:} on the
three benchmarks already above $95\%$, $8.4\times$ the parameters buys only
$1.0$--$1.9$\,pp; on the two furthest from ceiling it buys $6.2$\,pp (NWPU)
and $19.1$\,pp (UCMerced). Saturated RS classification cannot separate architectures, which is why this paper's discriminating experiment is the restoration comparison above rather than a leaderboard. ImageNet
pre-training remains the largest single lever; input resolution also
substitutes for capacity on UCMerced ($+20.5$\,pp for tiny at $256$\,px), and
Tiny-ImageNet pre-training reproduces the same signature along the width
ladder (full pretraining and off-ladder results in \cref{supp:scaling}).

\begin{table*}[t]
  \centering
  \footnotesize
  \setlength{\tabcolsep}{2.5pt}
  \begin{tabular}{lccccccc}
    \toprule
    & \multicolumn{4}{c}{Width ladder (3 seeds)} & & \multicolumn{2}{c}{Off-ladder} \\
    \cmidrule(lr){2-5}\cmidrule(lr){7-8}
    Dataset & Nano & Tiny & Small & Base & $\Delta_{\mathrm{ladder}}$ & Big & IN$\to$FT$^\dagger$ \\
    \midrule
    EuroSAT    & 95.65$\pm$0.23 & 96.54$\pm$0.17 & 96.93$\pm$0.16 & 97.26$\pm$0.17 & $+0.7$  & 97.54$\pm$0.14 & \textbf{98.26}$^{\dagger a}$ \\
    RSICB128   & 96.38$\pm$0.05 & 97.18$\pm$0.19 & 97.76$\pm$0.03 & 97.91$\pm$0.09 & $+0.7$  & 98.23$\pm$0.06 & 99.09$^\dagger$ \\
    PatternNet & 94.90$\pm$0.17 & 95.82$\pm$0.31 & 96.90$\pm$0.28 & 97.43$\pm$0.10 & $+1.6$  & 97.69$\pm$0.05 & 98.98$^\dagger$ \\
    NWPU45     & 74.82$\pm$0.41 & 78.81$\pm$0.72 & 81.82$\pm$0.14 & 82.94$\pm$0.40 & $+4.1$  & 85.01$\pm$0.33 & 90.54$^\dagger$ \\
    UCMerced   & 56.24$\pm$1.70 & 60.26$\pm$0.51 & 68.36$\pm$2.01 & 72.70$\pm$1.20 & $+12.4$ & 79.31$\pm$1.35 & 95.87$^{\dagger b}$ \\
    \bottomrule
  \end{tabular}
  \caption{\looseness=-1 \textbf{RS classification top-1 (\%), uniform $64$\,px protocol, full width ladder.} Nano/Tiny/Small/Base vary only $C{\in}\{96,128,192,256\}$
  (Sec.~\ref{sec:family}); Big is off-ladder. All columns are $3$-seed
  mean$\pm$std, val-selected epoch. $\Delta_{\mathrm{ladder}}$ is the
  Tiny$\to$Base gain from a controlled $4\times$ width increase: ${\leq}1.6$\,pp
  wherever accuracy exceeds $95\%$, $4.1$/$12.4$\,pp on the two benchmarks with
  real headroom. $^\dagger$\,Legacy single-seed IN$\to$FT runs under the earlier protocol, not directly comparable. $^a$\,Best checkpoint reports $98.45\%$. $^b$\,$256$\,px input, unlike the other $\dagger$ entries.}
  \label{tab:rs-datasets}
\end{table*}

\subsection{The recovery pipeline's honest baseline}
\label{sec:recovery}
\looseness=-1 Reconstruct-then-classify over the DVB-S2X channel at $1$\,dB/$100$q recovers top-1 from $11.1\%$ (a clean-trained classifier applied zero-shot) to $85.96\%$,\footnote{Original pipeline reconstructor; the iso-parameter model of Table~\ref{tab:modern} gives $76.42\%$ here under the LIMODENet-ANN judge (\cref{supp:offpoint}). The verdict below holds either way.} but this $+74.8$\,pp figure is an artifact of a weak baseline: a
classifier \textbf{trained on the degraded imagery itself} beats the pipeline at every quality, by $+34.4$\,pp at $10$q ($81.81\pm0.22$ vs.\ $47.44$) down
to $+2.4$\,pp at $100$q ($3$ seeds). \textbf{We therefore do not claim reconstruct-then-classify as an accuracy result.} The pathway instead returns the \emph{image} itself, and is far less sensitive to the operating point: a frozen restorer trained at $1$\,dB loses only $5.0$\,pp applied unchanged at $2$--$4$\,dB against the degraded-trained classifier's ${>}40$\,pp collapse, and a spacecraft cannot retrain per channel state (\cref{supp:offpoint}).

\subsection{Neuromorphic deployment of the classifier}
Because every operation is softmax-/QKV-free, LIMODENet also maps to spiking
hardware as a classifier (snntorch\footnote{\url{https://snntorch.readthedocs.io/en/}} \cite{eshraghian2021training}, LIF neurons, direct input coding,
surrogate-gradient fine-tuning), reaching $93.55{\pm}0.54\%$ top-1 at $T{=}8$ over three seeds, within $4.9$\,pp of the $98.47\%$ ANN checkpoint it was converted from.\footnote{The fine-tuned checkpoint re-evaluated in the spiking pipeline; the same run's classification log reports $98.45\%$ at its best epoch (Table~\ref{tab:rs-datasets}).} This is not a classification-efficiency win: a plain spiking CNN at the same budget matches its accuracy ($92.57{\pm}0.56\%$) at ${\sim}11\times$ lower energy. The contribution is the reconstruction pathway (\cref{supp:neuromorphic}).

\subsection{Verifying the theory: information preservation}
\label{sec:probing}
We test Proposition~\ref{supp:p5} directly, reading linear and kNN probes off the representation after every block (Fig.~\ref{fig:probing}). Probe accuracy \emph{never drops} across the six ODE blocks, climbing from $79.9\%$ at the stem to $98.0\%$ after Stage~3 and $98.4\%$ at the head's first linear layer. The only decrease anywhere is $0.6$\,pp at the strided downsample ($91.5\!\to\!90.9\%$), the one deliberate spatial compression in the backbone. This is the signature of a \emph{late-compression} architecture. Injectivity ($\alpha L_g<1$) is a \emph{sufficient} route to this preservation but not a property of the \emph{trained} weights: as \cref{supp:injectivity} shows, it fails in every block by one to two orders of magnitude and fixed-point inversion does not converge, so we report information preservation as an empirical property of the trained model.

\begin{figure}[t]
  \centering
  \includegraphics[width=\linewidth,trim=0 0 0 32,clip]{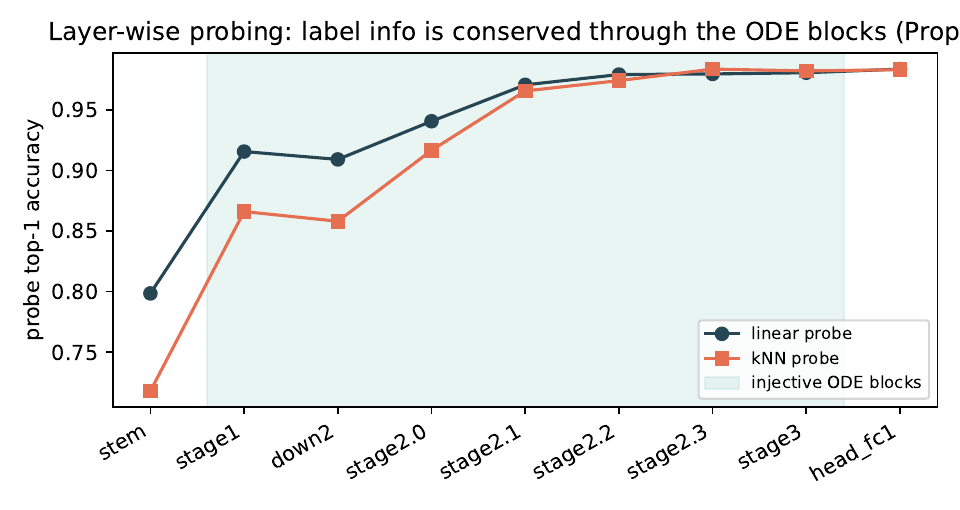}
  \caption{\looseness=-1 \textbf{Layer-wise probing: label information is preserved through the ODE blocks and only compressed at the head.} Linear- and kNN-probe top-1
  accuracy read off the representation after each block. Accuracy rises monotonically through the six ODE blocks ($79.9\!\to\!98.4\%$ linear) and dips only at the strided downsample ($91.5\!\to\!90.9\%$): the opposite of early compression.}
  \label{fig:probing}
\end{figure}
\vspace{-5pt}

\subsection{Ablations}
\label{sec:ablations}
Which of LIMODENet's \emph{own} choices drive its advantage? Ablating one axis at a time (protocol of Table~\ref{tab:modern}), the ODE-specific choices are not load-bearing: removing the sub-unit scale, learning $\alpha$ per block, and replacing the midpoint step with Euler all overlap the reference within noise. The FocalBlock global branch is the one cell that separates ($-0.42$\,dB), and only inside LIMODENet's architecture; it does not transfer to the CNN-AE encoder. Enforcing exact injectivity costs $3.22$\,dB. At half budget (\texttt{nano}, full tables in \cref{supp:ablation-full}) both margins stay non-overlapping and the U-Net margin \emph{grows} ($+1.07\to+1.70$\,dB), ruling out a parameter-excess artifact.

\subsection{A second corpus and a different corruption model}
\label{sec:second-corpus}
Substituting ImageNet-C \texttt{gaussian\_noise}~\cite{hendrycks2019benchmarking} on PatternNet, the U-Net margin vanishes under additive noise ($+0.005$\,dB vs.\ $+1.07$ on EuroSAT), as expected since i.i.d.\ noise is largely invertible by local filtering. Under \texttt{pixelate} $4\times$, which destroys high-frequency content that must be inferred, both margins return ($+0.108$/$+0.145$\,dB, about twenty times the seed spread). The advantage over a skip architecture is thus a property of the \emph{degradation}, present where information must be inferred and absent where it is locally invertible (\cref{supp:second-corpus-full}).

\subsection{Limitations}
\label{sec:limitations}
LIMODENet is parameter-efficient but not compute-efficient: the tiny classifier costs $261.5$\,M mult-adds at $64{\times}64$, more than EfficientViT-M2~\cite{le2026onboardvit} at $224{\times}224$ ($203.5$\,M), a direct consequence of the late-compression structure that preserves information. Its $1.21$\,GMac reconstruction path at $128{\times}128$ still sits ${\sim}3.9\times$ below flight-proven demand (\cref{supp:complexity}). The neuromorphic result is feasibility, not efficiency: a plain spiking CNN matches LIMODENet's accuracy at ${\sim}11\times$ lower energy, and the end-to-end spiking reconstructor is measured at one condition. Exact injectivity is not attained by the trained weights ($\alpha L_g<1$ is violated in every block; enforcing it costs $3.22$\,dB), so we present it as a design principle verified by probing. Finally, all headline restoration results use EuroSAT degraded by one DVB-S2X simulator: the substitute-corpus check (Sec.~\ref{sec:second-corpus}) shows the boundary belongs to the degradation rather than the dataset, but this remains a single-simulator result.

\section{Conclusion}
\label{sec:conclusion}

\looseness=-1 LIMODENet reads a compact vision backbone as an ODE integrator and lets numerical-analysis reasoning fix its design: a Pareto-placed second-order step, a mean-field FocalBlock mixing globally at $O(N)$ cost, and sub-unit residual scales that keep each update bounded and, as probing confirms, information-preserving. At iso-parameters it restores channel-degraded imagery better than a CNN autoencoder and a skip-connection U-Net (non-overlapping, three seeds), but loses to two unconstrained modern restorers using operations it cannot adopt. We do not soften that result: the portable half of the gap closes with spiking-legal skips, and the rest is the price of a constraint that buys conversion with zero blocked operations against $22$--$24$ for the competitors. LIMODENet is thus the best restorer verified deployable within a real power budget.

\section*{Acknowledgments}

The first author thanks Dr.\ Jason K.\ Eshraghian for serving as scientific
advisor during the mentoring phase of the CORE~2025 application to the
Luxembourg National Research Fund (FNR) (\emph{EONISE}, subsequently awarded
for funding as C25/IS-CRS/195562256), and in particular for his recommendation
of attention-free designs that adapt well to spiking neural networks on
neuromorphic hardware. This work was funded by the FNR through the
\emph{SENTRY} project, grant reference C23/IS/18073708/SENTRY.

{
    \small
    \bibliographystyle{ieeenat_fullname}
    \bibliography{main}
}

% WARNING: do not forget to delete the supplementary pages from your submission
\clearpage
\setcounter{page}{1}
\maketitlesupplementary
%% Letter the supplementary sections (A, B, C, ...) so cross-references from
%% the 8-page body read "Sec. A" instead of "Sec. 21", which looks like a
%% section of the main paper. Tables and figures keep running numbers.
\setcounter{section}{0}
\renewcommand{\thesection}{\Alph{section}}

%%%%%%%%%%%%%%%%%%%%%%%%%%%%%%%%%%%%%%%%%%%%%%%%%%%%%%%%%%%%%%%%%%%%%%%%
\noindent
This supplementary material collects the full statements and proofs of the
theoretical results summarized in \cref{sec:method}, together with the standing
assumptions they rely on.  \Cref{supp:assumptions} lists the standing assumptions;
\cref{supp:proofs} gives the eight results (Propositions~1--6, Lemma~1,
Theorem~1); \cref{supp:placement} records where each result is used in the main
paper.

%%%%%%%%%%%%%%%%%%%%%%%%%%%%%%%%%%%%%%%%%%%%%%%%%%%%%%%%%%%%%%%%%%%%%%%%
\section{Standing Assumptions}
\label{supp:assumptions}

The following assumptions apply throughout this appendix.

\begin{description}
  \item[A1] The learned vector field takes the form
  $f(x;\theta)=W_2\cdot\GELU(W_{\mathrm{dw}}\ast\GN(x))+b$, with weights trained
  by AdamW~\cite{loshchilov2019decoupled} using weight decay
  $\lambda=5\times10^{-2}$.
  \item[A2] GroupNorm~\cite{wu2018group} uses one group, its learnable
  parameters satisfy $\lVert\gamma\rVert_\infty\le B_\gamma$, and its
  regularizer is fixed at $\varepsilon=10^{-5}$.
  \item[A3] Euler step sizes are $\alpha=0.5$ at Stages~1 and~3 and $\alpha=0.7$
  at Stage~2; all satisfy the stability bound $\alpha<2m/L^2$ for typical trained
  values $m\approx0.5$, $L\approx2$.
  \item[A4] After the stride-2 stem, the resolution is $32\times32$ at Stage~1 and
  $16\times16$ at Stages~2--3; the channel dimension is $C=128$ throughout.
\end{description}

%%%%%%%%%%%%%%%%%%%%%%%%%%%%%%%%%%%%%%%%%%%%%%%%%%%%%%%%%%%%%%%%%%%%%%%%
\section{Formal Propositions and Proofs}
\label{supp:proofs}

\subsection{Proposition 1: Local Truncation Error of Euler and the Explicit Midpoint Step}
\begin{proposition}[Local truncation error]
\label{supp:p1}
Let $f:\R^n\to\R^n$ be twice continuously differentiable with Jacobian $J_f$, and
consider $\dot{x}=f(x;\theta)$, $x(0)=x_t$, with step size $\alpha>0$. Then
\begin{align}
  x(t{+}\alpha)-\big[x_t+\alpha f(x_t)\big]
    &= \tfrac{\alpha^2}{2}J_f(x_t)f(x_t)+\mathcal{O}(\alpha^3),
    \label{eq:s-euler}\\
  x(t{+}\alpha)-\big[x_t+\alpha f(x_t+\tfrac{\alpha}{2}f(x_t))\big]
    &= \mathcal{O}(\alpha^3).
    \label{eq:s-heun}
\end{align}
Euler has LTE order $\mathcal{O}(\alpha^2)$ and the explicit midpoint step (a two-stage RK-2 method) has LTE order $\mathcal{O}(\alpha^3)$~\cite{hairer1993solving}.
\end{proposition}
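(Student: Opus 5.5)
The plan is to Taylor-expand the exact flow and each numerical update around $x_t$, then compare coefficients power by power in $\alpha$.

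\textbf{Exact solution.} Since $f\in C^2$, the solution $x(\cdot)$ of $\dot x=f(x)$ is $C^3$ in time on a neighbourhood of $t$. Differentiating along the trajectory gives
\[
\dot x=f,\qquad \ddot x=J_f f .
\]
Taylor's theorem with Lagrange (or integral) remainder then yields
\[
x(t{+}\alpha)=x_t+\alpha f(x_t)+\tfrac{\alpha^2}{2}J_f(x_t)f(x_t)+R_3,\qquad \|R_3\|\le \tfrac{\alpha^3}{6}\sup\|\dddot x\|.
\]
The supremum is taken over $[t,t+\alpha]$. It is finite because $\dddot x=D^2f[f,f]+J_f^2f$ is continuous on the compact trajectory segment. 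Subtracting the Euler update $x_t+\alpha f(x_t)$ gives \eqref{eq:s-euler} immediately.

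\textbf{Midpoint step.} Write $\Phi(\alpha)=x_t+\alpha f\big(x_t+\tfrac{\alpha}{2}f(x_t)\big)$. Expand the inner evaluation with a first-order Taylor formula plus a second-order remainder, which $f\in C^2$ permits:
\[
f\big(x_t+\tfrac{\alpha}{2}f(x_t)\big)=f(x_t)+\tfrac{\alpha}{2}J_f(x_t)f(x_t)+\mathcal{O}(\alpha^2).
\]
The remainder is bounded by $\tfrac{\alpha^2}{8}\sup\|D^2f\|\,\|f(x_t)\|^2$ over the segment. Multiplying by $\alpha$ gives
\[
\Phi(\alpha)=x_t+\alpha f+\tfrac{\alpha^2}{2}J_f f+\mathcal{O}(\alpha^3).
\]
Its terms of order $0$, $1$ and $2$ coincide with those of the exact expansion, so the difference is $\mathcal{O}(\alpha^3)$, which is \eqref{eq:s-heun}.

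\textbf{Main obstacle.} The only delicate point is regularity bookkeeping. A naive third-order expansion of the midpoint map would ask for $f\in C^3$. To avoid this, I will keep everything at second order with explicit remainders and use the fact that the exact solution is one derivative smoother than $f$. I will also restrict to a compact neighbourhood of $x_t$, taking $\alpha\le\alpha_0$ small, so that all suprema are finite and the $\mathcal{O}$ constants are uniform. I will then cite \cite{hairer1993solving} for the standard order conditions as a cross-check.
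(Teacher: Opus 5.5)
Your proposal is correct and follows the paper's proof essentially line for line. You Taylor-expand the exact flow to second order using $\dot x=f$ and $\ddot x=J_f f$, expand $f$ at the midpoint argument to first order with a quadratic remainder, and observe that the $\alpha^2$ terms cancel; your explicit remainder bounds and the restriction to a compact neighbourhood with $\alpha\le\alpha_0$ just make uniform the $\mathcal{O}(\alpha^3)$ constants that the paper leaves implicit in $M=\sup\lVert\partial^2 f\rVert$.
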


\begin{proof}
\emph{(i) Euler.} Since $f\in C^2$, the exact solution satisfies
$x(t{+}\alpha)=x_t+\alpha\dot{x}+\tfrac{\alpha^2}{2}\ddot{x}+\mathcal{O}(\alpha^3)$.
The chain rule gives $\dot{x}=f(x)$ and $\ddot{x}=J_f(x)f(x)$, yielding
\eqref{eq:s-euler}.
\emph{(ii) Midpoint.} Let $\Psi_\alpha(x_t)=x_t+\alpha f(x_{\mathrm{mid}})$ with
$x_{\mathrm{mid}}=x_t+\tfrac{\alpha}{2}f(x_t)$. Taylor-expanding $f$ at $x_t$,
$f(x_{\mathrm{mid}})=f(x_t)+J_f(x_t)(x_{\mathrm{mid}}-x_t)+R_2$ with
$\lVert R_2\rVert\le\tfrac{M}{2}\lVert x_{\mathrm{mid}}-x_t\rVert^2$ and
$M=\sup\lVert\partial^2 f\rVert$. As $x_{\mathrm{mid}}-x_t=\tfrac{\alpha}{2}f(x_t)$,
$R_2=\mathcal{O}(\alpha^2)$, so
$\Psi_\alpha(x_t)=x_t+\alpha f(x_t)+\tfrac{\alpha^2}{2}J_f(x_t)f(x_t)+\mathcal{O}(\alpha^3)$.
Comparing with the exact expansion, the $\mathcal{O}(\alpha^2)$ terms cancel,
giving \eqref{eq:s-heun}~\cite{hairer1993solving}.
\end{proof}

\noindent\textbf{The step size alone buys a factor of four.} Setting
$\alpha=0.5$ (vs.\ the ResNet default
$\alpha=1$~\cite{he2016deep}) reduces the Euler LTE coefficient by
$4\times$ at no extra FLOP cost; the midpoint step at Stage~3 removes that leading term altogether, taking the local error from $\tfrac{\alpha^2}{2}\lVert J_ff\rVert$ at $\alpha=0.5$ to a residual of order $\alpha^3$.

\begin{figure}[t]
  \centering
  \includegraphics[width=\linewidth]{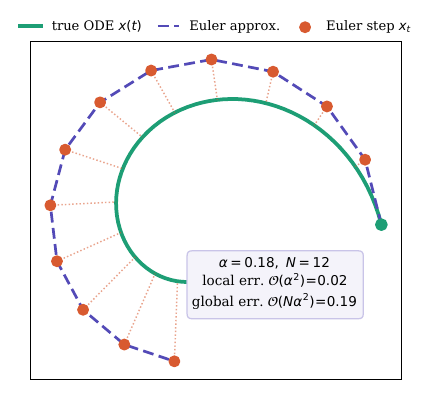}
  \caption{\textbf{Why a smaller step and one second-order stage: the
  motivation, on a model problem.} A forward-Euler discretization of a damped
  spiral $\dot x = Ax$ leaves the true trajectory $x(t)$ by a per-step error of
  order $\mathcal{O}(\alpha^2)$ (dotted segments) that accumulates to
  $\mathcal{O}(N\alpha^2)$ over $N$ steps (Prop.~\ref{supp:p1}). Halving $\alpha$ quarters
  the leading term at no extra cost, and replacing one Euler step with a
  two-stage RK-2 step removes it entirely up to $\mathcal{O}(\alpha^3)$; LIMODENet
  therefore uses $\alpha\!\le\!0.7$ at every stage and one RK-2 stage at the coarsest resolution (Thm.~\ref{supp:t1}). This is a schematic of the numerical argument,
  not a measurement of the trained network.}
  \label{fig:euler-motivation}
\end{figure}

\subsection{Lemma 1: Lipschitz Continuity and Well-Posedness}
\begin{lemma}[Well-posedness]
\label{supp:l1}
Let $f(x;\theta)=W_2\cdot\GELU(W_{\mathrm{dw}}\ast\GN(x))+b$ and suppose training
enforces weight decay $\lambda\lVert W\rVert_F^2$ with $\lambda>0$. Then on any
compact set $K\subset\R^n$ there is $L(\theta,K)<\infty$ with
$\lVert f(x)-f(y)\rVert\le L(\theta,K)\lVert x-y\rVert$ for $x,y\in K$, and by
Picard--Lindel\"of~\cite{coddington1955theory} the IVP has a unique solution on
any compact interval.
\end{lemma}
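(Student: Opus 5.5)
The plan is to show that $f$ is a composition of maps, each Lipschitz on bounded sets, and then chain the constants. Write $f=A\circ\GELU\circ D\circ\GN$ with $D(u)=W_{\mathrm{dw}}\ast u$ and $A(v)=W_2v+b$. The maps $D$ and $A$ are affine with finite operator norms, so they are globally Lipschitz with constants $\lVert W_{\mathrm{dw}}\rVert_{\mathrm{op}}$ and $\lVert W_2\rVert_{\mathrm{op}}$. These norms are finite for any fixed $\theta$. Weight decay is not needed for finiteness; it only keeps the norms bounded along training, and we will remark on this rather than use it. $\GELU(t)=t\Phi(t)$ is $C^\infty$ with derivative $\Phi(t)+t\varphi(t)$. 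This derivative is bounded on $\R$: its supremum is about $1.13$. So $\GELU$ is globally Lipschitz coordinatewise, and therefore Lipschitz in the Euclidean norm. Composition multiplies these constants.

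The only delicate factor is $\GN$ with one group, $\GN(x)=\gamma\odot(x-\mu(x)\mathbf 1)/\sqrt{\sigma^2(x)+\varepsilon}+\beta$. We will show it is $C^1$ on all of $\R^n$ and bound its Jacobian. The centering map $x\mapsto x-\mu\mathbf 1$ is a linear projection with norm at most $1$. The map $x\mapsto\sigma^2(x)+\varepsilon$ is a polynomial bounded below by $\varepsilon=10^{-5}>0$ (A2), so $s(x)=(\sigma^2+\varepsilon)^{-1/2}$ is smooth everywhere, with no singularity at constant inputs. Differentiating gives
\[
\partial\GN(x)=\operatorname{diag}(\gamma)\Big[s(x)P-s(x)^3\tfrac{1}{n}(Px)(Px)^{\top}\Big],
\]
where $P$ is the centering projection. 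Its norm is bounded by $B_\gamma\big(\varepsilon^{-1/2}+\sup_x s^3\lVert Px\rVert^2/n\big)$. Since $\lVert Px\rVert^2/n=\sigma^2$, the second term is $\sigma^2/(\sigma^2+\varepsilon)^{3/2}\le\varepsilon^{-1/2}$. So $\GN$ is in fact globally Lipschitz with constant $\le 2B_\gamma\varepsilon^{-1/2}$. Restricting to a compact $K$ is therefore more than enough. By the mean value inequality on the convex hull of $K$, or directly on $\R^n$, we get $L(\theta,K)\le\lVert W_2\rVert_{\mathrm{op}}\cdot c_{\GELU}\cdot\lVert W_{\mathrm{dw}}\rVert_{\mathrm{op}}\cdot 2B_\gamma\varepsilon^{-1/2}<\infty$.

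This Jacobian bound for $\GN$ is the main step, and bounding the rank-one term is where care is needed. If it fails, the fallback is local: use continuity of $\partial\GN$ and compactness of $K$ to get a finite supremum, which is all the lemma asks for.

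For well-posedness, the autonomous field $f$ is continuous and Lipschitz, globally so by the estimate above. Picard--Lindelöf~\cite{coddington1955theory} then gives a unique local solution. Because the constant is global, $f$ has linear growth, $\lVert f(x)\rVert\le\lVert f(0)\rVert+L\lVert x\rVert$. Grönwall then rules out blow-up and extends the solution uniquely to any compact time interval. If only the local constant on $K$ were available, we would need an a priori bound keeping trajectories in $K$, so stating the global bound is the cleaner route.
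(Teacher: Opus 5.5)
Your proof is correct. It shares the paper's skeleton: factor $f$ through $\GN$, the depthwise convolution, $\GELU$ and the affine output, then multiply the Lipschitz constants. The two load-bearing steps, however, go differently.

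The paper proceeds in three moves. It only asserts that $\GN$ is \emph{locally} Lipschitz on a compact $K$. It bounds the operator norms through the penalty, $\lVert W\rVert_{\mathrm{op}}\le\sqrt{C_{\mathrm{train}}/\lambda}$. It then gets existence on a whole compact time interval by arguing that weight decay and gradient clipping keep activations bounded, so trajectories stay in a compact set.

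You instead differentiate $\GN$ and use $\varepsilon>0$ to get a global constant $2B_\gamma\varepsilon^{-1/2}$. Hence $f$ is globally Lipschitz with linear growth, and the solution extends to any compact interval by Gr\"onwall, with no a priori confinement of trajectories. This is the more rigorous route. Local Lipschitzness plus Picard--Lindel\"of gives only local-in-time existence, and the paper's bounded-activation argument is a statement about training, not about the ODE flow.

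You are also right that weight decay is not needed for the lemma, since any fixed $\theta$ has finite operator norms. The penalty only buys a bound that holds uniformly along training, which is all the paper's follow-up remark about weight decay can rest on.

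What the local formulation does leave room for is a $K$-dependent constant much smaller than the global factor $\varepsilon^{-1/2}\approx 316$. That matters if the bound is to say anything about the margins in Proposition~\ref{supp:p5}. Your own Jacobian formula supplies it. With $v=Px/\lVert Px\rVert$, the bracket equals $s\big(P-\tfrac{\sigma^2}{\sigma^2+\varepsilon}vv^{\top}\big)$, whose norm is at most $s(x)$. So $\lVert\partial\GN(x)\rVert\le B_\gamma(\sigma^2(x)+\varepsilon)^{-1/2}$. This removes your factor $2$ and gives a local constant controlled by a lower bound on $\sigma$ over the convex hull of $K$.
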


\begin{proof}
We bound each component and use sub-multiplicativity of Lipschitz constants under
composition. \emph{(GN)}~\cite{wu2018group} $\GN$ is \emph{locally} Lipschitz:
on compact $K$ there is $L_{\GN}(K)<\infty$ depending on $B_\gamma$, $\varepsilon$
and $\sup_{x\in K}\sigma(x)$. Under weight decay and gradient clipping activations
stay bounded, so trajectories remain in a compact set and $L_{\GN}$ is finite.
\emph{(Conv)} A convolution is linear with $\Lip(W\ast\,)=\lVert W\rVert_{\mathrm{op}}\le\lVert W\rVert_F$;
weight decay $\lambda\lVert W\rVert_F^2\le C_{\mathrm{train}}$ gives
$\lVert W\rVert_{\mathrm{op}}\le\sqrt{C_{\mathrm{train}}/\lambda}<\infty$~\cite{miyato2018spectral}.
\emph{(GELU)}~\cite{hendrycks2016gaussian} $\GELU'(t)=\Phi(t)+t\phi(t)$ attains
its supremum at $t\approx1.324$, so $\Lip(\GELU)\le1.129$.
\emph{(Composition)} $\Lip(f)|_K\le\lVert W_2\rVert_{\mathrm{op}}\cdot1.129\cdot\lVert W_{\mathrm{dw}}\rVert_{\mathrm{op}}\cdot L_{\GN}(K)=:L(\theta,K)$.
Picard--Lindel\"of~\cite{coddington1955theory} then yields a unique
$C^1$ solution.
\end{proof}

\noindent\textbf{Two consequences follow.} (a) Weight decay is a well-posedness
requirement, not a tuning knob, because without it
$\lVert W\rVert_{\mathrm{op}}$ may diverge. (b)
BatchNorm~\cite{ioffe2015batch} makes $f$ depend on the batch and so violates
the autonomy of the IVP, which is why LIMODENet uses GroupNorm.

\subsection{Theorem 1: Pareto Placement of the RK-2 Step}
\begin{theorem}[Pareto placement]
\label{supp:t1}
Consider a $K$-stage hierarchy with resolutions $H_k\times H_k$ and shared width
$C$. Assume \textbf{(A)} each $f_k$ satisfies Lemma~1 with $\lVert J_{f_k}f_k\rVert$
uniformly bounded, and \textbf{(B)} $\Cost(k)\propto H_k^2 C$~\cite{molchanov2017pruning}.
Then replacing Euler by the midpoint step improves the LTE order from $\mathcal{O}(\alpha^2)$
to $\mathcal{O}(\alpha^3)$ (a stage-independent gain) at cost $\propto H_k^2 C$,
so the Pareto-optimal placement is a stage of minimal $H_k$.
\end{theorem}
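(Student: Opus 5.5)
The plan is to reduce the statement to a one-dimensional optimization over stages. For each stage $k$ I define the gain of switching Euler to the midpoint rule as the removed leading error term, $G_k:=\tfrac{\alpha_k^2}{2}\lVert J_{f_k}(x)f_k(x)\rVert$, and the price as the extra cost $\Delta\Cost(k)$. The scalar to maximize over $k$ is then the gain-per-FLOP ratio $G_k/\Delta\Cost(k)$. Note that a single stage takes the RK-2 step, which matches the architecture.

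First I would invoke Proposition~\ref{supp:p1} stagewise. Lemma~\ref{supp:l1} makes each $f_k$ locally Lipschitz on the compact set where trajectories live. I also need $f_k\in C^2$ there, which holds because GELU is smooth and GN is smooth away from zero variance ($\varepsilon>0$). Then the Euler residual at stage $k$ is $\tfrac{\alpha_k^2}{2}J_{f_k}f_k+\mathcal{O}(\alpha_k^3)$ and the midpoint residual is $\mathcal{O}(\alpha_k^3)$. So the switch removes exactly the $\mathcal{O}(\alpha^2)$ term. Its size is bounded above and, for the comparison, treated as the same constant $B$ at every stage by assumption \textbf{(A)}. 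Because the order improvement from $2$ to $3$ is the same at every stage, the gain is stage-independent up to the factor $\alpha_k^2$.

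Second, I would compute the cost. The midpoint rule evaluates $f_k$ twice, at $x$ and at $m$, instead of once. By \textbf{(B)} each evaluation costs $c\,H_k^2C$, so the increment is $\Delta\Cost(k)=c\,H_k^2 C$, with $C$ shared across stages. The ratio becomes $G_k/\Delta\Cost(k)=\alpha_k^2 B/(2c\,C\,H_k^2)$. This is decreasing in $H_k$, so any stage of minimal $H_k$ is Pareto-optimal: no other placement gives at least the same gain at strictly lower cost. Under A4, Stage~1 sits at $32^2$ and Stages~2--3 at $16^2$, which gives the $4\times$ factor quoted in the main text.

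The main obstacle is stating \textbf{(A)} so that gains at different stages are actually comparable. The bound $\lVert J_{f_k}f_k\rVert\le B$ only gives an upper bound, while a Pareto comparison needs the removed error to be of comparable size across stages. I would handle this by reading \textbf{(A)} as ``uniform'' in the sense of a common value. Failing that, I would compare worst-case guarantees $\tfrac{\alpha_k^2}{2}B$, which makes the conclusion a statement about guaranteed gain. The residual $\alpha_k^2$ dependence also matters: Stage~3 ($\alpha=0.5$) and Stage~1 ($\alpha=0.5$) have equal step sizes, so the ratio is genuinely $4\times$. Among the $16^2$ stages the tie is left to the supplement's separate argument, and the theorem only asserts that the optimum lies among minimal-$H_k$ stages.
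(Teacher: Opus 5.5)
Your proposal follows the paper's proof step for step. Proposition~\ref{supp:p1} gives the $\mathcal{O}(\alpha^2)$ reduction, \textbf{(A)} makes it stage-independent, \textbf{(B)} prices the one extra evaluation at $\propto H_k^2C$, and the gain-per-FLOP ratio is then decreasing in $H_k$. Your remark that an upper bound alone does not make the gains comparable is fair: the paper makes the same leap just as implicitly.

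The one caveat is that the monotonicity needs a common step size, which the paper's single $\alpha$ assumes. Once you index $\alpha_k$ by stage, your own ratio with A3's values favours Stage~2 ($\alpha=0.7$) over Stage~3 by $(0.7/0.5)^2\approx1.96$ rather than tying them. With these values the optimum still sits at the minimal resolution, so the theorem's conclusion survives.
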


\begin{proof}
By Proposition~\ref{supp:p1},
$\tau_{\mathrm{Euler}}(k)=\tfrac{\alpha^2}{2}\lVert J_{f_k}f_k\rVert+\mathcal{O}(\alpha^3)$
and $\tau_{\mathrm{RK2}}(k)=\mathcal{O}(\alpha^3)$; by (A) the reduction
$\Delta\tau(k)$ is bounded by stage-independent constants of order $\alpha^2$. By
(B) the extra cost of the midpoint step is one evaluation, $\Delta\mathrm{FLOPs}(k)\propto H_k^2 C$.
Hence the efficiency ratio $E(k)=\Delta\tau(k)/\Delta\mathrm{FLOPs}(k)\propto\alpha^2 M/(H_k^2 C)$
is decreasing in $H_k$ and maximized at any stage of minimal resolution.
\end{proof}

\noindent\textbf{The theorem fixes the resolution; a secondary argument fixes the stage.} In LIMODENet $H_1=32$ and $H_2=H_3=16$, so the midpoint step at the coarse resolution attains the same first-order LTE reduction at $(32/16)^2=4\times$ lower cost than at Stage~1. The theorem is indifferent between Stages~2 and~3, which share that resolution; we place the step in Stage~3 because it is a single block at the end of the flow, where accumulated error meets the head, whereas Stage~2 is a four-block chain in which every block would need the extra evaluation. This is the design we implement, obtained by argument rather than ablation. A
weaker form drops (A): Stage~3 stays optimal unless $\lVert J_{f_3}f_3\rVert$ exceeds
$\lVert J_{f_1}f_1\rVert$ by more than $4\times$.

\subsection{Proposition 2: FocalBlock as a Mean-Field ODE}
\looseness=-1 \begin{proposition}[Mean-field coupling]
\label{supp:p2}
Identify each spatial position $(i,j)\in\Omega$ with a particle of state
$h_{:,i,j}\in\R^C$ and let $\loc_{:,i,j}=W_{\mathrm{dw}}\ast h|_{(i,j)}$. With the
empirical measure $\mu_h=\tfrac{1}{|\Omega|}\sum_{(i,j)}\loc_{:,i,j}$, the
FocalBlock field
$f_{\mathrm{focal}}(h)_{:,i,j}=\PWMLP(\GELU(\loc_{:,i,j}+W_g\mu_h))$
is a spatially discretized McKean--Vlasov (mean-field)
ODE~\cite{mckean1966class,sznitman1991topics}; one Euler step is its
propagation-of-chaos discretization.
\end{proposition}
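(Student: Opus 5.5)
The proof is structural: Proposition~\ref{supp:p2} is an identification, so the work is to write the FocalBlock update in the canonical McKean--Vlasov form and check that each ingredient matches. The continuous-time object is a system of $N=|\Omega|$ interacting particles,
\[
\dot h_p = F\big(\mathcal{N}_p(h),\,\mu_h\big),\qquad p\in\Omega,
\]
where $F(u,\nu)=\PWMLP(\GELU(u+W_g\nu))$. Here $u=\loc_{:,p}$ depends only on the particle and its finite stencil $\mathcal{N}_p$ (the depthwise kernel). The only coupling to distant particles passes through the empirical mean $\mu_h$, which is a linear functional of the empirical measure $\frac1N\sum_p\delta_{\loc_{:,p}}$.

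\textbf{Step 1: mean-field form.} I would rewrite $\mu_h=\int y\,d\big(\tfrac1N\sum_p\delta_{\loc_{:,p}}\big)(y)$. Then $f_{\mathrm{focal}}(h)_{:,p}=F(\loc_{:,p},\langle \mathrm{id},\mu^N\rangle)$, which is a drift depending on the particle's own (locally filtered) state and on the law of the population. This is exactly the form $b(x,\mu)$ of a McKean--Vlasov drift, with no diffusion term, so we are in the deterministic (Vlasov) case.

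\textbf{Step 2: well-posedness.} Lemma~\ref{supp:l1} applied componentwise gives local Lipschitz continuity in $h$. Averaging is $1$-Lipschitz: $\|\mu_h-\mu_{h'}\|\le N^{-1/2}\|\loc-\loc'\|$. It follows that $b$ is Lipschitz in the particle state and in $\mu$ in the $W_1$ metric on compacta, which is the standard hypothesis under which the mean-field limit exists \cite{sznitman1991topics}.

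\textbf{Step 3: Euler step and propagation of chaos.} The residual update $h\leftarrow h+\alpha f_{\mathrm{focal}}(h)$ is then, by definition, a forward-Euler step of the $N$-particle system. The $N$-particle system with empirical-measure coupling is precisely the object whose $N\to\infty$ limit the propagation-of-chaos theorem addresses \cite{mckean1966class,sznitman1991topics}.

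The main obstacle is this last claim taken literally. Propagation of chaos needs exchangeable particles (i.i.d.\ initial conditions) and a drift that depends on the particle only through its own state. Here the stencil $\mathcal{N}_p$ couples each particle to its neighbours, and image pixels are not i.i.d.

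I would first try to handle this by treating $\loc_{:,p}$ itself as the particle state. The local convolution is then a fixed, deterministic, translation-equivariant preprocessing, and exchangeability holds only in a weak, spatially averaged sense. I would then state the conclusion as \emph{consistency with the mean-field limit}: as the grid is refined, $\mu_h$ converges to the spatial integral, with an $O(N^{-1/2})$ fluctuation under a mixing assumption on pixel statistics. This is weaker than a full propagation-of-chaos theorem, and the proof should say explicitly that it rests on that added assumption.
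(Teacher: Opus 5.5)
Your Steps~1 and~3 follow the paper's proof. The paper recalls $\dot x_i=F(x_i,\mu_t)$, notes that AdaptiveAvgPool computes $\mu_h$ and that $W_g$ is broadcast, sets $F(u,m)=\PWMLP(\GELU(u+W_gm))$, and reads the four Stage-2 blocks as Euler steps of the $N=HW$-particle system. Your Step~2 has no counterpart there. The obstacle you raise is real, and the paper's proof does not resolve it. It reaches the form $F(x_i,\mu)$ only by calling the local branch ``a per-particle map'' and writing $f_{\mathrm{focal}}(h)_{:,i,j}=F(h_{:,i,j},\mu_h)$. Taken literally, that replaces $\loc_{:,i,j}$ by $h_{:,i,j}$ and drops the $3\times3$ stencil. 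A McKean--Vlasov particle system is equivariant under relabelling of particles, and the stencil breaks that equivariance. So your claim that Step~1 gives ``exactly the form $b(x,\mu)$'' is where both arguments are inexact. For the second clause, the paper proves no limit theorem: ``propagation-of-chaos discretization'' only describes the finite-$N$ system advanced by Euler steps.

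Your repair does not close the gap.
\begin{itemize}
\item Taking $\loc_{:,p}$ as the particle state moves the stencil rather than removing it. Write $w_\delta\in\R^C$ for the depthwise weights at offset $\delta$. Along the flow, $\tfrac{d}{dt}\loc_{:,p}=\sum_\delta w_\delta\odot F(\loc_{:,p+\delta},\mu_h)$, so the drift again depends on neighbours. $\loc$ is fixed preprocessing for one evaluation, not for the ODE.
\item The mixing-based law of large numbers for $\mu_h$ concerns the interaction term at one instant. It says nothing about the flow.
\item The exchangeability worry is misplaced. With a fixed input the system is deterministic, and the relevant tool is Dobrushin's $W_1$ stability estimate, for which your Step~2 Lipschitz bounds are the right hypotheses. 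It needs no i.i.d.\ initial data, only a per-particle drift.
\end{itemize}

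The stencil is therefore the only genuine obstruction, and you can prove the following exactly. With circular padding, $\mu_h=\bar w\odot\tfrac1N\sum_q h_{:,q}$ with $\bar w=\sum_\delta w_\delta$. This is a linear functional of the empirical measure $\tfrac1N\sum_q\delta_{h_{:,q}}$ of the actual particle states; zero padding adds an $O(1/H)$ boundary term. So the global coupling is genuinely mean-field, and $f_{\mathrm{focal}}$ is a McKean--Vlasov drift plus a short-range lattice interaction in the first argument of $F$. That interaction vanishes exactly for a $1\times1$ local branch. If $h$ is Lipschitz at the pixel scale (a regularity assumption, not a mixing one), it is an $O(1/H)$ perturbation of the true mean-field drift $F(\bar w\odot h_{:,p},\mu_h)$. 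That qualified statement, rather than a weakened ``consistency'' claim, is what the argument supports.
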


\begin{proof}
A McKean--Vlasov system reads $\dot{x}_i=F(x_i,\mu_t)$,
$\mu_t=\tfrac1N\sum_j\delta_{x_j(t)}$~\cite{sznitman1991topics}. The local branch
is a per-particle map; AdaptiveAvgPool computes
$\tfrac1{|\Omega|}\sum_{(i,j)}\loc_{:,i,j}=\mu_h$; the $1\times1$ conv $W_g$
applies a learned linear transform broadcast to every position. Thus
$f_{\mathrm{focal}}(h)_{:,i,j}=F(h_{:,i,j},\mu_h)$ with $F(u,m)=\PWMLP(\GELU(u+W_gm))$,
a McKean--Vlasov right-hand side with $N=|\Omega|=HW$ particles; the four Stage-2
blocks are four Euler steps with $\alpha=0.7$.
\end{proof}

\noindent\textbf{Three consequences follow.} (a) The coupling $W_g\mu_h$ is permutation
invariant, giving the FocalBlock a partially permutation-invariant inductive bias
suited to scene-level classification. (b) The mean-field summary is $\mathcal{O}(HW)$
and uses only AvgPool$+$Conv$1{\times}1$, both on the Akida/Loihi-2 operation
lists, unlike pairwise attention~\cite{vaswani2017attention}. (c) The construction
connects to infinite-width mean-field
theory~\cite{mei2018mean,chizat2018global}; the CNN extension is left as future
work.

\begin{figure*}[t]
  \centering
  \includegraphics[width=\linewidth]{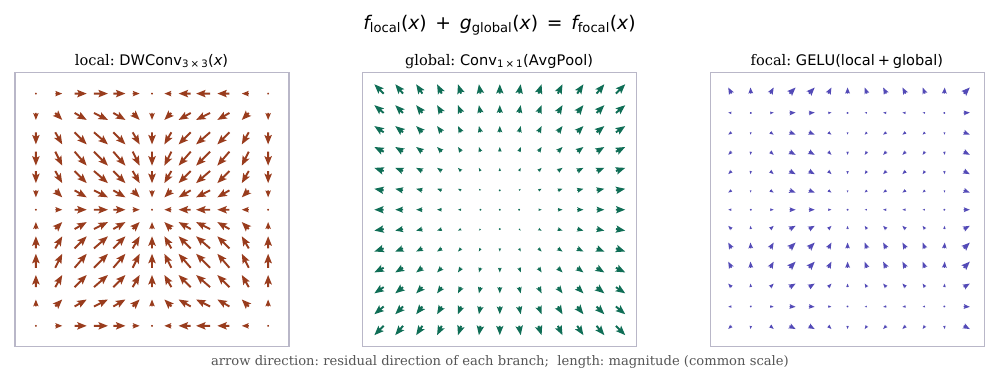}
  \caption{\textbf{What the FocalBlock right-hand side is built to do, and the
  expected effect.} The block field is the GELU-gated sum of a local branch
  ($\mathrm{DWConv}_{3\times3}$, which by itself drives features only toward neighborhood structure, with no long-range term) and a global branch
  ($\mathrm{Conv}_{1\times1}$ on a spatial average, a single position-independent
  vector broadcast everywhere, i.e.\ the first moment $\mu_h$ of Prop.~\ref{supp:p2}). Adding
  the mean-field term makes every output position depend on every input in one step (Prop.~\ref{supp:p6}), which pure stacked $3\times3$ convolutions reach only after
  ${\ge}15$ layers at this resolution; the additive (not multiplicative) form
  keeps the combined field bounded and gradient-friendly. The panels plot
  analytic branch models on a $2$-D cartoon domain to isolate the construction;
  they are not activations of a trained model.}
  \label{fig:focal-field}
\end{figure*}

\begin{figure*}[t]
  \centering
  \includegraphics[width=\linewidth]{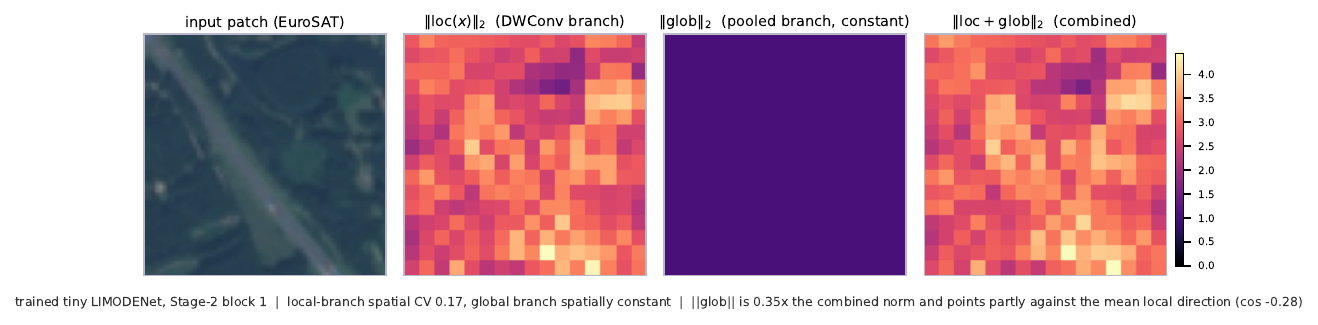}
  \caption{\textbf{The same local/global split, measured on a trained model.}
  Per-position channel-$\ell_2$ norm of the two branches of the first Stage-2
  FocalBlock of a trained \emph{tiny} LIMODENet (EuroSAT, Tiny-ImageNet
  pretraining then fine-tuning, seed~0), on one held-out EuroSAT patch; panels
  2--4 share a color scale. The local (DWConv) branch carries spatially varying
  structure (coefficient of variation $0.17$); the global branch is
  \emph{exactly} spatially constant, a single pooled vector $W_g\mu_h$ broadcast to every position (Prop.~\ref{supp:p2}), with norm $0.35\times$ the combined
  field, and it points partly against the mean local direction
  ($\cos=-0.28$), i.e.\ it acts as a scene-level corrective offset rather than a
  copy of the local response. This is the behavior
  Fig.~\ref{fig:focal-field} sketches, now on real activations.}
  \label{fig:focal-branches}
\end{figure*}

\subsection{Proposition 3: Contractivity under Dissipativity}
\begin{proposition}[Contractivity]
\label{supp:p3}
Let $f$ satisfy Lemma~1 with constant $L$ and the one-sided Lipschitz
(dissipativity) condition
$\langle f(x)-f(y),x-y\rangle\le-m\lVert x-y\rVert^2$ with $m>0$. Then the Euler
map $T(x)=x+\alpha f(x)$ obeys
$\lVert T(x)-T(y)\rVert\le\kappa\lVert x-y\rVert$ with
$\kappa=\sqrt{1-2\alpha m+\alpha^2L^2}$, and $\kappa<1$ iff $\alpha<2m/L^2$; the
iteration then converges geometrically to a unique fixed
point~\cite{soderlind2006logarithmic,hairer1993solving}.
\end{proposition}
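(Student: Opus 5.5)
The plan is to expand the squared distance between the two images under the Euler map and bound the cross term with dissipativity and the quadratic term with the Lipschitz constant. Fix $x,y$ in the compact set $K$ on which Lemma~\ref{supp:l1} supplies $L$, and write $d=x-y$ and $\Delta=f(x)-f(y)$. Then $T(x)-T(y)=d+\alpha\Delta$, so
\begin{equation*}
\lVert T(x)-T(y)\rVert^2=\lVert d\rVert^2+2\alpha\langle \Delta,d\rangle+\alpha^2\lVert\Delta\rVert^2 .
\end{equation*}
The one-sided condition gives $\langle\Delta,d\rangle\le -m\lVert d\rVert^2$. Lemma~\ref{supp:l1} gives $\lVert\Delta\rVert\le L\lVert d\rVert$. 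Since $\alpha>0$ both substitutions are in the right direction, and together they yield $\lVert T(x)-T(y)\rVert^2\le(1-2\alpha m+\alpha^2L^2)\lVert d\rVert^2$.

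Two small checks are needed before taking square roots. First, the radicand is nonnegative. By Cauchy--Schwarz, $-m\lVert d\rVert^2\ge\langle\Delta,d\rangle\ge -L\lVert d\rVert^2$, so $m\le L$. Hence $1-2\alpha m+\alpha^2L^2\ge(1-\alpha m)^2\ge0$, and $\kappa$ is real.

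Second, the threshold. Since $\alpha>0$, we have $\kappa<1$ iff $\alpha^2L^2<2\alpha m$, i.e.\ iff $\alpha<2m/L^2$. This settles the sufficiency direction and the ``iff'' for the stated quantity $\kappa$. The equivalence concerns the bound $\kappa$ itself, not the sharp Lipschitz constant of $T$, and I would state it that way.

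For the fixed point, I would invoke Banach's contraction principle. The main obstacle is that Lemma~\ref{supp:l1} gives $L$ only on a compact $K$, so completeness and self-mapping must be secured. I would try two routes, in this order:
\begin{itemize}
\item Assume $L$ and the dissipativity constant hold globally on $\R^n$, which is complete. Then $T$ is a global contraction and Banach's principle gives a unique fixed point $x^\ast$ with $\lVert T^k x_0-x^\ast\rVert\le\kappa^k\lVert x_0-x^\ast\rVert$.
\item Otherwise, take $K$ to be a closed ball $\bar B(x_0,r)$. The bound $\lVert T(x)-x_0\rVert\le\kappa r+\lVert T(x_0)-x_0\rVert$ shows that $T(K)\subseteq K$ once $r\ge\lVert T(x_0)-x_0\rVert/(1-\kappa)$. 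Closed subsets of $\R^n$ are complete, so Banach's principle applies on $K$.
\end{itemize}
Either route gives geometric convergence at rate $\kappa$ and uniqueness of the fixed point, on $\R^n$ or within $K$ respectively. The second route relies on the same bounded-trajectory assumption already used in Lemma~\ref{supp:l1}.
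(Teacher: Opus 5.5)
Your proposal is correct and follows the paper's proof: expand $\lVert T(x)-T(y)\rVert^2$, bound the cross term by dissipativity and the quadratic term by the Lipschitz constant, read off $\kappa<1\Leftrightarrow\alpha<2m/L^2$, and invoke Banach. Your additions go beyond what the paper writes: showing $m\le L$ so that $\kappa$ is real, and securing a complete, $T$-invariant domain for Banach. In the ball route, though, $\kappa$ depends on the ball through $L(\theta,K)$, so the radius condition cannot always be arranged; it is really an assumption that the constants hold on a large enough ball.
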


\begin{proof}
$\lVert T(x)-T(y)\rVert^2=\lVert x-y\rVert^2+2\alpha\langle f(x)-f(y),x-y\rangle+\alpha^2\lVert f(x)-f(y)\rVert^2$.
Applying dissipativity to the middle term and $\lVert f(x)-f(y)\rVert^2\le L^2\lVert x-y\rVert^2$
to the last gives $\lVert T(x)-T(y)\rVert^2\le(1-2\alpha m+\alpha^2L^2)\lVert x-y\rVert^2$.
Setting $\kappa^2=1-2\alpha m+\alpha^2L^2$, $\kappa<1\Leftrightarrow\alpha<2m/L^2$;
Banach's theorem gives the fixed point.
\end{proof}

\noindent\textbf{This consequence remains heuristic.} Dissipativity is \emph{not} enforced in
training. Cross-entropy plausibly induces an implicit pressure toward it, but
whether the strict inequality holds along trajectories is an open empirical
question, to be checked by numerically estimating the one-sided Lipschitz constant.
We therefore treat Proposition~\ref{supp:p3} as a finite-step stability statement rather than a
fixed-point-selection claim, avoiding tension with the injectivity role of
Proposition~\ref{supp:p5}. GroupNorm, weight decay (widening $\alpha<2m/L^2$), and
cosine-annealed learning rates all encourage the discrete iteration to stay inside
its stability region.

\subsection{Proposition 4: The Composite Field $g=f\circ\GN$}
\begin{proposition}[Composite field]
\label{supp:p4}
Let $g(x;\theta):=f(\GN(x);\theta)$. \emph{(i)} LIMODENet's Euler pass is
$T(x)=x+\alpha g(x)$ and its RK-2 pass is exactly the explicit midpoint method applied to $g$:
$k_1=g(x_t)$, $x_{\mathrm{mid}}=x_t+\tfrac{\alpha}{2}k_1$, $k_2=g(x_{\mathrm{mid}})$,
$x_{t+1}=x_t+\alpha k_2$. \emph{(ii)} All conclusions of
Propositions~\ref{supp:p1}--\ref{supp:p3} and Theorem~\ref{supp:t1} transfer to
$g$, with $\Lip(g)|_K\le\Lip(f)|_{\GN(K)}\,L_{\GN}(K)$.
\end{proposition}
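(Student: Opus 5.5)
Part (i) is a direct unfolding of definitions. I will substitute $g(x)=f(\GN(x))$ into the Euler update $x_{\ell+1}=x_\ell+\alpha f(\GN(x_\ell))$ and into the two-stage update \eqref{eq:heun}. Term by term, $k_1=f(\GN(x))=g(x)$, $m=x+\tfrac{\alpha}{2}g(x)$, $k_2=f(\GN(m))=g(m)$, and $x\leftarrow x+\alpha g(m)$, which is literally the explicit midpoint rule on $g$. The one point to stress is that $\GN$ is applied afresh to the midpoint $m$ and not reused from $x$. This is exactly what \eqref{eq:heun} writes, so the second stage evaluates $g$, not $f\circ\GN$ frozen at $x_t$.

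For the Lipschitz bound in (ii), I will use that the Lipschitz constant of a composition is at most the product of the constants. Fix a compact $K$. The map $\GN$ is continuous, since its regularizer $\varepsilon=10^{-5}>0$ keeps the denominator away from zero by A2. Hence $\GN(K)$ is compact. Lemma~\ref{supp:l1}, applied on the compact set $\GN(K)$, gives a finite $\Lip(f)|_{\GN(K)}$, and the GN step of that lemma's proof gives $L_{\GN}(K)$. For $x,y\in K$ we then have $\lVert g(x)-g(y)\rVert\le\Lip(f)|_{\GN(K)}\lVert\GN(x)-\GN(y)\rVert\le\Lip(f)|_{\GN(K)}L_{\GN}(K)\lVert x-y\rVert$.

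Transferring the earlier results is then a check of hypotheses. Lemma~\ref{supp:l1} used only local Lipschitzness, so well-posedness follows from the bound just proved. Proposition~\ref{supp:p3} needs a Lipschitz constant $L$ plus dissipativity. The bound supplies $L$, and dissipativity must now be assumed for $g$ rather than $f$; it is a hypothesis of that proposition, not something to be derived. Theorem~\ref{supp:t1} needs assumption (A) with $\lVert J_g g\rVert$ bounded, together with the cost model (B). The cost model is unaffected, because $\GN$ adds only $O(H^2C)$ work per evaluation, so the extra cost still scales as $H_k^2C$.

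The main obstacle is Proposition~\ref{supp:p1}, which assumes the field is $C^2$. Here $g=f\circ\GN$ is smooth whenever $f$ is: GELU is smooth, convolutions are linear, and GN is a rational function of $x$ composed with $\sqrt{\sigma^2+\varepsilon}$, which is smooth because $\varepsilon>0$. By the chain rule, $g\in C^2$ with $J_g=J_f(\GN(x))\,J_{\GN}(x)$. The Taylor argument of Proposition~\ref{supp:p1} then goes through verbatim with $M=\sup\lVert\partial^2 g\rVert$ over a compact neighbourhood of the trajectory. That neighbourhood is bounded because, under Lemma~\ref{supp:l1}, trajectories stay in a compact set. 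I would state this explicitly, noting that without $\varepsilon>0$ the map $\GN$ fails to be smooth at constant inputs and the transfer would break there.
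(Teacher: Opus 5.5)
Your proposal is correct and follows the paper's own argument. Part (i) unfolds the update with $\GN$ recomputed at the midpoint, which the paper does by matching the forward-pass code line by line. Part (ii) composes Lipschitz constants and notes $g\in C^2$, so Proposition~\ref{supp:p1}, Proposition~\ref{supp:p3} (with $(m_g,L_g)$) and Theorem~\ref{supp:t1} carry over. You also supply details the paper only asserts: compactness of $\GN(K)$, and smoothness of $\GN$ from $\varepsilon>0$.

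One refinement: for the truncation-error bound, a closed ball around $x_t$ already suffices for small $\alpha$. You therefore need not rely on Lemma~\ref{supp:l1}'s informal claim that trajectories stay in a compact set.
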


\begin{proof}
\emph{(i)} We match the forward pass line by line: \texttt{h=norm(x)}$\Rightarrow
\GN(x_t)$; \texttt{k1=mlp(act(dw(h)))}$\Rightarrow g(x_t)$;
\texttt{mid=x+(alpha/2)k1}; \texttt{k2=mlp(act(dw(norm(mid))))}$\Rightarrow g(x_{\mathrm{mid}})$;
\texttt{return x+alpha*k2}. This is the midpoint method on $g$; the Euler branch computes
$x+\alpha g(x)$. \emph{(ii)} Composition of Lipschitz maps is Lipschitz, giving the
stated bound (finite by Lemma~\ref{supp:l1}); $g\in C^2$ so the LTE estimates apply
verbatim; Proposition~\ref{supp:p3} transfers with $(m_g,L_g)$; Theorem~\ref{supp:t1} depends only on
stage-wise smoothness and cost.
\end{proof}

\noindent\textbf{The step sizes belong to the composite ODE.} They are step
sizes for $\dot x=g(x)$, so empirical estimates of $L_g,m_g$ must include
GroupNorm in the Jacobian. This correction replaces an earlier (incorrect) claim
that GroupNorm at the midpoint evaluation breaks the RK-2 step.

\subsection{Proposition 5: Injectivity Preserves Semantic Information}
\begin{proposition}[Injectivity $\Rightarrow$ information preservation]
\label{supp:p5}
Let $g=f\circ\GN$ have local Lipschitz constant $L_g$. \emph{(i)} If
$\alpha L_g<1$ the Euler map $T(x)=x+\alpha g(x)$ is
injective~\cite{behrmann2019invertible}. \emph{(ii)} The midpoint map is injective whenever
\begin{equation}
  \alpha L_g\big(1+\tfrac{\alpha}{2}L_g\big)<1.
  \label{eq:s-heuninj}
\end{equation}
\emph{(iii)} If $T$ is injective then for any random variable $S$ (in particular
the label $Y$),
\begin{equation}
  I(S;T(X))=I(S;X).
  \label{eq:s-dpi}
\end{equation}
\end{proposition}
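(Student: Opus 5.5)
Each part gets its own short argument, with all statements read on the compact set $K$ where Lemma~\ref{supp:l1} supplies $L_g$. Interpret $L_g$ as a Lipschitz constant of $g$ valid on a region containing all the points compared, including the midpoint evaluations $x+\tfrac{\alpha}{2}g(x)$; Proposition~\ref{supp:p4}(ii) gives $L_g\le \Lip(f)|_{\GN(K)}L_{\GN}(K)$.

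\emph{(i) Euler.} I will follow the i-ResNet argument of~\cite{behrmann2019invertible}. Suppose $T(x)=T(y)$. Then $x-y=-\alpha\,(g(x)-g(y))$, so
$\lVert x-y\rVert\le \alpha L_g\lVert x-y\rVert$. Since $\alpha L_g<1$, this forces $x=y$. Equivalently, $x$ is the unique fixed point of the contraction $z\mapsto T(x)-\alpha g(z)$, by Banach's theorem.

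\emph{(ii) Midpoint.} Write the midpoint map as $T(x)=x+\alpha\,g(m(x))$ with $m(x)=x+\tfrac{\alpha}{2}g(x)$. First, $m$ is Lipschitz with constant $1+\tfrac{\alpha}{2}L_g$. Hence $x\mapsto \alpha\,g(m(x))$ is Lipschitz with constant $\alpha L_g(1+\tfrac{\alpha}{2}L_g)$. Under \eqref{eq:s-heuninj} this residual is therefore a strict contraction, and the same cancellation as in (i) gives injectivity. The main subtlety in (i) and (ii) is locality: if $L_g$ is only a local constant, the compared points and their midpoints must stay in the region where $L_g$ holds. I will state this explicitly as a restriction of the domain to $K$ (or to a convex neighbourhood of it), not claim global injectivity.

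\emph{(iii) Information.} This is the step I expect to be the real obstacle, because it must be made measure-theoretically precise. The plan has three steps.
\begin{enumerate}
\item The data-processing inequality applied to the Markov chain $S\to X\to T(X)$ gives $I(S;T(X))\le I(S;X)$.
\item For the reverse inequality, I need a \emph{measurable} left inverse $T^{-1}$ on the image $T(\R^n)$, so that $S\to T(X)\to X$ is also a Markov chain. The data-processing inequality then gives $I(S;X)\le I(S;T(X))$.
\item Combining the two inequalities yields \eqref{eq:s-dpi}.
\end{enumerate}

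Measurability of $T^{-1}$ in step 2 follows from continuity and injectivity of $T$. By the Lusin--Souslin theorem, a continuous injective map between Polish spaces sends Borel sets to Borel sets, so $T^{-1}$ is Borel on $T(\R^n)$. In the Lipschitz case one can be more concrete: the inverse is the limit of the fixed-point iteration from (i), which is continuous, so measurability is immediate. Finally, I will note that equality holds for any $S$, including the label $Y$. I will also note that the argument concerns only the \emph{amount} of information, not whether a linear probe can extract it, which is the caveat the paper addresses by probing.
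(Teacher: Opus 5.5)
Your proposal is correct and follows the paper's proof step for step: the i-ResNet contradiction gives (i), and the bound $\Lip(g\circ m)\le L_g\big(1+\tfrac{\alpha}{2}L_g\big)$ applied to the residual $\alpha\,g(m(\cdot))$ gives (ii). Part (iii) is, as in the paper, forward plus reverse data processing through $X=T^{-1}(T(X))$. Your additions are refinements the paper leaves implicit, not a different route: you keep the compared points and midpoints inside the region where the local constant $L_g$ holds, and you justify measurability of $T^{-1}$. For the latter, the bound $\lVert T(x)-T(y)\rVert\ge(1-\kappa)\lVert x-y\rVert$, with $\kappa<1$ the residual's Lipschitz constant, is more direct than Lusin--Souslin. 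It makes $T^{-1}$ Lipschitz on the image and avoids your fixed-point-limit argument, whose iterates need not stay in $K$.
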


\looseness=-1 \begin{proof}
\emph{(i)} If $T(x)=T(y)$ with $x\ne y$ then $x-y=-\alpha(g(x)-g(y))$, so
$\lVert x-y\rVert=\alpha\lVert g(x)-g(y)\rVert\le\alpha L_g\lVert x-y\rVert<\lVert x-y\rVert$,
a contradiction~\cite{behrmann2019invertible}. \emph{(ii)} Write $T(x)=x+\alpha g(M(x))$
with $M(x)=x+\tfrac{\alpha}{2}g(x)$; then
$\Lip(g\circ M)\le L_g(1+\tfrac{\alpha}{2}L_g)$, and applying (i) to the residual
$\alpha g(M(\cdot))$ gives \eqref{eq:s-heuninj}. \emph{(iii)} The data-processing
inequality~\cite{cover2006elements} gives $I(S;T(X))\le I(S;X)$; since $T$ is
injective, $X=T^{-1}(T(X))$ is a deterministic function of $T(X)$, so the reverse
DPI gives $I(S;X)\le I(S;T(X))$, and equality \eqref{eq:s-dpi} follows: no label information is lost.
\end{proof}

\looseness=-1 \noindent\textbf{This has three consequences.} (a) The chosen step sizes widen the margin over a
ResNet ($\alpha=1$, needing $L_g<1$): Stage~1 ($\alpha{=}0.5$) is injective iff
$L_g<2.0$, Stage~2 ($\alpha{=}0.7$) iff $L_g<1.43$, Stage~3 (midpoint, $\alpha{=}0.5$,
solving \eqref{eq:s-heuninj}) iff $L_g<1.46$. (b) A dimension audit shows that
the stem
\emph{expands} $3{\times}64{\times}64=12{,}288$ to $128{\times}32{\times}32=131{,}072$
($10.7\times$); the only compressive backbone op is the single downsample
($131{,}072\!\to\!32{,}768$) and the only deliberate semantic compression is the
head's pool ($32{,}768\!\to\!128$). Hence $I(Y;x_\ell)$ is conserved through every
ODE block and reduced only at two chosen points, i.e.\ a \emph{late-compression} architecture, consistent with i-RevNet~\cite{jacobsen2018irevnet}. (c) Injectivity
concerns Shannon information, not \emph{linear accessibility}; we therefore pair
the claim with layer-wise probing (\cref{sec:exp}). A per-block Lipschitz estimate
via power iteration on Jacobian--vector products converts the conditional
guarantee into a verified property.

\subsection{Proposition 6: Global Receptive Field at Sub-Attention Cost}
\begin{proposition}[Global RF, sub-attention cost]
\label{supp:p6}
Let a feature map have $N=HW$ positions and $C$ channels. \emph{(i)} After a single
FocalBlock every output position depends on every input position, whereas a
$k\times k$ convolution stack needs $\ell\ge2(H-1)/(k-1)$ layers for the same
coverage~\cite{luo2016understanding} ($15$ layers at $H{=}16,k{=}3$). \emph{(ii)}
Its token-mixing cost is
$\Cost_{\mathrm{focal}}=\mathcal{O}(NCk^2+NC+C^2)$, linear in $N$, versus
$\Cost_{\mathrm{MHSA}}=\mathcal{O}(N^2C+NC^2)$~\cite{vaswani2017attention}.
\end{proposition}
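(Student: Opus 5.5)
The plan is to prove (i) by a direct Jacobian computation through the global branch, then prove (ii) by counting operations branch by branch. For the convolution half of (i), I will rely on the standard receptive-field growth argument.

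\emph{Part (i), FocalBlock.} Write the block output at position $p$ as $y_p=\PWMLP(\GELU(\loc_p+W_g\mu_h))$ with $\mu_h=\tfrac1N\sum_q\loc_q$ (Prop.~\ref{supp:p2}). The residual update gives $x'_p=x_p+\alpha y_p$, with $h=\GN(x)$.

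For $p\neq q$, the chain rule yields
\[
\frac{\partial x'_p}{\partial x_q}=\alpha\,J_{\PWMLP}\,D_{\GELU}\Big(\frac{\partial \loc_p}{\partial x_q}+\frac{1}{N}W_g\sum_{r}\frac{\partial \loc_r}{\partial x_q}\Big).
\]
The second term is $\tfrac1N W_g\big(\sum_r W_{\mathrm{dw}}\text{-taps}\big)J_{\GN}$, which does not depend on $p$. This pooled path is the one that reaches every position. Group normalization with one group itself already couples all positions through the mean and variance, but I will not lean on that, so that the claim attaches to the FocalBlock design.

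To make "depends on" precise, I will show that the Jacobian is not identically zero as a function of the parameters. It is a real-analytic expression in $(W_g,W_{\mathrm{dw}},W_1,W_2)$, and GELU' never vanishes identically. Choosing, for example, $W_{\mathrm{dw}}$ equal to the center tap only, $W_g=I$, and generic PWMLP weights gives a nonzero block, so the Jacobian is nonzero for generic weights. This genericity formulation is the main subtlety: the literal claim "for every weight" is false, since $W_g=0$ kills the path. I will state the result for generic parameters.

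\emph{Part (i), convolution stack.} Each $k\times k$ layer enlarges the receptive field by $k-1$ pixels per axis, so after $\ell$ layers it spans $1+\ell(k-1)$. Two corner positions are $H-1$ apart. For each to see the other from a centered field, one needs a half-width of $\ell(k-1)/2\ge H-1$, which gives $\ell\ge 2(H-1)/(k-1)$. With $H=16$ and $k=3$ this is $15$ layers. Here I will cite \cite{luo2016understanding}.

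\emph{Part (ii), costs.} I will count the token-mixing operations only:
\begin{itemize}
\item the depthwise conv costs $NCk^2$;
\item the average pool costs $NC$;
\item the $1\times1$ conv on the pooled $C$-vector costs $C^2$;
\item the broadcast-add costs $NC$.
\end{itemize}
The total is $\mathcal{O}(NCk^2+NC+C^2)$, linear in $N$.

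For MHSA, the $Q,K,V$ and output projections cost $\mathcal{O}(NC^2)$. Forming $QK^\top$ over heads costs $N^2C$, as does the attention-weighted sum of $V$, giving $\mathcal{O}(N^2C+NC^2)$.

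The quoted ratio of about $96\times$ at $N=256$, $C=128$, $k=3$ then follows by substitution. This requires fixing constant conventions, and I will do that in a remark rather than in the proposition.
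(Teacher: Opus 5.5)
Your proposal is correct and follows essentially the paper's route. For (i) it uses the generically nonzero pooled-branch term $J_{\PWMLP}\cdot\GELU'\cdot W_g\cdot\tfrac1N\cdot w_{\mathrm{dw}}$ and the growth of the Chebyshev dependence radius by $(k-1)/2$ per layer for the convolution stack; for (ii) it uses the same branch-by-branch counts ($NCk^2$, $NC$, $C^2$ against $2N^2C+4NC^2$); and your real-analyticity argument makes the paper's ``generically'' precise.

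One small inconsistency needs fixing. You differentiate with respect to $x$ through one-group GN, so setting $W_g=0$ does not actually remove the dependence, and your exhibited instance does not isolate the pooled path. Instead, differentiate the branch output with respect to $h=\GN(x)$, as the paper's term (which has no $J_{\GN}$ factor) implicitly does; outside the $k\times k$ window the Jacobian is then exactly the pooled term.
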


\begin{proof}
\emph{(i)} $\mathrm{glob}=W_g\mu(\loc)$ with $\mu(\loc)=\tfrac1N\sum_{(i,j)}\loc_{:,i,j}$
sums over all positions; the output
$\PWMLP(\GELU(\loc_{:,i,j}+\mathrm{glob}))$ therefore depends on every input, the
Jacobian $\partial\,\mathrm{out}_{(i,j)}/\partial\,\mathrm{in}_{(i',j')}$ containing
the generically nonzero term $J_{\PWMLP}\!\cdot\GELU'\!\cdot W_g\!\cdot\tfrac1N\!\cdot w_{\mathrm{dw}}$.
For a pure stack, each layer extends the Chebyshev dependence radius by
$(k-1)/2$, so full corner-to-corner coverage needs $\ell\ge2(H-1)/(k-1)$
layers~\cite{luo2016understanding}. \emph{(ii)} DWConv costs $NCk^2$, AvgPool $NC$,
and the pooled $1\times1$ conv $C^2$; MHSA costs $2N^2C$ (for $QK^\top$ and its
application to $V$) plus $4NC^2$ for projections. At $N{=}256$, $C{=}128$ and
$k{=}3$ these evaluate to
$\Cost_{\mathrm{focal}}\approx3.5\times10^5$ versus
$\Cost_{\mathrm{MHSA}}\approx3.4\times10^7$, a reduction of about $96\times$.
\end{proof}

\looseness=-1 \noindent\textbf{We draw three consequences.} (a) One FocalBlock removes CNN locality: four
sequential blocks give four local$\leftrightarrow$global exchanges that a $3\times3$
CNN could not achieve in four layers ($\ge15$ needed). (b) The $\approx96\times$ saving grows with resolution (${\approx}250\times$ at $32\times32$, where the quadratic attention term dominates but the projection term does not), and all
ops are Akida/Loihi-2 native. (c) One honest limit remains: the mean-field
coupling supplies only the first moment of the feature distribution, which is weaker than pairwise attention for the dense-prediction tasks we do not target here.

%%%%%%%%%%%%%%%%%%%%%%%%%%%%%%%%%%%%%%%%%%%%%%%%%%%%%%%%%%%%%%%%%%%%%%%%
\section{Placement in the Main Paper}
\label{supp:placement}

Proposition~\ref{supp:p1} (LTE), Lemma~\ref{supp:l1} (well-posedness) and
Theorem~\ref{supp:t1} (Pareto placement) underpin the integrator choices in
\cref{sec:method}; Proposition~\ref{supp:p5} (injectivity) and
Proposition~\ref{supp:p6} (global RF) are stated there as the two load-bearing
results and are the design's core claims. Propositions~\ref{supp:p2}--\ref{supp:p4}
provide the mean-field interpretation, the finite-step stability statement, and the
composite-field bookkeeping that makes the other results apply to the
\emph{implemented} network. The injectivity margins of
Proposition~\ref{supp:p5}(a) and the Lipschitz estimates they require are verified
empirically in \cref{sec:exp}.

%%%%%%%%%%%%%%%%%%%%%%%%%%%%%%%%%%%%%%%%%%%%%%%%%%%%%%%%%%%%%%%%%%%%%%%%
\section{Above the Decoding Threshold: Two Null Results}
\label{supp:abovecliff}

\Cref{sec:recon-compare} restricts the encoder comparison to received signals below
the link's decoding threshold. This appendix reports the two settings above it,
$3$ and $4$\,dB, in full. Neither supports a claim, and we include them so that the
operating envelope is documented rather than implied.

\paragraph{Aggregate results.}
Table~\ref{supp:tab:abovecliff} gives the same three-seed, iso-parameter,
$60$-epoch protocol used throughout, scored with the neutral Spiking-CNN judge.
At $3$\,dB the three architectures fall within $0.38$\,dB and $0.12$\,pp of one
another; at $4$\,dB, within $0.05$\,dB and $0.19$\,pp. In both settings the ranking
is unstable: the U-Net is nominally first on PSNR at $3$\,dB and on Top-1 at $4$\,dB, and of the two bottleneck models the CNN-AE is nominally ahead on Top-1 at $4$\,dB.

\paragraph{Why ``tie'' needs a formal test, and why the tolerance must be
calibrated independently.} Non-overlapping error bars can show a difference;
overlapping bars cannot show a tie: the absence of a detected difference is not
evidence of equivalence, only of an underpowered test or a genuinely small effect,
and the two are indistinguishable from the bars alone. We test equivalence
directly with two one-sided tests
(TOST)~\cite{schuirmann1987comparison,lakens2017equivalence}: fixing a tolerance
$\delta_0$, a comparison is declared equivalent only when the $90\%$ confidence
interval of the mean difference falls entirely inside $[-\delta_0,\delta_0]$.

\looseness=-1 $\delta_0$ must be fixed independently of the comparisons it will be applied to, or
the test is circular. We calibrate it against two comparisons from the ablation
sweep (\cref{sec:ablations}) whose separation status was established for a different,
theoretical reason before this exercise: a \emph{positive} control, the learned-$\alpha$ variant versus the reference (the ablation's own finding is that any sub-unit $\alpha$ suffices, i.e.\ a negligible difference is expected on independent grounds, observed diff $0.04$\,dB/$0.04$\,pp), and a \emph{negative} control, the spectral-normalized variant versus the reference (the ablation's largest, most robust effect, observed diff $3.22$\,dB/$4.97$\,pp). Setting
$\delta_0$ a quarter of the way from the positive- to the negative-control
magnitude gives $\delta_0{=}0.84$\,dB / $1.27$\,pp, and we verify by construction that this bound does \emph{not} call the negative control equivalent, so it retains the power to detect a real effect of that magnitude.

Under this calibrated bound, all four above-threshold comparisons in
Table~\ref{supp:tab:abovecliff} are confirmed equivalent, including
LIMODENet-versus-CNN-AE at $3$\,dB ($+0.35$\,dB, $90\%$ CI
$[+0.25,+0.46]$\,dB, inside $\pm0.84$). We flag this because an earlier pass of
this analysis, using an uncalibrated $\delta_0{=}0.2$\,dB chosen by eye from the
smallest margin this paper calls separating elsewhere, reported that same
comparison as a confirmed \emph{separation}. It was not: a plausible-looking but
uncalibrated tolerance manufactured a false positive by being narrower than the
comparison's own noise floor. We keep this correction visible rather than silently
fixing it, since it is itself evidence for the point being made: an
equivalence claim is only as good as the procedure that set its tolerance, and
``half the
smallest separating margin'' is not such a procedure. \textbf{All four above-cliff
comparisons stand as ties under a formally calibrated test}, unlike the
downstream comparison in \cref{sec:recon-compare}, which the same procedure leaves
\emph{inconclusive} rather than confirmed either way.

\begin{table}[h]
  \centering
  \footnotesize
  \setlength{\tabcolsep}{4pt}
  \begin{tabular}{llccc}
    \toprule
    & Reconstructor & PSNR (dB) & SSIM & Top-1 (\%) \\
    \midrule
    \multirow{3}{*}{\rotatebox{90}{$3$\,dB}}
      & LIMODENet-AE & 46.42{\tiny$\pm$0.01} & 0.9974{\tiny$\pm$0.0000} & 87.60{\tiny$\pm$0.09} \\
      & U-Net        & 46.45{\tiny$\pm$0.05} & 0.9975{\tiny$\pm$0.0001} & 87.66{\tiny$\pm$0.10} \\
      & CNN-AE       & 46.07{\tiny$\pm$0.06} & 0.9969{\tiny$\pm$0.0001} & 87.54{\tiny$\pm$0.05} \\
    \midrule
    \multirow{3}{*}{\rotatebox{90}{$4$\,dB}}
      & LIMODENet-AE & 39.60{\tiny$\pm$0.00} & 0.9966{\tiny$\pm$0.0000} & 87.31{\tiny$\pm$0.06} \\
      & U-Net        & 39.60{\tiny$\pm$0.00} & 0.9967{\tiny$\pm$0.0001} & 87.50{\tiny$\pm$0.07} \\
      & CNN-AE       & 39.55{\tiny$\pm$0.00} & 0.9963{\tiny$\pm$0.0001} & 87.45{\tiny$\pm$0.08} \\
    \bottomrule
  \end{tabular}
  \caption{\textbf{Above the decoding threshold no architecture leads.} Three
  seeds, iso-parameter, $60$ epochs, val-selected, neutral judge. A formal
  equivalence test calibrated against independent controls (TOST, see below)
  confirms all four comparisons (LIMODENet versus the U-Net and versus the CNN-AE, at both $3$ and $4$\,dB) as ties. Compare with the sub-threshold margins of up
  to $+1.53$\,dB and $+9.8$\,pp in \cref{supp:tab:snr,supp:tab:quality}.}
  \label{supp:tab:abovecliff}
\end{table}

\paragraph{Why the $4$\,dB aggregate is not a meaningful quantity.}
\looseness=-1 Both settings retain a small tail of frames the decoder failed to deliver: $8$ of
$8090$ test images at $3$\,dB ($0.10\%$) and $16$ at $4$\,dB ($0.20\%$). Because we
report PSNR from the summed squared error over the whole test set, these few frames
dominate it. At $4$\,dB the $16$ failed frames account for $93.9\%$ of the total
squared error, so the reported $39.6$\,dB is a measurement of those $16$ images
rather than of restoration quality on the other $8074$. This also explains the
otherwise puzzling ordering that $4$\,dB scores $7$\,dB \emph{below} $3$\,dB despite
being the milder link, and the unusually small seed spread at $4$\,dB ($\pm0.002$\,dB), since most of the metric is pinned by frames that no model alters.

\paragraph{Stratified results, and a recoverability threshold.}
Splitting the test set by received quality (Table~\ref{supp:tab:strat}) separates
the two effects. At $3$\,dB the failed frames arrive at $22.1$\,dB and are partly
recoverable: LIMODENet and the U-Net both restore them by ${\approx}{+}7.2$\,dB and
the CNN-AE by $+4.2$. At $4$\,dB the failed frames arrive at $14.0$\,dB and are
recovered by \emph{exactly nothing}: all three architectures reproduce the input to
two decimal places. Somewhere between $14$ and $22$\,dB of received quality, a frame
stops being restorable at all, which is a property of the residual information in
the signal rather than of any architecture.

\begin{table}[h]
  \centering
  \footnotesize
  \setlength{\tabcolsep}{1.5pt}
  \begin{tabular}{llcccc}
    \toprule
    & & \multicolumn{2}{c}{failed frames} & \multicolumn{2}{c}{remaining frames} \\
    \cmidrule(lr){3-4}\cmidrule(lr){5-6}
    & Reconstructor & in$\to$out & gain & in$\to$out & gain \\
    \midrule
    \multirow{3}{*}{\rotatebox{90}{$3$\,dB}}
      & LIMODENet-AE & $22.10\to29.25$ & $+7.15$ & $50.48\to52.25$ & $+1.77$ \\
      & U-Net        & $22.10\to29.27$ & $+7.16$ & $50.48\to52.39$ & $+1.91$ \\
      & CNN-AE       & $22.10\to26.34$ & $+4.24$ & $50.48\to51.55$ & $+1.07$ \\
    \midrule
    \multirow{3}{*}{\rotatebox{90}{$4$\,dB}}
      & LIMODENet-AE & $14.04\to14.04$ & $-0.00$ & $50.78\to52.11$ & $+1.33$ \\
      & U-Net        & $14.04\to14.04$ & $-0.00$ & $50.78\to52.08$ & $+1.31$ \\
      & CNN-AE       & $14.04\to14.04$ & $-0.00$ & $50.78\to51.51$ & $+0.74$ \\
    \bottomrule
  \end{tabular}
  \caption{\looseness=-1 \textbf{Stratified by received quality: input\,$\to$\,output PSNR (dB),
  seed~0.} ``Failed frames'' are received images below $30$\,dB
  ($8$ of $8090$ at $3$\,dB, $16$ at $4$\,dB); the remainder arrive essentially
  lossless. At $4$\,dB the failed frames are not recovered at all by any of the three architectures.}
  \label{supp:tab:strat}
\end{table}

\paragraph{What these nulls indicate.}
Read together with \cref{supp:tab:quality,supp:tab:snr}, the pattern is consistent rather than
merely negative. LIMODENet's advantage over the \emph{skip-connection} U-Net appears
only below the decoding threshold ($+1.07$\,dB at $1$\,dB and $+0.73$ at $2$\,dB) and vanishes above it, at both $3$ and $4$\,dB and on both strata. That is
what the bottleneck argument of \cref{sec:recon-compare} predicts: routing
high-resolution detail around a bottleneck costs nothing when the input retains that
detail, so the two designs should coincide on a near-lossless link, and should
diverge only when information has actually been destroyed and must be preserved
through a single path. The advantage over the \emph{same-class} CNN-AE, by contrast,
persists in reduced form above the threshold ($+1.33$ versus $+0.74$\,dB on
unfailed frames at $4$\,dB), consistent with it being a capacity-and-design
difference within the bottleneck family rather than a bottleneck-versus-skip
difference. We report these as observations, not as claims: the effects above the
threshold are small, and the failed-frame strata contain only $8$ and $16$ images.

\section{Per-Block Injectivity Measurements}
\label{supp:injectivity}

\looseness=-1 The layer-wise probing curve that establishes empirical information preservation is Fig.~\ref{fig:probing} in the main paper; this section gives the per-block Lipschitz constants and inversion tests that accompany it.

\begin{figure*}[t]
  \centering
  \includegraphics[width=\linewidth]{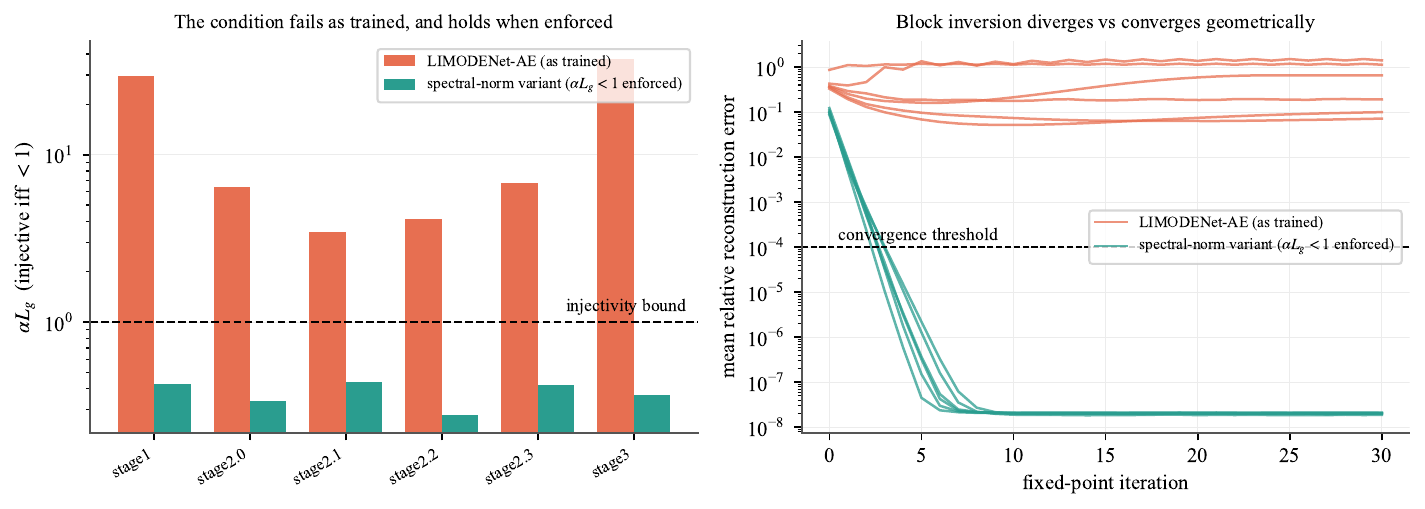}
  \caption{\textbf{The injectivity condition fails everywhere in the trained
  model, holds everywhere when enforced, and costs $3.22$\,dB to enforce.}
  \emph{Left:} per-block $\alpha L_g$, estimated by JVP/VJP power iteration at
  real operating points ($8$ points per block, $22$ iterations); the axis is
  logarithmic because the trained values exceed the bound by one to two orders of
  magnitude. \emph{Right:} i-ResNet fixed-point inversion of each block. The
  trained model's error is flat or growing, because the iteration is not a
  contraction and so has no fixed point to reach, whereas the constrained
  variant falls
  geometrically to machine precision in every block.}
  \label{fig:injectivity}
\end{figure*}

\looseness=-1 \cref{fig:injectivity} summarises the injectivity analysis; the per-block values
behind it are given in \cref{tab:injectivity-full}. Lipschitz constants are the
maximum over $8$ operating points captured from real degraded inputs at
$1$\,dB/$q100$, each estimated by $22$ steps of power iteration on
Jacobian--vector products of $g=f\circ\GN$. ``Inverts'' records whether the
i-ResNet fixed-point iteration $x_{k+1}=y-\alpha\,r(x_k)$ reaches a mean relative
error below $10^{-4}$ within $30$ iterations, using each block's true forward map,
so the test is exact for the Euler, focal and midpoint blocks alike, with no linearization.

Two features of the trained model are worth noting. Stage~1 is the worst
violator, which follows from the input variance being smallest there and
GroupNorm dividing by it; and the midpoint stage has the largest condition value
despite a middling $L_g$, because its condition
$\alpha L_g\,(1+\tfrac{\alpha}{2}L_g)$ is quadratic in $L_g$. Neither is an
artifact of the estimator: the inversion column is an independent test that makes
no reference to any bound, and it agrees with the condition in every one of the
twelve cases.

\begin{table}[h]
  \centering
  \footnotesize
  % auto-generated by theory_recon_plot.py -- do not hand-edit
\begin{tabular}{llcccc}
\toprule
Block & Disc. & $\alpha$ & $\hat L_g$ & $\alpha L_g$ & inverts? \\
\midrule
\multicolumn{6}{l}{\emph{LIMODENet-AE (as trained)}} \\
~~stage1 & euler & 0.5 & 58.84 & 29.42 & \ding{55} \\
~~stage2.0 & focal & 0.7 & 9.12 & 6.39 & \ding{55} \\
~~stage2.1 & focal & 0.7 & 4.94 & 3.46 & \ding{55} \\
~~stage2.2 & focal & 0.7 & 5.87 & 4.11 & \ding{55} \\
~~stage2.3 & focal & 0.7 & 9.71 & 6.79 & \ding{55} \\
~~stage3 & midpoint & 0.5 & 15.39 & 37.32 & \ding{55} \\
\midrule
\multicolumn{6}{l}{\emph{spectral-norm variant ($\alpha L_g<1$ enforced)}} \\
~~stage1 & euler & 0.5 & 0.85 & 0.43 & \checkmark \\
~~stage2.0 & focal & 0.7 & 0.48 & 0.34 & \checkmark \\
~~stage2.1 & focal & 0.7 & 0.63 & 0.44 & \checkmark \\
~~stage2.2 & focal & 0.7 & 0.40 & 0.28 & \checkmark \\
~~stage2.3 & focal & 0.7 & 0.60 & 0.42 & \checkmark \\
~~stage3 & midpoint & 0.5 & 0.63 & 0.37 & \checkmark \\
\bottomrule
\end{tabular}

  \caption{Per-block injectivity measurements for the trained reconstructor and
  for the spectral-normalized variant. $\hat L_g$ is the estimated local Lipschitz
  constant of the residual field; the condition is $\alpha L_g<1$ for Euler and
  focal blocks and $\alpha L_g(1+\tfrac{\alpha}{2}L_g)<1$ for the midpoint block.
  Auto-generated by \texttt{theory/theory\_recon\_plot.py}.}
  \label{tab:injectivity-full}
\end{table}

\section{Full Restoration Sweep: Quality and SNR Axes}
\label{supp:sweep}

\cref{sec:recon-compare} summarizes the sweep; this section gives the full
tables. A single operating point cannot establish that an architectural
advantage is general, so we repeat the entire three-seed comparison at JPEG
quality $50$ and $10$ as well as $100$, holding $E_s/N_0$ at $1$\,dB
(Table~\ref{supp:tab:quality}). Aggressive compression compounds the channel
and caps what any decoder could recover: reconstruction tops out near
$30.6$\,dB at $10$q against $37.4$ at $100$q. The fidelity ordering is
nevertheless preserved at every setting, with non-overlapping error bars on
both PSNR and SSIM in all six comparisons.

Two regularities are worth stating. First, the margin shrinks
\emph{monotonically} as quality falls: against the U-Net it goes
$+1.07\to+0.49\to+0.24$\,dB, and against the CNN-AE
$+1.75\to+0.99\to+0.37$\,dB. This is the behaviour one should expect rather than
a weakness: once the channel has destroyed the detail, a better encoder has less
left to preserve and the headroom for \emph{any} architecture narrows. Second, on
the semantic metric LIMODENet beats the CNN-AE at all three qualities but the
margin over the U-Net is not resolved at any of the three
($+1.33/+0.82/+0.21$\,pp, every one within overlapping error bars).

\begin{table}[h]
  \centering
  \footnotesize
  \setlength{\tabcolsep}{2.5pt}
  \begin{tabular}{llccc}
    \toprule
    & Reconstructor & PSNR (dB) & SSIM & Top-1 (\%) \\
    \midrule
    \multirow{3}{*}{\rotatebox{90}{$100$q}}
      & \textbf{LIMODENet-AE} & \textbf{37.39}{\tiny$\pm$0.14}
        & \textbf{0.9672}{\tiny$\pm$0.0010} & \textbf{78.50}{\tiny$\pm$0.83} \\
      & U-Net & 36.32{\tiny$\pm$0.33} & 0.9591{\tiny$\pm$0.0027} & 77.17{\tiny$\pm$0.89} \\
      & CNN-AE & 35.64{\tiny$\pm$0.18} & 0.9531{\tiny$\pm$0.0017} & 71.89{\tiny$\pm$0.77} \\
    \midrule
    \multirow{3}{*}{\rotatebox{90}{$50$q}}
      & \textbf{LIMODENet-AE} & \textbf{33.92}{\tiny$\pm$0.05}
        & \textbf{0.9426}{\tiny$\pm$0.0006} & \textbf{67.14}{\tiny$\pm$0.18} \\
      & U-Net & 33.43{\tiny$\pm$0.16} & 0.9358{\tiny$\pm$0.0027} & 66.33{\tiny$\pm$1.15} \\
      & CNN-AE & 32.93{\tiny$\pm$0.05} & 0.9301{\tiny$\pm$0.0006} & 62.78{\tiny$\pm$0.22} \\
    \midrule
    \multirow{3}{*}{\rotatebox{90}{$10$q}}
      & \textbf{LIMODENet-AE} & \textbf{30.56}{\tiny$\pm$0.03}
        & \textbf{0.8865}{\tiny$\pm$0.0004} & \textbf{49.35}{\tiny$\pm$0.59} \\
      & U-Net & 30.32{\tiny$\pm$0.04} & 0.8828{\tiny$\pm$0.0006} & 49.14{\tiny$\pm$0.46} \\
      & CNN-AE & 30.18{\tiny$\pm$0.02} & 0.8813{\tiny$\pm$0.0003} & 47.18{\tiny$\pm$0.10} \\
    \bottomrule
  \end{tabular}
  \caption{\textbf{The fidelity ranking is preserved across the whole quality
  axis.} The same iso-parameter three-seed comparison at $1$\,dB, run at JPEG
  quality $100$, $50$ and $10$; all models trained to convergence ($60$ epochs),
  epochs val-selected. Top-1 uses the neutral Spiking-CNN judge; LIMODENet beats
  the CNN-AE at all three settings but is \emph{tied} with the U-Net at all three,
  which is why the main text scopes the U-Net claim to fidelity.}
  \label{supp:tab:quality}
\end{table}

Varying JPEG quality changes how much of the image survives compression;
varying $E_s/N_0$ changes how much survives the channel itself.
Table~\ref{supp:tab:snr} repeats the comparison at $2$\,dB. The fidelity
ordering is again preserved and separated, by $+0.73$\,dB\,/\,$+0.0086$ SSIM
over the U-Net and $+1.53$\,/\,$+0.0186$ over the CNN-AE. The notable change is
on the semantic metric: at $2$\,dB LIMODENet leads the U-Net by $+3.29$\,pp
with \emph{non-overlapping} error bars, whereas at $1$\,dB the two were tied
at every JPEG quality; the margin over the CNN-AE also widens, from $+6.6$ to
$+9.8$\,pp.

Extending to $3$\,dB (Fig.~\ref{fig:snrgrid}) shows that this emulated DVB-S2X link does not degrade gracefully with $E_s/N_0$; it falls off a cliff between $2$ and $3$\,dB.
Measured on the \emph{received} images before any model runs, over the full
$8{,}090$-image test split and at the $128$\,px working resolution, the median
PSNR against the clean reference is $17.5$\,dB at $1$\,dB and $19.0$\,dB at $2$\,dB, both severely corrupted, but $50.2$\,dB at $3$\,dB and $50.3$\,dB at
$4$\,dB, which is visually lossless. A $31$\,dB change in received quality
across a $1$\,dB change in $E_s/N_0$ is the signature of a coded-link decoding
threshold: DVB-S2X pairs LDPC with an outer BCH code, and such systems
transition sharply from quasi-error-free operation to decoding collapse over a
fraction of a decibel.

Received PSNR depends on the resolution at which it is evaluated. EuroSAT is natively $64\times64$ and the pipeline resamples to $128$, which smooths the
channel noise and raises the $1$\,dB figure from $13.8$ to $17.5$\,dB, so we report it at $128$\,px throughout. The aggregate and per-image statistics
diverge sharply above the threshold: at $4$\,dB the aggregate is $39.6$\,dB against a median of $50.3$, because a small tail of failed frames owns almost
all of the squared error (see \cref{supp:abovecliff} for the full stratified
analysis). Below the threshold, the margin over the CNN-AE runs $+2.2 \to +6.6 \to +9.8$\,pp ($1$\,dB/$10$q, $1$\,dB/$100$q, $2$\,dB/$100$q)
before collapsing to $+0.06$\,pp once the link is clean.

\begin{figure*}[t]
  \centering
  \includegraphics[width=\linewidth]{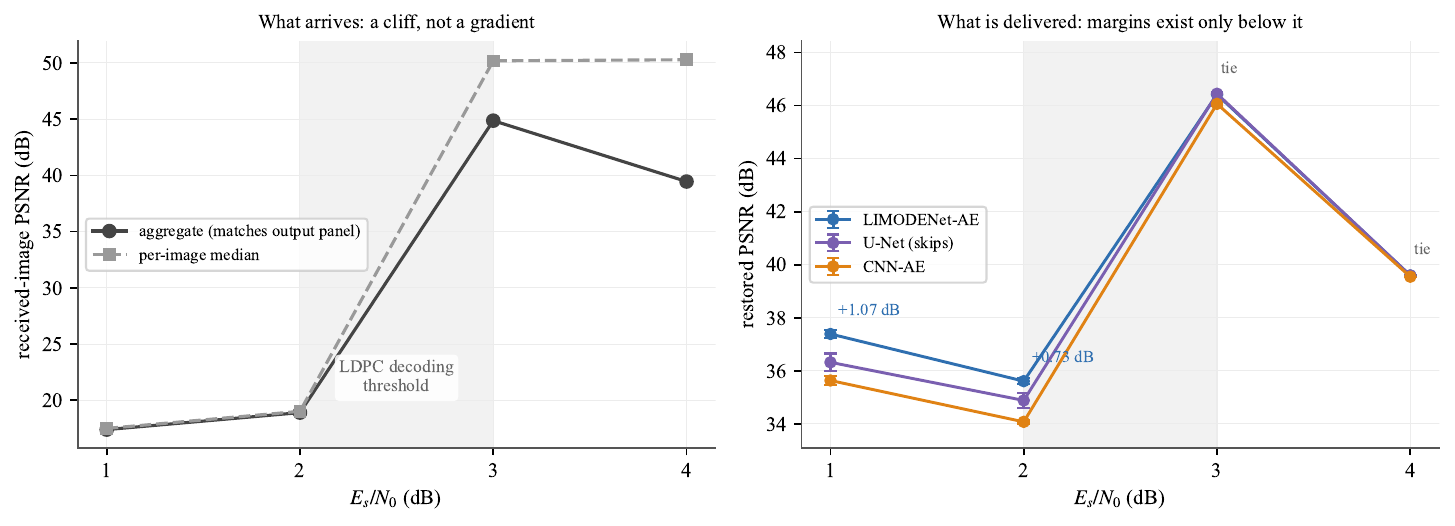}
  \caption{\textbf{The link has a decoding threshold, and the encoder
  comparison lives entirely below it.} \emph{Left:} received-image PSNR before
  any model, measured over the full $8{,}090$-image test split at the
  $128$\,px working resolution. \emph{Right:} restored PSNR for the three
  iso-parameter reconstructors, $3$ seeds, $60$ epochs, val-selected. Margins
  exist only below the threshold and vanish above it, which is what the
  bottleneck reading predicts: routing detail around a bottleneck costs
  nothing when the input still contains that detail.}
  \label{fig:snrgrid}
\end{figure*}

\begin{table}[t]
  \centering
  \footnotesize
  \setlength{\tabcolsep}{2.5pt}
  \begin{tabular}{llccc}
    \toprule
    & Reconstructor & PSNR (dB) & SSIM & Top-1 (\%) \\
    \midrule
    \multirow{3}{*}{\rotatebox{90}{$1$\,dB}}
      & \textbf{LIMODENet-AE} & \textbf{37.39}{\tiny$\pm$0.14}
        & \textbf{0.9672}{\tiny$\pm$0.0010} & \textbf{78.50}{\tiny$\pm$0.83} \\
      & U-Net & 36.32{\tiny$\pm$0.33} & 0.9591{\tiny$\pm$0.0027} & 77.17{\tiny$\pm$0.89} \\
      & CNN-AE & 35.64{\tiny$\pm$0.18} & 0.9531{\tiny$\pm$0.0017} & 71.89{\tiny$\pm$0.77} \\
    \midrule
    \multirow{3}{*}{\rotatebox{90}{$2$\,dB}}
      & \textbf{LIMODENet-AE} & \textbf{35.61}{\tiny$\pm$0.11}
        & \textbf{0.9738}{\tiny$\pm$0.0011} & \textbf{81.72}{\tiny$\pm$0.68} \\
      & U-Net & 34.89{\tiny$\pm$0.27} & 0.9652{\tiny$\pm$0.0030} & 78.43{\tiny$\pm$1.01} \\
      & CNN-AE & 34.08{\tiny$\pm$0.06} & 0.9552{\tiny$\pm$0.0008} & 71.90{\tiny$\pm$0.21} \\
    \midrule
    \multicolumn{5}{l}{\emph{$3$ and $4$\,dB are above the decoding threshold;
      all models tie. See \cref{supp:abovecliff}.}} \\
    \bottomrule
  \end{tabular}
  \caption{\textbf{The ranking also holds when the channel, rather than the
  compression, is varied.} Iso-parameter three-seed comparison at JPEG quality $100$
  and two $E_s/N_0$ settings; $60$ epochs, val-selected epochs, neutral Spiking-CNN
  judge. Absolute values are comparable \emph{within} a row block only.}
  \label{supp:tab:snr}
\end{table}

\section{Judge Control and Label-Free Semantic Validation}
\label{supp:semantic}

Reconstruct-then-classify accuracy requires a downstream classifier, and the
natural choice, our own ANN LIMODENet, shares an architecture family with one
of
the three encoders being compared. That is a confound, so we re-scored the identical reconstructions with two judges unrelated to all three reconstructors: a Spiking-CNN and a SEW-ResNet~\cite{fang2021deep} (Table~\ref{supp:tab:judge}). Against the
CNN-AE the margin is large under every judge ($+4.2$ to $+8.4$\,pp) and the
conclusion is unaffected. Against the U-Net, however, the margin falls from
$+4.50$\,pp under the LIMODENet judge to $+1.34$ and $+1.84$\,pp under the neutral ones, and under the Spiking-CNN judge the error bars overlap. A formal
two-one-sided-equivalence test (TOST, $\delta_0{=}1.27$\,pp, calibrated against
independent positive/negative controls from the ablation sweep, same procedure as
\cref{supp:abovecliff}) on the Spiking-CNN row is \emph{inconclusive}: the $90\%$ CI of the $+1.34$\,pp gap runs from $-0.16$ to $+2.84$\,pp, straddling
$\pm\delta_0$ in both directions. We therefore scope the claim accordingly:
\textbf{at $1$\,dB, LIMODENet's advantage over the U-Net is a fidelity result
rather than a downstream-accuracy result.}

\begin{table}[h]
  \centering
  \footnotesize
  \setlength{\tabcolsep}{3pt}
  \begin{tabular}{lccc}
    \toprule
    & \multicolumn{3}{c}{recon$\to$clf Top-1 (\%), $3$ seeds} \\
    \cmidrule(lr){2-4}
    Judge & LIMODENet-AE & U-Net & CNN-AE \\
    \midrule
    LIMODENet-ANN$^{\dagger}$ & 76.42{\tiny$\pm$1.35}
      & 71.92{\tiny$\pm$1.91} & 67.98{\tiny$\pm$0.69} \\
    Spiking-CNN & 78.50{\tiny$\pm$0.83}
      & 77.17{\tiny$\pm$0.89} & 71.89{\tiny$\pm$0.77} \\
    SEW-ResNet & 81.55{\tiny$\pm$0.60}
      & 79.71{\tiny$\pm$0.58} & 77.35{\tiny$\pm$0.74} \\
    \midrule
    \multicolumn{4}{l}{\emph{LIMODENet margin (LIMODENet-ANN / Sp.-CNN / SEW)}} \\
    \quad vs.\ U-Net  & \multicolumn{3}{c}{$+4.50$ \,/\, $+1.34^{\circ}$ \,/\, $+1.84$} \\
    \quad vs.\ CNN-AE & \multicolumn{3}{c}{$+8.44$ \,/\, $+6.61$ \,/\, $+4.20$} \\
    \bottomrule
  \end{tabular}
  \caption{\textbf{A negative control: the semantic margin over the U-Net is an
  artifact of the judge.} $^{\dagger}$Confounded judge; the other two are
  architecturally unrelated to all three reconstructors. $^{\circ}$Under the
  Spiking-CNN judge the LIMODENet--U-Net difference lies within overlapping
  error bars.}
  \label{supp:tab:judge}
\end{table}

By the data-processing inequality~\cite{cover2006elements}, for the Markov chain
$S \to \widehat{X} \to Z_{\mathrm{judge}}$ that any reconstruct-then-classify
pipeline forms, $I(S;Z_{\mathrm{judge}}) \le I(S;\widehat{X})$: a downstream
classifier can only report on the fraction of the restored image's information
that \emph{its own} decision boundary happens to expose. A judge-\emph{free}
comparison must therefore query the representation \emph{before} any
classification head commits to a decision boundary. We embed clean, degraded,
and each reconstructor's output with a frozen encoder $\phi$, the LIMODENet backbone itself, pretrained on Tiny-ImageNet and never fine-tuned on EuroSAT, and compare embeddings with three label-free metrics: paired cosine
similarity ($\mathrm{PSS}$), normalized latent distortion ($D_{\mathrm{sem}}$),
and linear centered kernel alignment
($\mathrm{CKA}$~\cite{kornblith2019similarity}), all computed on the pooled
$128$-d feature preceding $\phi$'s own classification head:
\begin{align*}
  \mathrm{PSS} &= \tfrac{1}{N}\textstyle\sum_i
    \cos\!\big(\phi(x_i),\,\phi(\widehat{x}_i)\big),\\
  D_{\mathrm{sem}}(x_i,\widehat{x}_i) &=
    \frac{\lVert \phi(x_i)-\phi(\widehat{x}_i)\rVert_2}
         {\lVert \phi(x_i)\rVert_2+\varepsilon},\\
  \mathrm{CKA}(Z,\widehat{Z}) &=
    \frac{\langle K_c,\widehat{K}_c\rangle_F}
         {\lVert K_c\rVert_F\lVert \widehat{K}_c\rVert_F},
\end{align*}
with $K_c,\widehat{K}_c$ the centered Gram matrices of the clean and restored
embeddings.

\begin{table}[h]
  \centering
  \footnotesize
  \setlength{\tabcolsep}{4pt}
  \begin{tabular}{llccc}
    \toprule
    & Reconstructor & PSS ($\uparrow$) & $D_{\mathrm{sem}}$ ($\downarrow$) & CKA ($\uparrow$) \\
    \midrule
    \multirow{3}{*}{\rotatebox{90}{$1$dB/$100$q}}
      & \textbf{LIMODENet-AE} & \textbf{0.9967}{\tiny$\pm$0.0002}
        & \textbf{0.0675}{\tiny$\pm$0.0026} & \textbf{0.9931}{\tiny$\pm$0.0004} \\
      & U-Net  & 0.9949{\tiny$\pm$0.0007} & 0.0854{\tiny$\pm$0.0054} & 0.9876{\tiny$\pm$0.0016} \\
      & CNN-AE & 0.9928{\tiny$\pm$0.0004} & 0.0998{\tiny$\pm$0.0026} & 0.9833{\tiny$\pm$0.0009} \\
    \midrule
    \multirow{3}{*}{\rotatebox{90}{$1$dB/$50$q}}
      & \textbf{LIMODENet-AE} & \textbf{0.9892}{\tiny$\pm$0.0002}
        & \textbf{0.1234}{\tiny$\pm$0.0008} & \textbf{0.9707}{\tiny$\pm$0.0006} \\
      & U-Net  & 0.9868{\tiny$\pm$0.0009} & 0.1366{\tiny$\pm$0.0047} & 0.9631{\tiny$\pm$0.0025} \\
      & CNN-AE & 0.9839{\tiny$\pm$0.0001} & 0.1501{\tiny$\pm$0.0004} & 0.9562{\tiny$\pm$0.0001} \\
    \midrule
    \multirow{3}{*}{\rotatebox{90}{$1$dB/$10$q}}
      & \textbf{LIMODENet-AE} & \textbf{0.9621}{\tiny$\pm$0.0005}
        & \textbf{0.2365}{\tiny$\pm$0.0011} & \textbf{0.8859}{\tiny$\pm$0.0025} \\
      & U-Net  & 0.9597{\tiny$\pm$0.0004} & 0.2422{\tiny$\pm$0.0010} & 0.8784{\tiny$\pm$0.0008} \\
      & CNN-AE & 0.9581{\tiny$\pm$0.0001} & 0.2496{\tiny$\pm$0.0005} & 0.8736{\tiny$\pm$0.0013} \\
    \midrule
    \multirow{3}{*}{\rotatebox{90}{$2$dB/$100$q}}
      & \textbf{LIMODENet-AE} & \textbf{0.9976}{\tiny$\pm$0.0001}
        & \textbf{0.0546}{\tiny$\pm$0.0016} & \textbf{0.9901}{\tiny$\pm$0.0005} \\
      & U-Net  & 0.9959{\tiny$\pm$0.0006} & 0.0738{\tiny$\pm$0.0060} & 0.9857{\tiny$\pm$0.0017} \\
      & CNN-AE & 0.9935{\tiny$\pm$0.0002} & 0.0958{\tiny$\pm$0.0019} & 0.9794{\tiny$\pm$0.0008} \\
    \bottomrule
  \end{tabular}
  \caption{\textbf{A judge-free re-run of the downstream comparison separates
  cleanly on all three metrics, at every sub-threshold condition.} Frozen
  Tiny-ImageNet-pretrained LIMODENet backbone, never fine-tuned on EuroSAT; three
  seeds, mean$\pm$std. LIMODENet leads on all three metrics with non-overlapping
  error bars at all four conditions, $24/24$ comparisons in total.}
  \label{supp:tab:semmetrics}
\end{table}

The encoder above is supervised (Tiny-ImageNet classification), not the
self-supervised, domain-general representation the label-free-preservation
literature recommends, so we repeat the $1$\,dB/$100$q condition with DINOv2
ViT-S/$14$ \cite{oquab2023dinov2}, self-supervised and never exposed to any classification label (Table~\ref{supp:tab:dinov2}):

\begin{table}[h]
  \centering
  \footnotesize
  \setlength{\tabcolsep}{5pt}
  \begin{tabular}{lccc}
    \toprule
    Reconstructor & PSS ($\uparrow$) & $D_{\mathrm{sem}}$ ($\downarrow$) & CKA ($\uparrow$) \\
    \midrule
    \textbf{LIMODENet-AE} & \textbf{0.7960}{\tiny$\pm$0.0098}
      & \textbf{0.6108}{\tiny$\pm$0.0157} & \textbf{0.7991}{\tiny$\pm$0.0107} \\
    U-Net  & 0.7459{\tiny$\pm$0.0103} & 0.6848{\tiny$\pm$0.0151} & 0.7570{\tiny$\pm$0.0125} \\
    CNN-AE & 0.6937{\tiny$\pm$0.0143} & 0.7560{\tiny$\pm$0.0180} & 0.6807{\tiny$\pm$0.0113} \\
    \bottomrule
  \end{tabular}
  \caption{\looseness=-1 \textbf{An independent, self-supervised encoder reproduces the identical
  ranking.} DINOv2 ViT-S/$14$; $1$\,dB/$100$q, three seeds, mean$\pm$std. All
  three metrics separate with non-overlapping error bars, same order as
  Table~\ref{supp:tab:semmetrics}. Absolute values are far lower (domain gap
  from natural-image pretraining) but the relative ordering is exactly
  reproduced.}
  \label{supp:tab:dinov2}
\end{table}

\paragraph{A worst-case guarantee, not a measurement, agrees with every metric
above.} We add one closed-form, label-free sufficient condition: a certificate
that a prediction \emph{cannot} have changed. Let $\phi$ be a frozen encoder
and $h_c(z)=w_c^\top z+b_c$ the linear head's $c$-th logit on embedding $z$, so
that $c^*=\arg\max_c h_c(\phi(x))$ is the predicted class for the clean image
$x$. For every competing class $j\neq c^*$, define the clean-image prediction
margin $m_j(x) = h_{c^*}(z) - h_j(z)$, $z=\phi(x)$. Let $\widetilde
z=\phi(\widetilde x)$ be the embedding of a degraded or restored version of
$x$, and $\Delta z=\widetilde z-z$. Cauchy--Schwarz gives
\begin{equation*}
  h_{c^*}(\widetilde z)-h_j(\widetilde z)
  \;\geq\; m_j(x) - \lVert w_{c^*}-w_j\rVert_2\,\lVert\Delta z\rVert_2,
\end{equation*}
which stays positive for every $j\neq c^*$ whenever
\begin{equation*}
  \lVert\Delta z\rVert_2 \;<\; \min_{j\neq c^*}
  \frac{m_j(x)}{\lVert w_{c^*}-w_j\rVert_2},
\end{equation*}
in which case $c^*$ cannot have changed, regardless of what $\widetilde x$
actually is. The Certified Semantic Preservation Rate (CSPR) is the fraction
of the test set satisfying this bound. The certificate is
\emph{model-relative}, and we evaluate it using the Tiny-ImageNet encoder's
own trained linear head, so it certifies the encoder's $200$-way Tiny-ImageNet
decision, not an EuroSAT-relevant one, and should be read as a demonstration
of the mechanism rather than a task-specific claim. Raw channel degradation is
\emph{provably uncertifiable for every single test image} (median
$\lVert\Delta z\rVert_2 \approx\!41$ against a median certified radius of
$\approx\!0.28$). Restoration reduces the perturbation $25$--$28\times$ and
yields a first non-zero certified rate: LIMODENet $9.32\pm0.79\%$, the U-Net
$7.23\pm0.46\%$, the CNN-AE $6.53\pm0.17\%$, non-overlapping at three seeds, the same order as every other metric in this section. This is not evidence
that LIMODENet satisfies the injectivity condition of
Proposition~\ref{supp:p5} (\cref{sec:probing} already shows it does not, by one to two orders of magnitude), only that an honest worst-case guarantee, applied
without cherry-picking the bound, detects the same advantage every looser
metric does.

\paragraph{The semantic-embedding advantage holds, and partly strengthens, at half the parameter budget.}
Re-running PSS/$D_{\mathrm{sem}}$/CKA/CSPR on the existing \texttt{nano} checkpoints (pure inference, no new training, Table~\ref{supp:tab:semmetrics-nano}):
\begin{table}[h]
  \centering
  \footnotesize
  \setlength{\tabcolsep}{2.5pt}
  \begin{tabular}{lcccc}
    \toprule
    Nano & PSS\,$\uparrow$ & $D_{\mathrm{sem}}$\,$\downarrow$ & CKA\,$\uparrow$ & CSPR\,(\%) \\
    \midrule
    \textbf{LIMODENet-AE} & \textbf{0.9955}{\tiny$\pm$0.0001} & \textbf{0.0802}{\tiny$\pm$0.0009} & \textbf{0.9902}{\tiny$\pm$0.0001} & \textbf{7.36}{\tiny$\pm$0.32} \\
    U-Net  & 0.9911{\tiny$\pm$0.0019} & 0.1111{\tiny$\pm$0.0111} & 0.9776{\tiny$\pm$0.0055} & 5.93{\tiny$\pm$0.48} \\
    CNN-AE & 0.9920{\tiny$\pm$0.0004} & 0.1052{\tiny$\pm$0.0030} & 0.9812{\tiny$\pm$0.0011} & 6.27{\tiny$\pm$0.20} \\
    \bottomrule
  \end{tabular}
  \caption{\looseness=-1 \textbf{The judge-free and certified-guarantee advantages both survive
  at half the parameter budget.} Nano band ($0.44$M for the full autoencoder), $1$\,dB/$100$q, three
  seeds, mean$\pm$std; frozen Tiny-ImageNet encoder, same protocol as
  Table~\ref{supp:tab:semmetrics}. The $D_{\mathrm{sem}}$ margin over the U-Net
  \emph{grows} relative to the tiny band ($0.0179\!\to\!0.0309$), echoing the
  fidelity finding (\cref{sec:ablations}) that the U-Net margin strengthens rather than shrinks at half budget. As in Table~\ref{supp:tab:nano}, the two \emph{baselines} swap at this budget: the CNN-AE edges the U-Net on all four metrics here, having trailed it on all three in the tiny band (Table~\ref{supp:tab:semmetrics}). Only LIMODENet's first place is stable across bands.}
  \label{supp:tab:semmetrics-nano}
\end{table}

\section{Convergence Check}
\label{supp:convergence}

A short schedule invites the objection that the baselines were merely cut off
before they caught up. Table~\ref{supp:tab:converge} answers it by running the
identical three-seed comparison at both a $25$- and a $60$-epoch budget. First,
$25$ epochs was \emph{not} convergence for any model, since all three gain $1.5$--$1.9$\,dB, so the shorter budget understated every model rather than
the baselines alone. Second, and decisively, the gaps \emph{grow} once all
three converge: against the CNN-AE the margin moves from $+1.44$ to
$+1.75$\,dB, and against the U-Net from $+0.64$ to $+1.07$\,dB. At $60$ epochs
every validation curve is flat to within $0.003$\,dB/epoch, so this is a
genuine convergence point rather than another arbitrary cutoff.

\begin{table}[t]
  \centering
  \footnotesize
  \setlength{\tabcolsep}{2.5pt}
  \begin{tabular}{lcccc}
    \toprule
    & \multicolumn{2}{c}{PSNR (dB)} & \multicolumn{2}{c}{recon$\to$clf (\%)$^{\ddagger}$} \\
    \cmidrule(lr){2-3}\cmidrule(lr){4-5}
    & $25$\,ep$^{\dagger}$ & $60$\,ep & $25$\,ep$^{\dagger}$ & $60$\,ep \\
    \midrule
    \textbf{LIMODENet-AE} & 35.49{\tiny$\pm$0.18} & \textbf{37.39{\tiny$\pm$0.14}}
                          & 67.08{\tiny$\pm$0.63} & \textbf{76.42{\tiny$\pm$1.35}} \\
    U-Net                 & 34.85{\tiny$\pm$0.33} & 36.32{\tiny$\pm$0.33}
                          & 64.61{\tiny$\pm$1.00} & 71.92{\tiny$\pm$1.91} \\
    CNN-AE                & 34.05{\tiny$\pm$0.17} & 35.64{\tiny$\pm$0.18}
                          & 61.03{\tiny$\pm$0.73} & 67.98{\tiny$\pm$0.69} \\
    \midrule
    \emph{margin vs.\ U-Net}  & $+0.64$ & $\mathbf{+1.07}$
                              & $+2.47$ & $\mathbf{+4.50}$ \\
    \emph{margin vs.\ CNN-AE} & $+1.44$ & $\mathbf{+1.75}$
                              & $+6.05$ & $\mathbf{+8.44}$ \\
    \bottomrule
  \end{tabular}
  \caption{\looseness=-1 \textbf{Training to convergence widens LIMODENet's margin rather than
  closing it.} $^{\dagger}$The $25$-epoch columns predate the validation-split
  protocol and select the epoch on test. $^{\ddagger}$Both recon$\to$clf
  columns use the LIMODENet-ANN judge so the two budgets remain comparable to
  each other; the PSNR columns need no judge at all.}
  \label{supp:tab:converge}
\end{table}

\section{Recent SOTA Lightweight RS Classifiers}
\label{supp:sota-classification}

\Cref{sec:classification} argues that saturated RS scene classification does not
separate architectures; this appendix substantiates that claim by placing
LIMODENet's EuroSAT numbers next to five recent lightweight or efficiency-oriented
classifiers (\cref{supp:tab:sota}). We stress up front that
\textbf{\cref{supp:tab:sota} is not a controlled comparison}: the entries differ in
input resolution ($64$ vs.\ $224$--$256$\,px), pre-training corpus, train/test
split ratio, and augmentation, none of which we can equalise from published
numbers. It is included so that a reader familiar with the RS classification
literature can see where our backbone sits, not as evidence for any claim in
the paper: the discriminating result is the iso-parameter restoration
comparison of
\cref{sec:recon-compare}.

Three observations survive the protocol noise. \textbf{(i)} The only
protocol-matched external point is SceneMixer~\cite{scenemixer2025}: same dataset,
native $64\times64$ input, same depthwise$+$pointwise design family. LIMODENet's
from-scratch $64$\,px \emph{tiny} model ($96.5\%$, 3 seeds) and its legacy
ImageNet-finetuned number ($98.45\%$, single seed) both exceed SceneMixer's $93.90\%$ overall accuracy, but this is one dataset at one operating point and we do not build on it.
\textbf{(ii)} The $224$--$256$\,px methods~\cite{yang2022focalnet,stconvnext2025,%
kcn2024,jeevan2025wavemix} operate on EuroSAT only after $2$--$4\times$ upsampling
from the native $64$\,px, and carry $13$--$29$\,M parameters against LIMODENet's
$0.69$\,M; their reported accuracies ($\sim$96--98\%) sit within the same
saturated band. \textbf{(iii)} Consistent with our own width-ladder finding
(Tiny$\to$Base moves EuroSAT by $+0.7$\,pp for $4\times$ the parameters), the
spread across all six methods is under $\sim$5\,pp despite a $40\times$ range in
parameter count, which is the quantitative form of the ``benchmark is saturated''
claim.

\begin{table*}[h]
  \centering
  \footnotesize
  \setlength{\tabcolsep}{3pt}
  \begin{tabular}{lcccl}
    \toprule
    Method & EuroSAT top-1 & Input & Params & Protocol note \\
    \midrule
    \textbf{LIMODENet} (tiny, scratch) & $96.54${\tiny$\pm$0.17} & $64$ & $0.69$M
      & 3 seeds, val-selected \\
    \textbf{LIMODENet} (IN$\to$FT)$^{\dagger}$ & $98.45$ & $64$ & $0.69$M
      & single seed, best ckpt \\
    \midrule
    SceneMixer~\cite{scenemixer2025} & $93.90$ & $64$ & low
      & native res., same DW+PW family \\
    KCN / KonvNeXt~\cite{kcn2024} & ${\sim}96$ & $224$ & ${\geq}28$M
      & ConvNeXt+KAN head, upsampled \\
    STConvNeXt~\cite{stconvnext2025} & --$^{\ast}$ & $224$ & ${\sim}13$M
      & reports NWPU/AID/UCM, not EuroSAT \\
    FocalNet-T~\cite{yang2022focalnet} & --$^{\ast}$ & $224$ & $28.6$M
      & ImageNet backbone, no EuroSAT eval \\
    WaveMix~\cite{jeevan2025wavemix} & top of survey & $256$ & varies
      & $64{\to}256$ upsample, backbone study \\
    \bottomrule
  \end{tabular}
  \caption{\looseness=-1 \textbf{LIMODENet against recent lightweight / efficient RS
  classifiers on EuroSAT.} Not a controlled comparison: entries differ in input
  resolution, pre-training, split, and augmentation. $^{\dagger}$Legacy
  single-seed protocol, not comparable to the 3-seed rows.
  $^{\ast}$No EuroSAT number in the source; the method targets other RS
  benchmarks or ImageNet. Included for positioning only; the paper's claims rest
  on \cref{sec:recon-compare}.}
  \label{supp:tab:sota}
\end{table*}

\section{Scaling: Pretraining and Capacity}
\label{supp:scaling}

Because ImageNet-1k~\cite{deng2009imagenet} pre-training is not reproducible on our hardware, we pre-train the family on \textbf{Tiny-ImageNet} (a $200$-class, $100$k-image subset of ImageNet) at the \emph{same} $64{\times}64$ resolution used for the RS benchmarks
(Table~\ref{supp:tab:tin}). Scaling nano$\to$base on this corpus, a
$6.8\times$ parameter increase in which width is the \emph{only}
hyper-parameter that moves, gains $+5.0$\,pp on Tiny-ImageNet ($46.6\!\to\!51.6\%$), whereas the same nano$\to$base step gains only $+1.6$\,pp on EuroSAT. Capacity returns track distance from saturation rather than dataset
size.

\begin{table}[t]
  \centering
  \small
  \setlength{\tabcolsep}{6pt}
  \begin{tabular}{lccc}
    \toprule
    Scale & Params & Tiny-ImageNet top-1 (\%) & GPU-h \\
    \midrule
    nano  & 0.41M & 46.64 & 0.9 \\
    tiny  & 0.72M & 47.29 & 1.1 \\
    small & 1.58M & 50.85 & 1.3 \\
    base  & 2.78M & \textbf{51.61} & 1.7 \\
    \midrule
    big$^\ast$ & 5.85M & 51.06 & 3.5 \\
    \bottomrule
  \end{tabular}
  \caption{\textbf{Tiny-ImageNet pre-training}, $60$ epochs at $64{\times}64$,
  single seed. These checkpoints initialize the IN$\to$FT arm of the scaling study. Parameter counts exceed those of Table~\ref{tab:family} because the head here is $200$-way rather than $10$-way. $^\ast$\,\emph{big} is off the width ladder: it changes depth, MLP
  ratio and kernel as well as width, and despite $2.1\times$ the parameters of
  \emph{base} it does not exceed it.}
  \label{supp:tab:tin}
\end{table}

If capacity returns track headroom, pretraining, which also closes headroom by
giving the network useful features before it ever sees an RS label, should
show the same signature, and should erode as capacity itself
closes that gap. We fine-tune every TinyIN-pretrained scale on every RS benchmark, $3$ seeds each ($75$ runs), and compare against the from-scratch numbers of Table~\ref{tab:rs-datasets} cell by cell (Table~\ref{supp:tab:tinftdelta}):

\begin{table}[h]
  \centering
  \footnotesize
  \setlength{\tabcolsep}{4pt}
  \begin{tabular}{lccccc}
    \toprule
    $\Delta_{\mathrm{TinyIN}}$ & EuroSAT & NWPU & PatternNet & RSICB & UCMerced \\
    \midrule
    nano & $+1.9$ & $+11.4$ & $+3.5$ & $+2.2$ & $\mathbf{+36.3}$ \\
    tiny & $+1.2$ & $+8.1$ & $+2.7$ & $+1.5$ & $+31.9$ \\
    small & $+0.5$ & $+6.7$ & $+1.7$ & $+1.2$ & $+25.9$ \\
    base & $+0.5$ & $+5.7$ & $+1.3$ & $+1.0$ & $+21.8$ \\
    big & $+0.2$ & $+3.1$ & $+1.0$ & $+0.5$ & $+15.5$ \\
    \bottomrule
  \end{tabular}
  \caption{\looseness=-1 \textbf{The pretraining benefit decays monotonically with capacity on
  every one of the five RS benchmarks.} $\Delta_{\mathrm{TinyIN}}$ is TinyIN$\to$FT
  minus from-scratch, same cell, both $3$-seed means. Every one of the $20$ scale transitions is non-increasing, with a single tie (EuroSAT, small$\to$base).}
  \label{supp:tab:tinftdelta}
\end{table}

The largest gains land exactly where headroom is largest: $+36.3$\,pp on
UCMerced and $+11.4$\,pp on NWPU at nano, against ${\leq}3.5$\,pp on the three
near-saturated benchmarks at the same scale. The substitution is strong enough
that a \emph{nano} model with TinyIN pretraining beats a \emph{big} model
trained from scratch on the hardest benchmark by a wide margin ($92.54$ vs.\
$79.31\%$ on UCMerced, $+13.2$\,pp with $14.9\times$ fewer parameters), and edges it on NWPU ($86.24$ vs.\ $85.01\%$).

The family design lets us separate width from off-ladder capacity. Along the
width ladder accuracy rises monotonically with $C$ on every one of the six
corpora checked, including the single-seed Tiny-ImageNet pretraining check
($46.64\!\to\!47.29\!\to\!50.85\!\to\!51.61$). The off-ladder \emph{big}
configuration keeps $C{=}256$ but adds depth, MLP ratio, and kernel, for
$2.1\times$ the parameters of \emph{base}. On Tiny-ImageNet pretraining it does
not improve on \emph{base} ($51.06$ vs.\ $51.61$, single-seed); we confirmed this with three seeds on two RS benchmarks chosen to span the headroom axis (Table~\ref{supp:tab:basebig}):

\begin{table}[h]
  \centering
  \footnotesize
  \setlength{\tabcolsep}{4pt}
  \begin{tabular}{lccc}
    \toprule
    Benchmark & Base & Big & disjoint? \\
    \midrule
    EuroSAT (saturated) & 97.26{\tiny$\pm$0.17} & 97.54{\tiny$\pm$0.14} & no \\
    NWPU (headroom)     & 82.94{\tiny$\pm$0.40} & 85.01{\tiny$\pm$0.33} & \textbf{yes, $+2.07$} \\
    \bottomrule
  \end{tabular}
  \caption{\looseness=-1 \textbf{Whether the off-ladder \emph{big} configuration beats
  \emph{base} depends on headroom, not on a fixed capacity budget.} $3$ seeds, mean$\pm$std, epochs selected on a held-out split.}
  \label{supp:tab:basebig}
\end{table}

On the saturated benchmark \emph{big} does not separate from \emph{base},
matching the single-seed pretraining check; on the benchmark with real
headroom it does, by a disjoint $+2.07$\,pp. Capacity beyond the ladder pays
exactly where the ladder itself still has headroom to give, and not
otherwise.

\section{Recovery Pipeline: Off-Operating-Point}
\label{supp:offpoint}

\begin{table}[t]
  \centering
  \footnotesize
  \setlength{\tabcolsep}{3pt}
  \begin{tabular}{llccc}
    \toprule
    System & Metric & $10$q & $50$q & $100$q \\
    \midrule
    \multirow{3}{*}{\shortstack[l]{Recon.\\$\to$\,clf.}}
      & PSNR (dB)  & 30.63 & 33.98 & 38.25 \\
      & SSIM       & 0.8585 & 0.9340 & 0.9681 \\
      & Top-1 (\%) & 47.44 & 66.87 & 85.96 \\
    \midrule
    \multirow{3}{*}{\shortstack[l]{Degraded-\\trained clf.}}
      & PSNR (dB)  & \multicolumn{3}{c}{\emph{n/a --- emits no image}} \\
      & SSIM       & \multicolumn{3}{c}{\emph{n/a --- emits no image}} \\
      & Top-1 (\%) & \textbf{81.81$\pm$0.22} & \textbf{87.89$\pm$0.11}
                   & \textbf{88.31$\pm$0.23} \\
    \midrule
    \multicolumn{2}{l}{Top-1 margin} & $+34.4$ & $+21.0$ & $+2.4$ \\
    \bottomrule
  \end{tabular}
  \caption{\textbf{The two systems are not comparable on the same axes, and
  that is the point.} $3$ seeds, mean$\pm$std. The classifier has no PSNR or SSIM to report, because it discards the scene, whereas restoration returns the
  image. This bounds the label-accuracy claim rather than ranking the systems: the bound is local, and reverses off the training operating point. The reconstruct-then-classify row is the original pipeline reconstructor; the iso-parameter model of Table~\ref{tab:modern} scores $76.42\%$ at $100$q under the LIMODENet-ANN judge (Table~\ref{tab:offpoint}) and loses to the classifier by more.}
  \label{tab:killtest}
\end{table}

\begin{figure*}[t]
  \centering
  \includegraphics[width=\textwidth]{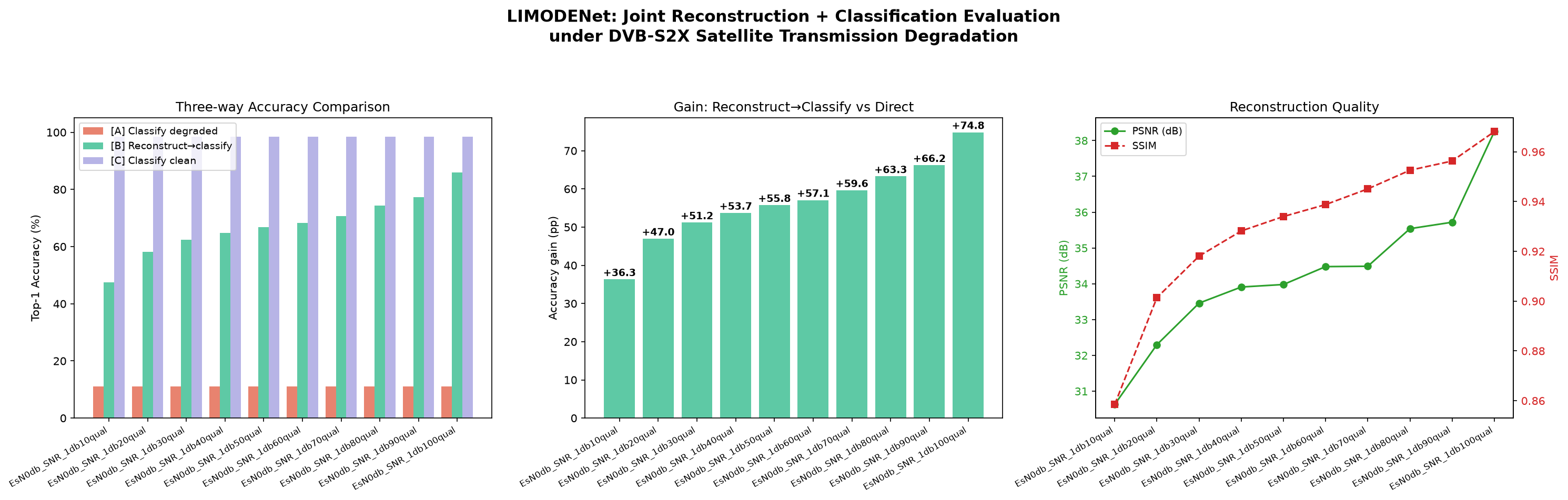}
  \caption{\textbf{Reconstruction recovers most of the label information a
  DVB-S2X channel destroys (EuroSAT).} The left panel plots three-way top-1
  accuracy against JPEG quality at $1$\,dB for (A)~classifying the degraded
  image (a clean-trained classifier applied zero-shot, $11.1\%$), (B)~reconstruct-then-classify, and (C)~the clean
  oracle. \emph{Note:} pipeline~(A) is a clean-trained classifier applied
  zero-shot and is a weak baseline; the gain over it is \textbf{not} our claim (see Table~\ref{tab:killtest}).}
  \label{fig:recovery}
\end{figure*}

\begin{figure}[t]
  \centering
  \includegraphics[width=\linewidth]{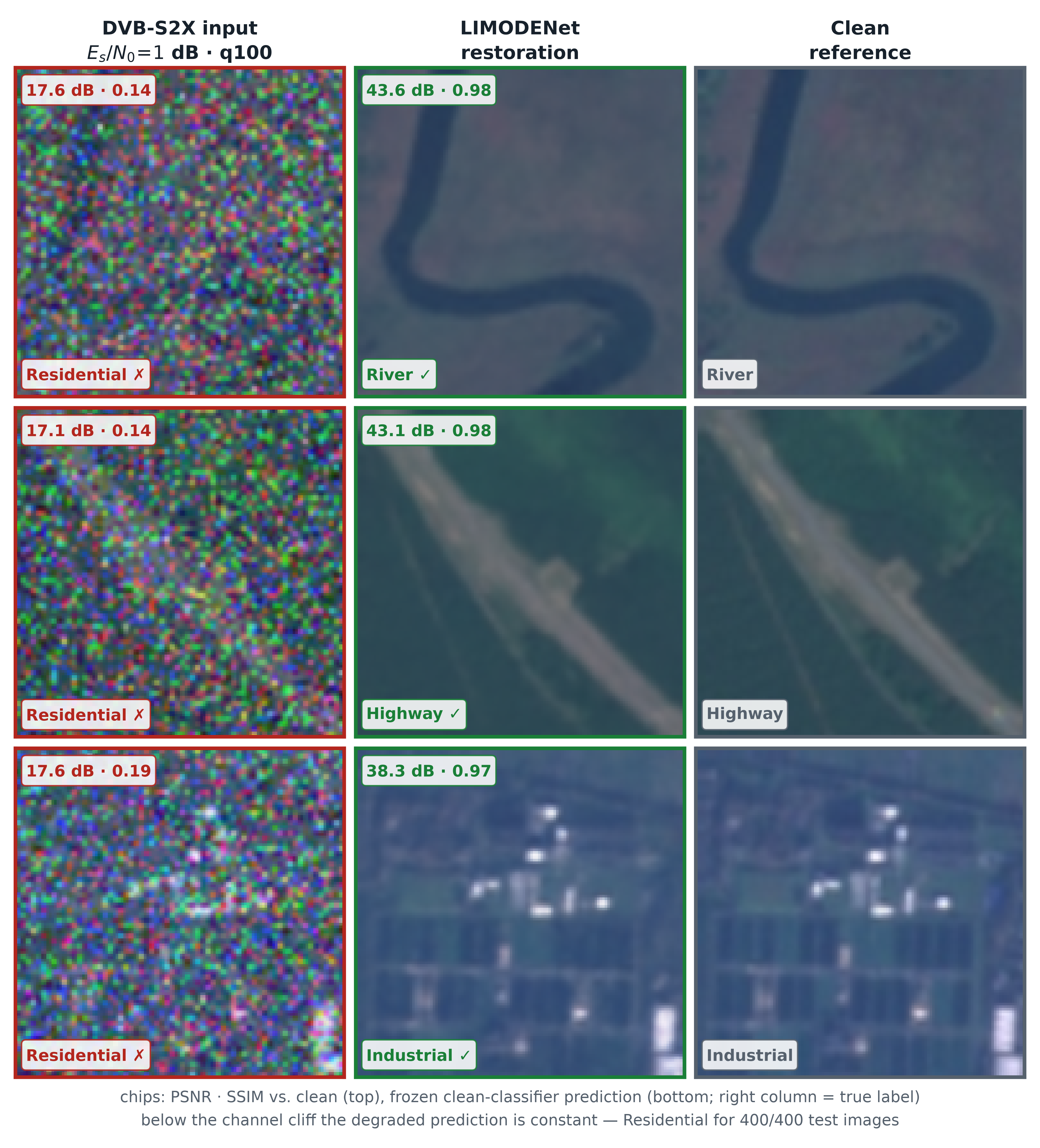}
  \caption{\textbf{The backbone recovers both the image and its label}
  (classes \emph{River}, \emph{Highway}, \emph{Industrial}; $1$\,dB, $100$q).
  The DVB-S2X-degraded input (left) is noise, and the frozen clean classifier
  does not merely err on it but \emph{collapses to a constant}: it predicts
  \emph{Residential} for $400/400$ test images spanning all ten classes, and
  likewise at $1$\,dB/q10 and $2$\,dB (it recovers only at $3$\,dB, $95.0\%$).
  The backbone's reconstruction (middle) restores both the image and the
  correct label against the clean reference (right). Because pipeline~(A) is
  a constant function below the channel cliff, the label flip is reported as
  a qualitative illustration only, not as evidence of semantic gain
  (cf.\ Table~\ref{tab:killtest}).}
  \label{fig:qual}
\end{figure}

Figs.~\ref{fig:recovery} and~\ref{fig:qual} show the original pipeline's recovery curve and a qualitative example; both are illustrations of the pathway, not evidence for it. Subjected to a channel-mismatch test (trained at $1$\,dB, applied unchanged at $2/3/4$\,dB), the degraded-trained classifier does not merely degrade, it
collapses: from $88.31{\pm}0.23\%$ at its own operating point to
$78.52{\pm}0.48\%$ at $2$\,dB and $47.9/47.8\%$ at $3/4$\,dB, a loss of over
$40$\,pp across three seeds. The restorer degrades too, but by $5.0$\,pp
rather than $40$, an eightfold difference in sensitivity. The kill-test
verdict of Table~\ref{tab:killtest} is therefore \emph{local}: the
degraded-trained classifier's advantage exists only at the operating point it
was trained for, the two systems cross over between $2$ and $3$\,dB, and by
$3$\,dB the ordering has reversed in the restorer's favour by $+23.7$\,pp.

\begin{table}[t]
  \centering
  \footnotesize
  \setlength{\tabcolsep}{4pt}
  \begin{tabular}{lcccc}
    \toprule
    Evaluated at ($100$q) & $1$\,dB$^{\ast}$ & $2$\,dB & $3$\,dB & $4$\,dB \\
    \midrule
    Degraded-trained classifier & \textbf{88.31} & 78.52 & 47.88 & 47.77 \\
    \emph{\quad($\pm$ over $3$ seeds)} & {\tiny$\pm$0.23} & {\tiny$\pm$0.48}
      & {\tiny$\pm$2.05} & {\tiny$\pm$2.05} \\
    \quad\emph{change vs.\ own point} & --- & $-9.8$ & $-40.4$ & $-40.5$ \\
    \midrule
    Frozen restorer $\to$ classify & 76.42 & 74.29 & \textbf{71.54} & \textbf{71.41} \\
    \emph{\quad($\pm$ over $3$ seeds)} & {\tiny$\pm$1.35} & {\tiny$\pm$2.34}
      & {\tiny$\pm$1.01} & {\tiny$\pm$1.01} \\
    \quad\emph{change vs.\ own point} & --- & $-2.1$ & $-4.9$ & $-5.0$ \\
    \bottomrule
  \end{tabular}
  \caption{\textbf{The classifier's advantage is local; the restorer's is not.}
  Both systems are trained at $1$\,dB/$100$q and applied \emph{unchanged} at higher
  $E_s/N_0$. $^{\ast}$their common training point, where the classifier wins by
  $+11.9$\,pp. The crossover lies between $2$ and $3$\,dB.}
  \label{tab:offpoint}
\end{table}

At the milder $3$\,dB channel the degraded input is already informative
($37.5\%$ at $10$q up to $97.85\%$ at $100$q), and reconstruction helps
\emph{only in the harsh, low-quality regime}: $+10.9$\,pp at $10$q and
$+9.9$\,pp at $20$q, crossing over near $40$--$50$q and slightly
\emph{hurting} at high quality ($-1.7$\,pp at $100$q). The reconstruct-then-classify
advantage is therefore regime-specific: large when the channel destroys the
signal, absent once it does not.

\section{Neuromorphic Deployment of the Classifier}
\label{supp:neuromorphic}

\begin{table}[t]
  \centering
  \small
  \setlength{\tabcolsep}{3pt}
  \begin{tabular}{lccc}
    \toprule
    Spiking model & Top-1 (\%) & Energy ($\mu$J) & AC-share \\
    \midrule
    Spiking-CNN           & 92.57{\tiny$\pm$0.56} & \textbf{43.9}{\tiny$\pm$0.1}  & 64.1\% \\
    SEW-ResNet            & 92.59{\tiny$\pm$0.42} & 152.5{\tiny$\pm$2.7} & \textbf{92.9\%} \\
    LIMODENet (scratch)   & 91.86{\tiny$\pm$0.28} & 482.7{\tiny$\pm$12.1} & 62.8\% \\
    LIMODENet (ANN init)  & \textbf{93.55}{\tiny$\pm$0.54} & 477.7{\tiny$\pm$3.1} & 61.9\% \\
    Spiking-MLP           & 50.62{\tiny$\pm$0.71} & 25.3{\tiny$\pm$0.0}  & 0.3\% \\
    \bottomrule
  \end{tabular}
  \caption{\looseness=-1 \textbf{A plain Spiking-CNN Pareto-dominates LIMODENet on clean
  EuroSAT classification.} All models hold ${\sim}0.69$M parameters, run at
  $T{=}8$ from scratch, $3$ seeds, mean$\pm$std, scored with a $45$\,nm SynOps
  proxy. LIMODENet (ANN init) and Spiking-CNN are statistically
  indistinguishable on accuracy while Spiking-CNN spends ${\sim}11\times$ less
  energy. Where the architecture pays off is reconstruction (Table~\ref{tab:modern}).}
  \label{supp:tab:dualhw}
\end{table}

\begin{figure}[t]
  \centering
  \includegraphics[width=\linewidth]{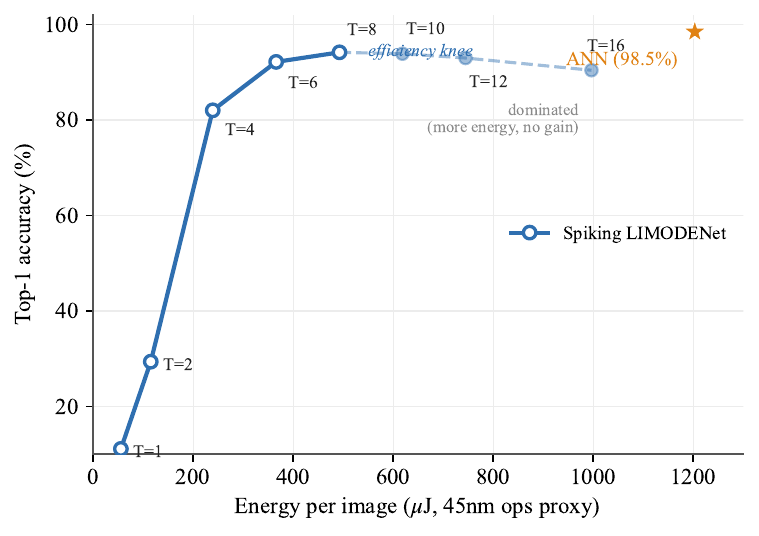}
  \caption{\textbf{Spiking LIMODENet traces an accuracy--energy Pareto front on
  EuroSAT} (representative run). The knee is at $T{=}6$--$8$ ($93.6\%$ 3-seed mean at
  $2.44\times$ lower energy than the ANN, $\star$); $T{\ge}10$ is dominated. Energy is a $45$\,nm SynOps proxy that credits every spike-driven operation; on matched silicon with the measured firing rate, the spiking reconstructor is $3.34\times$ \emph{more} expensive than its ANN (\cref{sec:recon-compare}), so the $2.44\times$ here is an optimistic bound for the classifier, not a measured saving.}
  \label{fig:snn-pareto}
\end{figure}

Table~\ref{supp:tab:snn} and Fig.~\ref{fig:snn-pareto} give the full $T$-sweep behind the headline $93.55{\pm}0.54\%$ at $T{=}8$ number in the main text, and Fig.~\ref{fig:per-layer} shows where the spiking energy goes.

\begin{table}[t]
  \centering
  \small
  \setlength{\tabcolsep}{5pt}
  \begin{tabular}{lcccc}
    \toprule
    Model & $T$ & Top-1 (\%) & Energy (mJ) & Eff.\ vs ANN \\
    \midrule
    ANN  & --  & 98.47 & 1.203 & $1.0\times$ \\
    \midrule
    SNN  & 4   & 82.03 & 0.239 & $5.03\times$ \\
    SNN  & 6   & 92.18 & 0.366 & $3.29\times$ \\
    SNN  & 8   & \textbf{94.18} & 0.493 & $2.44\times$ \\
    SNN  & 10  & 93.89 & 0.619 & $1.95\times$ \\
    SNN  & 12  & 92.99 & 0.745 & $1.62\times$ \\
    SNN  & 16  & 90.43 & 0.996 & $1.21\times$ \\
    \bottomrule
  \end{tabular}
  \caption{Accuracy, SynOps energy, and efficiency of spiking LIMODENet on
  EuroSAT relative to the ANN across time steps $T$, for a single representative run that is not one of the three reported seeds (its $T{=}8$ value, $94.18\%$, sits above the best seed's $94.07\%$). At $T{=}8$ the \textbf{3-seed} mean is $93.55{\pm}0.54\%$. The ANN row is the fine-tuned checkpoint re-evaluated in the spiking pipeline; the classification log reports $98.45\%$ for the same run.}
  \label{supp:tab:snn}
\end{table}

\looseness=-1 Fixing the reconstructor to the ANN LIMODENet and swapping only the downstream
spiking classifier, cross-family SEW-ResNet and Spiking-CNN classify
LIMODENet's reconstructions \emph{better} than a LIMODENet classifier at every
quality, mostly with non-overlapping error bars (Table~\ref{supp:tab:clfswap}).
Read together with Table~\ref{supp:tab:dualhw}, LIMODENet is outperformed as a classifier both on clean images \emph{and} on its own reconstructions, so its justification cannot rest on classification.

\begin{table}[t]
  \centering
  \footnotesize
  \setlength{\tabcolsep}{4pt}
  \begin{tabular}{lccc}
    \toprule
    Spiking classifier & q10 & q50 & q100 \\
    \midrule
    SEW-ResNet              & \textbf{55.30}{\tiny$\pm$3.84} & \textbf{71.43}{\tiny$\pm$3.68} & \textbf{82.10}{\tiny$\pm$1.17} \\
    Spiking-CNN             & 48.63{\tiny$\pm$1.07} & 66.94{\tiny$\pm$2.23} & 79.44{\tiny$\pm$1.47} \\
    LIMODENet (scratch)     & 39.44{\tiny$\pm$1.95} & 59.97{\tiny$\pm$1.40} & 74.80{\tiny$\pm$0.58} \\
    LIMODENet ($+$ANN init) & 39.84{\tiny$\pm$1.35} & 59.37{\tiny$\pm$1.26} & 75.68{\tiny$\pm$2.37} \\
    Spiking-MLP             & 50.39{\tiny$\pm$0.69} & 50.12{\tiny$\pm$1.65} & 50.96{\tiny$\pm$0.70} \\
    \bottomrule
  \end{tabular}
  \caption{\textbf{Swapping the classifier on a fixed LIMODENet reconstructor
  improves accuracy.} Reconstruct-then-classify top-1 accuracy (\%) at
  $1$\,dB DVB-S2X across JPEG quality, $3$ seeds, mean$\pm$std. Cross-family
  SEW-ResNet and Spiking-CNN classify LIMODENet's reconstructions better than
  either LIMODENet classifier does at every quality, non-overlapping in $11$ of
  $12$ cross-family-vs-LIMODENet comparisons.}
  \label{supp:tab:clfswap}
\end{table}

\begin{figure}[t]
  \centering
  \includegraphics[width=\linewidth]{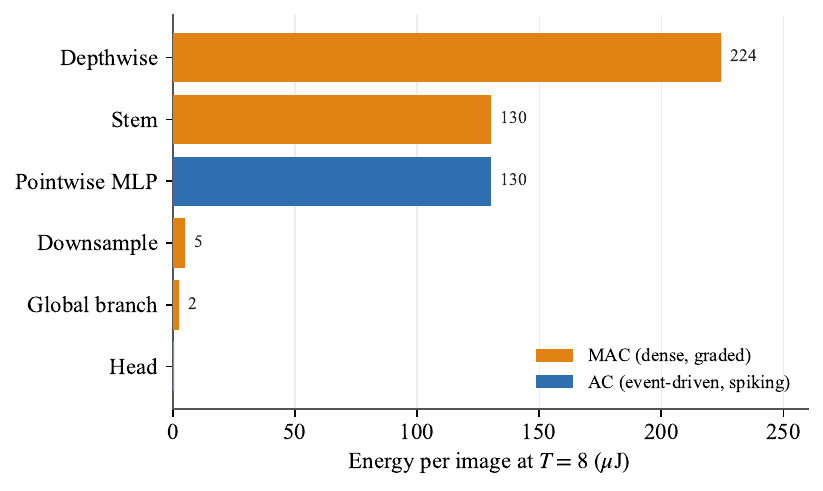}
  \caption{\textbf{Most spiking energy goes to the graded-residual path
  ($T{=}8$).} The depthwise convolutions and the stem consume $67\%$ of
  LIMODENet's spiking-inference energy, because they see a \emph{graded}
  residual stream and therefore run dense MACs; only the pointwise MLP is
  event-driven (AC).}
  \label{fig:per-layer}
\end{figure}

The closest prior work is the ESA study of Kucik and
Meoni~\cite{kucik2021investigating}, which converts a VGG-16 to a spiking network on EuroSAT and estimates energy with the \emph{same} KerasSpiking \texttt{ModelEnergy} tool we adopt. Table~\ref{supp:tab:km} places both under this common tool. At
$20\times$ fewer parameters, LIMODENet improves ANN accuracy by $+3.4$\,pp and
SNN accuracy by $+7$--$8$\,pp; at essentially the same Loihi energy as their
VGG-SNN it is $+7.1$\,pp more accurate.

\begin{table}[t]
  \centering
  \small
  \setlength{\tabcolsep}{4pt}
  \begin{tabular}{lcccc}
    \toprule
    Method & Params & ANN & SNN & Loihi (mJ) \\
    \midrule
    VGG-16~\cite{kucik2021investigating} & ${>}14$M & 95.07 & 85.11 & 4.44 \\
    \textbf{LIMODENet} ($T{=}6$) & \textbf{0.69M} & \textbf{98.47} & 92.18 & 4.70 \\
    \textbf{LIMODENet} ($T{=}8$) & \textbf{0.69M} & \textbf{98.47} & \textbf{93.55} & 6.27 \\
    \bottomrule
  \end{tabular}
  \caption{Apples-to-apples comparison with the ESA on-board SNN baseline
  (energy from the \emph{same} KerasSpiking ModelEnergy tool on Loihi). ANN-on-GPU
  energy is comparable ($79$ vs $70$\,mJ). LIMODENet's SNN top-1 is a $3$-seed mean in the $T{=}8$ row and the single representative run of Table~\ref{supp:tab:snn} in the $T{=}6$ row; test splits differ between the two studies.}
  \label{supp:tab:km}
\end{table}

\section{Full Ablation Grid}
\label{supp:ablation-full}

\cref{sec:ablations} summarizes the ablation findings; this section gives the
full protocol and tables. The comparisons above are whole-model against
whole-model, so they establish that \emph{this} encoder restores better
without showing \emph{which} of its choices is responsible. We therefore ablate the design on the restoration task itself (Table~\ref{supp:tab:ablation}), at $1$\,dB/$q100$, under the protocol used for Table~\ref{tab:modern}: $60$ epochs, val-selected, three
seeds, neutral judge. Every cell varies one axis and stays inside the
\texttt{tiny} iso-parameter band, within $0.03\%$ of the $770{,}051$-parameter
reference.

\begin{table}[t]
  \centering
  \footnotesize
  \setlength{\tabcolsep}{2.5pt}
  \begin{tabular}{lccccc}
    \toprule
    Encoder variant & Params & GMac & PSNR & $\Delta$ & Top-1$^{\ddagger}$ \\
    \midrule
    \textbf{LIMODENet (ours)} & 0.77M & 1.21 & \textbf{37.39}{\tiny$\pm$0.14}
      & --- & \textbf{78.50}{\tiny$\pm$0.83} \\
    \midrule
    \multicolumn{6}{l}{\emph{ODE-specific choices}} \\
    ~~$\alpha{=}1$ (all stages)  & 0.77M & 1.21 & 37.23{\tiny$\pm$0.11} & $-0.16$ & 77.92{\tiny$\pm$0.32} \\
    ~~$\alpha$ learned per block & 0.77M & 1.21 & 37.43{\tiny$\pm$0.15} & $+0.04$ & 78.54{\tiny$\pm$1.08} \\
    ~~Euler at Stage 3          & 0.77M & 1.11 & 37.30{\tiny$\pm$0.21} & $-0.09$ & 78.40{\tiny$\pm$0.90} \\
    \midrule
    \multicolumn{6}{l}{\emph{Architectural choices}} \\
    ~~no global branch          & 0.77M & 1.28 & 36.97{\tiny$\pm$0.13} & $\mathbf{-0.42}$ & 77.28{\tiny$\pm$0.52} \\
    \midrule
    \multicolumn{6}{l}{\emph{Enforcing the injectivity condition}} \\
    ~~spectral-norm, $\alpha L_g{<}1$ & 0.77M & 1.21 & 34.17{\tiny$\pm$0.13} & $\mathbf{-3.22}$ & \textbf{73.54}{\tiny$\pm$1.02} \\
    \bottomrule
  \end{tabular}
  \caption{\textbf{Ablating the design on the restoration task shows that the
  advantage is architectural rather than a consequence of the ODE
  discretization.} Each variant changes one axis, within $0.03\%$ of the
  reference parameter count; protocol matches Table~\ref{tab:modern}. The three ODE cells overlap the reference and are negative results; only the global branch separates, on both metrics. Its replacement is iso-parameter but costs more compute ($1.28$ vs.\ $1.21$\,GMac), so the loss is not bought with a smaller budget; conversely the Euler cell is cheaper ($1.11$), so its null is not bought with extra compute. $^{\ddagger}$Top-1 is
  reconstruct-then-classify accuracy under the neutral Spiking-CNN judge; only
  the spectral-normalized row separates from the reference on this metric.}
  \label{supp:tab:ablation}
\end{table}

Removing the FocalBlock global branch costs $0.42$\,dB with non-overlapping error bars, and it is the only cell besides the spectral-norm variant that separates on
fidelity at all. We tested whether this reflects a general compensation
mechanism for the absence of skip connections (Table~\ref{supp:tab:mechanism}).
The deletion also separates at $1$\,dB/$q10$ ($-0.13$\,dB, disjoint) but not at
$2$\,dB/$q100$ ($-0.08$\,dB, overlapping), so the branch matters at the
harsher of the two channels irrespective of source quality. More decisively,
grafting the identical operator onto the CNN-AE encoder, itself single-path and bottlenecked and therefore exactly the case the compensation reading predicts should benefit, changes nothing ($+0.02$\,dB). A component that is necessary inside one architecture and
inert inside another is not a transferable mechanism.

\begin{table}[t]
  \centering
  \footnotesize
  \setlength{\tabcolsep}{3pt}
  \begin{tabular}{llcccc}
    \toprule
    Test & Condition & variant & reference & $\Delta$ & sep.? \\
    \midrule
    \multicolumn{6}{l}{\emph{Is the global branch necessary?} (remove it from LIMODENet)} \\
    ~~necessity & $1$\,dB/$q100$ & 36.97$\pm$0.13 & 37.39$\pm$0.14 & $-0.42$ & yes \\
    ~~necessity & $1$\,dB/$q10$  & 30.43$\pm$0.06 & 30.56$\pm$0.03 & $-0.13$ & yes \\
    ~~necessity & $2$\,dB/$q100$ & 35.53$\pm$0.20 & 35.61$\pm$0.11 & $-0.08$ & no \\
    \midrule
    \multicolumn{6}{l}{\emph{Is it sufficient?} (add it to the CNN-AE encoder)} \\
    ~~sufficiency & $1$\,dB/$q100$ & 35.66$\pm$0.18 & 35.64$\pm$0.18 & $+0.02$ & no \\
    \bottomrule
  \end{tabular}
  \caption{\textbf{The global branch is necessary in the harshest condition but
  neither condition-general nor transferable.} PSNR in dB, $3$ seeds per cell.
  The sufficiency null is the informative one.}
  \label{supp:tab:mechanism}
\end{table}

Table~\ref{supp:tab:nano} repeats the three-way encoder comparison at half the
\texttt{tiny}-band budget (\texttt{nano}, $0.44$M for the full autoencoder) to test whether the
advantage is an artifact of LIMODENet's $2$--$3\%$ parameter excess. LIMODENet stays first and both margins remain non-overlapping, although the two baselines exchange places on fidelity at this budget. The margin over
the U-Net does not merely survive the reduction, it grows, from $+1.07$\,dB
to $+1.70$\,dB, while the margin over the CNN-AE narrows from $+1.75$ to
$+1.31$\,dB. Skip connections are not free: the decoder must process
concatenated features, so a fixed parameter budget buys less capacity per
path. When parameters are scarce, how information is routed matters more,
not less.

\begin{table}[t]
  \centering
  \footnotesize
  \setlength{\tabcolsep}{2.5pt}
  \begin{tabular}{lcccc}
    \toprule
    Reconstructor & Params & PSNR (dB) & SSIM & Top-1$^{\ddagger}$ \\
    \midrule
    \textbf{LIMODENet-AE} & 0.44M & \textbf{36.65}{\tiny$\pm$0.08}
      & \textbf{0.9619}{\tiny$\pm$0.0003} & \textbf{75.96}{\tiny$\pm$0.31} \\
    CNN-AE                & 0.43M & 35.34{\tiny$\pm$0.12} & 0.9506{\tiny$\pm$0.0016} & 70.85{\tiny$\pm$0.89} \\
    U-Net (skips)         & 0.43M & 34.96{\tiny$\pm$0.48} & 0.9464{\tiny$\pm$0.0053} & 71.85{\tiny$\pm$1.40} \\
    \bottomrule
  \end{tabular}
  \caption{\textbf{At half the parameter budget the encoder advantage grows
  rather than shrinks.} The \texttt{nano} iso-parameter band, same protocol as
  Table~\ref{tab:modern}. $^{\ddagger}$Top-1 uses the neutral
  Spiking-CNN judge. Never compare figures across bands.}
  \label{supp:tab:nano}
\end{table}

\section{Second Corpus}
\label{supp:second-corpus-full}

\looseness=-1 \cref{sec:second-corpus} summarizes the finding; Table~\ref{supp:tab:corpus2}
gives the full comparison, normalized by the achievable gain since the amount
of restoration available differs by an order of magnitude between the two
corruption models (LIMODENet gains $13.59$\,dB over its input under Gaussian
noise but only $1.66$ under pixelate). On that normalized scale the two
structured corruptions (EuroSAT/DVB-S2X and pixelate) agree with each other on
both baselines, across different datasets and corruption processes, while the additive-noise corruption is the outlier: two orders of magnitude against the skip architecture, and a factor of five against the same-class autoencoder.

\begin{table}[t]
  \centering
  \footnotesize
  \setlength{\tabcolsep}{4pt}
  \begin{tabular}{lcccc}
    \toprule
    Reconstructor & Params & PSNR (dB) & $\Delta$ & \% of gain \\
    \midrule
    \multicolumn{5}{l}{\emph{additive noise}: ImageNet-C \texttt{gaussian\_noise},
      input $15.52$\,dB} \\
    ~~\textbf{LIMODENet-AE} & 0.77M & \textbf{29.112$\pm$0.003} & --- & --- \\
    ~~U-Net (skips)         & 0.75M & 29.107$\pm$0.005 & $+0.005$ & $0.03\%$ \\
    ~~CNN-AE                & 0.75M & 28.901$\pm$0.005 & $+0.211$ & $1.55\%$ \\
    \midrule
    \multicolumn{5}{l}{\emph{information-destroying}: \texttt{pixelate}
      $4\times$, input $24.63$\,dB} \\
    ~~\textbf{LIMODENet-AE} & 0.77M & \textbf{26.288$\pm$0.005} & --- & --- \\
    ~~U-Net (skips)         & 0.75M & 26.180$\pm$0.005 & $\mathbf{+0.108}$ & $\mathbf{6.51\%}$ \\
    ~~CNN-AE                & 0.75M & 26.144$\pm$0.007 & $+0.145$ & $8.72\%$ \\
    \midrule
    \multicolumn{5}{l}{\emph{reference}: EuroSAT/DVB-S2X $1$\,dB$/100$q} \\
    ~~U-Net (skips)         & 0.75M & --- & $+1.07$ & $4.90\%$ \\
    ~~CNN-AE                & 0.75M & --- & $+1.75$ & $7.97\%$ \\
    \bottomrule
  \end{tabular}
  \caption{\looseness=-1 \textbf{On a second corpus, the advantage over a skip architecture
  tracks whether the corruption destroys information or merely adds noise.}
  PatternNet at $128$\,px, $3$ seeds, $60$ epochs, val-selected. ``\% of gain''
  normalises the margin by LIMODENet's own PSNR improvement over its input,
  since absolute margins are not comparable when the amount of recoverable
  signal differs by an order of magnitude. Seed deviations are $0.003$--$0.005$\,dB under \emph{both} corruptions, which makes non-overlap trivially easy to achieve, so read the table on effect size rather than on separation.}
  \label{supp:tab:corpus2}
\end{table}

\section{Complexity in Context}
\label{supp:complexity}

\looseness=-1 \cref{sec:limitations} summarizes the honest reading (parameter-efficient,
not compute-efficient); Table~\ref{supp:tab:complexity} gives the full comparison against compact onboard baselines and flight-proven references, with input resolutions noted.

\begin{table}[h]
  \centering
  \footnotesize
  \setlength{\tabcolsep}{3pt}
  \begin{tabular}{lccccc}
    \toprule
    Model & In. & Params & MACs & Lat. & Top-1 \\
    \midrule
    \multicolumn{6}{l}{\emph{LIMODENet family (ours)}}\\
    \quad nano   & $64^2$ & 0.39M & 149M  & 2.23 & 95.65 \\
    \quad tiny   & $64^2$ & 0.69M & 262M  & 2.41 & 96.54 \\
    \quad small  & $64^2$ & 1.55M & 581M  & 2.23 & 96.93 \\
    \quad base   & $64^2$ & 2.73M & 1027M & 2.37 & 97.26 \\
    \quad big    & $64^2$ & 5.80M & 2466M & 3.58 & 97.54 \\
    \quad \textbf{tiny-AE} & $128^2$ & 0.77M & 1.21G & 2.54 & --- \\
    \midrule
    \multicolumn{6}{l}{\emph{Compact onboard baselines}~\cite{le2026onboardvit}}\\
    \quad ResNet-14        & $64^2$  & 0.20M & 118M & --- & 93.88 \\
    \quad EfficientViT-M2  & $224^2$ & 3.96M & 204M & --- & 98.76 \\
    \quad MobileViTV2      & $256^2$ & 4.39M & 1.84G & --- & 99.09 \\
    \midrule
    \multicolumn{6}{l}{\emph{Flight-proven / proposed onboard}}\\
    \quad $\Phi$-Sat-2~\cite{guerrisi2023phisat2} & --- & --- & 81M & --- & --- \\
    \quad CloudScout~\cite{giuffrida2022phisat1} & $192^2$ & 0.16M & 4.67G & --- & --- \\
    \quad RepSViT~\cite{pang2024repsvit} & --- & 3.77M & 600M & --- & --- \\
    \bottomrule
  \end{tabular}
  \caption{\looseness=-1 \textbf{Complexity in context.} Latency is single-image (batch~1) on one RTX~4070 Laptop, FP32, median of 200 timed runs; the tiny-AE value is the measurement of Table~\ref{tab:modern}. Below about $1$\,GMac this GPU is launch-bound at batch~1, which is why the width ladder is nearly flat ($2.2$--$2.4$\,ms) and only \emph{big} separates. Our restoration model
  ($1.21$\,G MACs) sits \emph{between} two networks that have flown, i.e.\
  ${\sim}3.9\times$ below flight-proven demand. Note the honest reading of the
  top block: at $64^2$ LIMODENet-tiny already spends more MACs than EfficientViT-M2~\cite{le2026onboardvit} does at $224^2$, so we claim parameter efficiency and a
  viable absolute budget, \emph{not} compute efficiency. Input resolutions,
  training protocols and accuracy sources differ across blocks; rows are for
  order-of-magnitude context, not a controlled comparison.}
  \label{supp:tab:complexity}
\end{table}

\begin{figure*}[t]
  \centering
  \includegraphics[width=\textwidth]{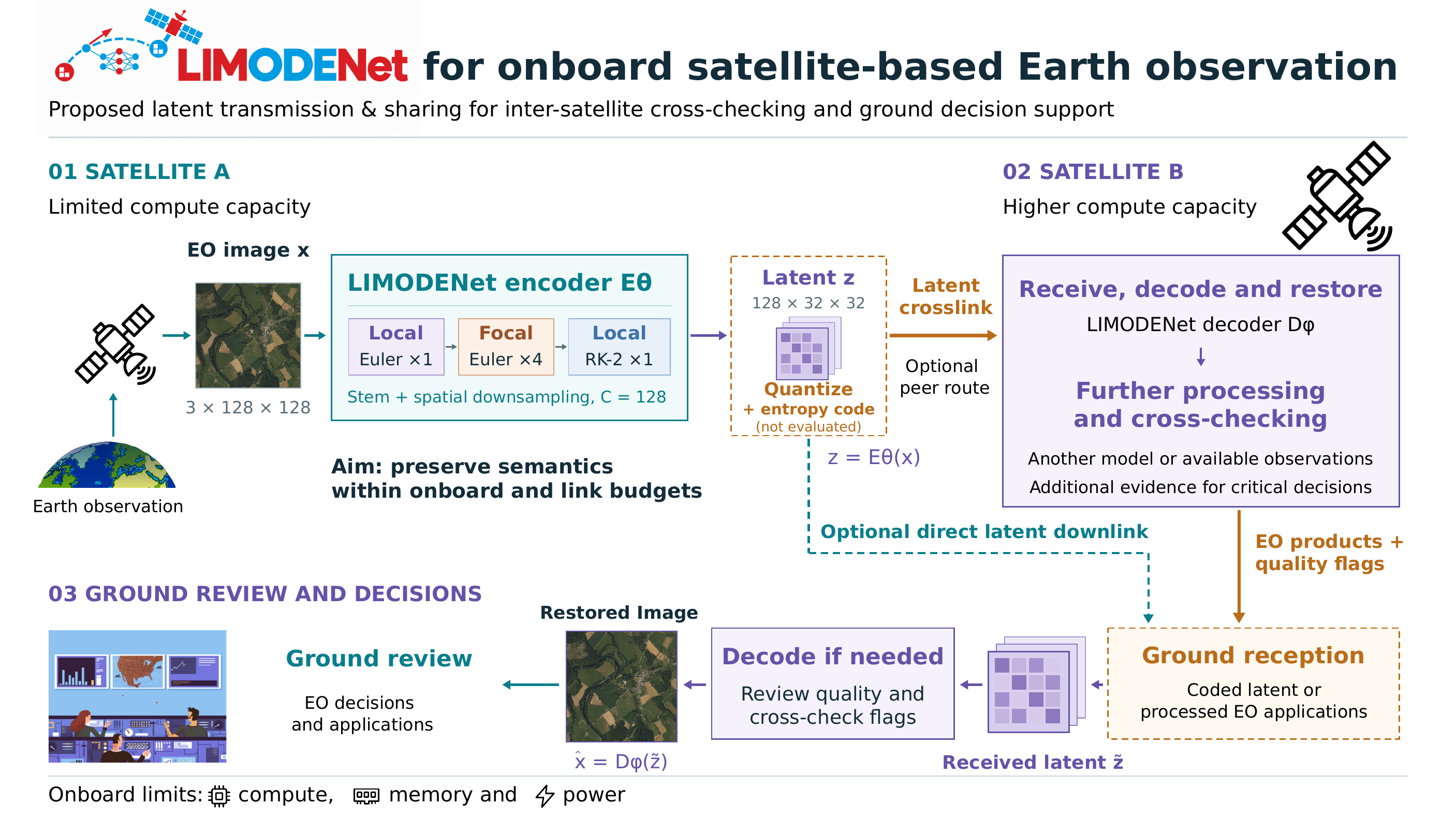}
  \caption{\textbf{The deployment setting our restoration experiments stand in
  for.} Satellite~A encodes an observation with the LIMODENet encoder
  $E_\theta$, quantises and entropy-codes the latent, and either crosslinks it
  to a better-provisioned satellite~B or downlinks it; B decodes with
  $D_\varphi$, cross-checks the restored scene, and forwards EO products with
  quality flags. The figure is a proposal. \emph{Measured in this paper:} only the
  $x\!\to\!E_\theta\!\to\!D_\varphi\!\to\!\hat{x}$ path, with degradation applied
  to pixels before the encoder. \emph{Not measured:} the latent quantiser and
  entropy coder, the crosslink, the effect of channel noise applied to $z$
  rather than to $x$, and any cross-checking logic.}
  \label{supp:fig:onboard}
\end{figure*}

\section{Future Work}
\label{supp:onboard}

\looseness=-1 Every experiment in this paper degrades \emph{pixels} on the ground and measures restoration offline. \cref{supp:fig:onboard} draws the operational setting those experiments stand in for, so the boundary between what we measured and what we propose is explicit. It is a design sketch, not a result: no part of the multi-satellite loop has been run, and nothing in the main paper depends on it.

\paragraph{Why this is the right target, and why it is future work.}
Three properties of the model are only worth their cost in this setting, which is why we name it rather than leave the motivation implicit. First, the constraint that costs us accuracy against unconstrained restorers (\cref{sec:conclusion}) buys spiking conversion with zero blocked operations, a trade that is rational only where power rather than accuracy binds, which is exactly the onboard segment. Second, the asymmetric split in the figure, a cheap encoder on the sensing platform and a decoder wherever compute is available, is the deployment shape that a $0.69$\,M-parameter
encoder with a $1.21$\,G-MAC restoration path
(Table~\ref{supp:tab:complexity}) actually fits. Third, the sub-threshold regime our results live in ($1$--$2$\,dB DVB-S2X, \cref{supp:abovecliff}) is a link-budget regime rather than an image-corruption one: above the decoding threshold every model ties, so the encoder earns its place only where the \emph{channel} is the bottleneck.

\paragraph{What must be measured before any of this can be claimed.}
\looseness=-1 We list the gaps in the order we consider them blocking. \emph{(i) The latent is not currently a compression win.} At $128\!\times\!32\!\times\!32$ against a $3\!\times\!128\!\times\!128$ input, $z$ holds $2.67\times$ as many values as the image it came from; any bit saving must come from the quantiser and entropy coder in the figure, which we have neither implemented nor rate--distortion evaluated. Until then ``semantic compression'' is a hypothesis, and a learned-compression baseline (e.g.\ a hyperprior codec at matched bitrate) would settle it.
\emph{(ii) Degradation in the right place.} Our corpus perturbs pixels; the
figure perturbs the latent in transit. These are different operators, and the
ordering of encoders under one does not transfer to the other by argument.
\emph{(iii) Robustness of $D_\varphi$ to a corrupted $\tilde{z}$.} The contractivity result (\cref{supp:p3}) bounds how perturbations propagate \emph{within} the encoder; it predicts, but does not demonstrate, graceful decoder degradation under latent bit errors.
\emph{(iv) End-to-end onboard measurement.} Our energy figures are a $45$\,nm SynOps proxy (\cref{supp:neuromorphic}); onboard viability needs wall-clock and joules on the target part, and a spiking decoder we do not have, so the neuromorphic result here is scoped to the classifier.
\emph{(v) The cross-checking step is entirely unspecified.} Satellite~B's
``additional evidence'' box names a system function, not an algorithm we have
designed or evaluated.

Together these define the future work rather than a caveat on this one: the present contribution is the constrained-optimal restoration encoder, and \cref{supp:fig:onboard} states where we intend to test whether that constraint pays operationally.

\end{document}